\newcommand{\R}{\mathbb{R}}
\newcommand{\N}{\mathbb{N}}
\newcommand{\E}{\mathbb{E}} 
\newcommand{\argmin}{\mathop{\rm argmin}\limits}
\newcommand{\rcomp}{\mathfrak{R}}
\newcommand{\gcomp}{\mathfrak{G}}
\newcommand{\kl}{\mathop{\rm KL}\nolimits}
\newcommand{\ent}{\mathop{\rm Ent}\nolimits}
\newcommand{\id}{\mathop{\rm Id}}
\newcommand{\ope}{\mathop{\rm op}}
\newcommand{\sgn}{\mathop{\rm sgn}}
\theoremstyle{plain}
\newtheorem{theorem}{Theorem}[section]
\newtheorem{proposition}[theorem]{Proposition}
\newtheorem{lemma}[theorem]{Lemma}
\theoremstyle{definition}
\newtheorem{definition}[theorem]{Definition}
\newtheorem{assumption}[theorem]{Assumption}
\theoremstyle{remark}
\newtheorem{remark}[theorem]{Remark}
\icmltitlerunning{Mean-field Analysis on Two-layer Neural Networks from a Kernel Perspective}
\begin{document}

\twocolumn[
    \icmltitle{Mean-field Analysis on Two-layer Neural Networks from a Kernel Perspective}



    \icmlsetsymbol{equal}{*}

    \begin{icmlauthorlist}
        \icmlauthor{Shokichi Takakura}{todai,riken}
        \icmlauthor{Taiji Suzuki}{todai,riken}
    \end{icmlauthorlist}

    \icmlaffiliation{todai}{Department of Mathematical Informatics, the University of Tokyo, Tokyo, Japan}
    \icmlaffiliation{riken}{Center for Advanced Intelligence Project, RIKEN, Tokyo, Japan}

    \icmlcorrespondingauthor{Shokichi Takakura}{masayoshi361@g.ecc.u-tokyo.ac.jp}

    \icmlkeywords{Machine Learning, ICML}

    \vskip 0.3in
]



\printAffiliationsAndNotice{} 

\begin{abstract}
    In this paper, we study the feature learning ability of two-layer neural networks in the mean-field regime
    through the lens of kernel methods.
    To focus on the dynamics of the kernel induced by the first layer, we utilize a two-timescale limit, where the second layer moves much faster than the first layer.
    In this limit, the learning problem is reduced to the minimization problem over the intrinsic kernel.
    Then, we show the global convergence of the mean-field Langevin dynamics and derive time and particle discretization error.
    We also demonstrate that two-layer neural networks can learn a union of multiple reproducing kernel Hilbert spaces more efficiently than any kernel methods,
    and neural networks aquire data-dependent kernel which aligns with the target function.
    In addition, we develop a label noise procedure, which converges to the global optimum and show that the degrees of freedom appears as an implicit regularization.
\end{abstract}

\section{Introduction}\label{sec:introduction}
Although deep learning has achieved great success in various fields,
the theoretical understanding is still limited.
Several works studied the relation between deep learning and kernel medhods,
which are well-studied in the machine learning community.
A line of work has shown that the training dynamics of infinite-width neural networks can be approximated by linearized dynamics
and the corresponding kernel is called neural tangent kernel (NTK)~\citep{jacot_neural_2018,arora_exact_2019}.
Furthermore, generalizability of neural networks is shown to be characterized by the spectral properties of the NTK~\citep{arora_fine-grained_2019, nitanda_optimal_2020}.
However, the NTK regime is reffered as a lazy regime and
cannot explain the feature learning ability to adapt the intrinsic structure of the data
since neural networks behave as a static kernel machine in the NTK regime.
On the other hand, several works have shown the superiority of the neural networks to the kernel methods in terms of the sample complexity~\citep{barron_universal_1993, yehudai_power_2019, hayakawa_minimax_2020}.
Thus, as shown in several empirical studies~\citep{atanasov_neural_2021, baratin_implicit_2021},
neural networks must acquire the data-dependent kernel by gradient descent.
However, it is challenging to establish a beyond NTK results on the feature learning of neural networks with gradient-based algorithm due to the non-convexity of the optimization landscape.

One promising approach is the mean-field analysis~\citep{mei_mean_2018,hu_mean-field_2020}, which is an infinite-width limit of the neural networks in a different scaling than the NTK regime.
In the mean-field regime, the optimization of 2-layer neural networks, which is non-convex in general,
is reduced to the convex optimization problem over the distribution on the parameters.
Exploiting the convexity of the problem, several works~\citep{nitanda_stochastic_2017, mei_mean_2018,chizat_global_2018} have shown the convergence to the global optimum.
Recently, quantitative optimiztion guarantees has been established for the mean-field Langevin dynamics (MFLD)
which can be regarded as a continuous limit of a noisy gradient descent~\citep{chizat_mean-field_2022, nitanda_convex_2022}.
Moreover, very recently, uniform-in-time results on the particle discretization error have been obtained~\citep{chen_uniform--time_2023,suzuki_uniform--time_2022, suzuki_convergence_2023}.
This allows us to extend results effectively from infinite-width neural networks to finite-width neural networks.

Although the mean-field limit allows us to analyze the feature learning in neural networks,
the connection between mean-field neural networks and its corresponding kernel is still unclear.
To establish the connection to the previous works~\citep{jacot_neural_2018, suzuki_fast_2018,ma_barron_2022} on the relationship between neural networks and kernel methods,
we address the following question:\\
\textit{Is it possible to learn the optimal kernel through the MFLD?
    Futhermore, can this kernel align with the target function by excluding the effect of noise?}

To analyze the dynamics of the kernel inside the neural networks,
we adopt a two-timescale limit~\citep{marion_leveraging_2023}, which separates the dynamics of the first layer and the second layer.
Then, we establish the connection between neural networks training and kernel learning~\citep{bach_multiple_2004}, which involves selecting the optimal kernel for the data.
We provide the global convergence gurantee of the MFLD by showing the convexity of the objective functional
and derive the time and particle discretization error.
Then, we prove that neural networks can aquire \textit{data-dependent} kernel and achieve better sample complexity than any linear estimators including kernel methods for a union of multiple RKHSs.
We also investigate the alignment with the target function and the degrees of freedom of the acquired kernel, which measures the complexity of the kernel,
and develop the label noise procedure which provably reduces the degrees of freedom by just adding the label noise.
Finally, we verify our theoretical findings by numerical experiments.
Our contribution can be summarized as follows:
\begin{itemize}
    \item We prove the convexity of the objective functional with respect to the first layer distribution and the global convergence of the MFLD in two-timescale limit in spite of the complex dependency of the second layer on the distribution of the first layer.
          We also derive the time and particle discretization error of the MFLD.
    \item We show that neural networks can adapt the intrinsic structure of the target function and achieve a better sample complexity than kernel methods for a variant of Baron space~\cite{ma_barron_2022}, which is a union of multiple RKHS.
    \item We study the training dynamics of the kernel induced by the first layer and show that the alignment is increased during the training and achieve $\Omega(1)$ alignment while the kernel alignment at the initialization is $O(1/\sqrt{d})$ for a single-index model, where $d$ is the input dimension.
          We also show that the presence of the intrinsic noise induces a bias towards the large degrees of freedom.
          To alleviate this issue, we propose the label noise procedure to reduce the degrees of freedom and prove the linear convergence to the global optimum.
\end{itemize}
\subsection{Related Works}
\paragraph{Relation between Neural Networks and Kernel Methods}
\citet{suzuki_fast_2018} derived the generalization error bound for deep learning models using the notion of the degrees of freedom in the kernel literature.
\citet{ma_barron_2022} characterized the function class, which two-layer neural networks can approximate, by a union of multiple RKHSs.
However, they did not give any optimization gurantee.
\citet{atanasov_neural_2021} pointed out the connection between training of neural networks and kernel learning,
but their analysis is limited to linear neural networks with whitened data.

\paragraph{Mean-field Analysis}
\citet{chen_generalized_2020} conducted NTK-type analysis using mean-field limit.
However, their analysis relies on the closeness of the first-layer distribution to the initial distribution.
Several works have shown that the superiority of the mean-field neural networks to kernel methods including NTK with global optimization guarantee.
For example,~\citet{suzuki_feature_2023} derived a linear sample complexity with respect to the input dimension $d$ for $k$-sparse parity problems
although they requires exponential time for optimization.
On the other hand, kernel methods require $\Omega(d^k)$ samples.
In addition,~\citet{mahankali_beyond_2023,abbe2022merged} showed the superiority of the mean-field neural networks to the kernel methods for even quartic polymonial.
However, these works fix the second layer during the training to ensure the boundedness of each neuron.
Unlike these studies, we consider the joint training of the first and second layer, and focus on the relationship between neural networks and kernel machines and its implication to the feature learning ability of neural networks.
We remark that~\citet{abbe2022merged} considered two-layer neural networks in the mean-field regime with the learnable second layer and showed the superiority to kernel methods,
but their analysis utilized two-phase training, where the second layer is fixed in phase 1 and the first layer is fixed in phase 2.

\paragraph{Two-timescale Limit}
Two-timescale limit is introduced to the analysis for training of neural networks in \citet{marion_leveraging_2023}.
They provided the global convegence gurantee for simplified neural networks
but their analysis is limited to the single input setting and the relation to the kernel learning was not discussed.
~\citet{bietti_learning_2023} leveraged the two-timescale limit to analyze the training of a non-parametric model for a multi-index model
and show the saddle-to-saddle dynamics.
However, they used non-parametric model for the second layer, which is different from practical settings.
\paragraph{Feature Learning in Two-layer Neural Networks}
Aside from the mean-field analysis,
there exists a line of work which studies the feature learning in (finite-width) two-layer neural networks~\citep{damian_neural_2022, mousavi-hosseini_neural_2022}.
For instance,
~\citet{damian_neural_2022} show that the random feature model with the first layer parameter updated by one-step gradient descent can learn a certain subset of $p$-degree polynomial with $O(d^2)$ samples while kernel methods require $\Omega(d^p)$ samples.
However, most of these works consider two-stage optimization procedure, where the first layer is trained before proceeding to train the second layer.

\paragraph{Implicit Bias of Label Noise}
Implicit bias of label noise has been intensively studied recently~\citep{damian_label_2021, li_what_2021, vivien_label_2022}.
For example,~\citet{li_what_2021} developed a theoretical framework to analyze the implicit bias of noise in small noise and learning rate limit
and prove that the label noise induces bias towards flat minima.
On the other hand, we elucidate the implicit regularization of label noise on the kernel inside the neural networks.

\subsection{Notations}
We write the expectation with respect to $X \sim \mu$ by $\E_{X\sim \mu}[\cdot]$ or $\E_\mu[\cdot]$.
$\kl$ denotes the Kullback-Leibler divergence $\kl(\nu \mid \mu) = \int \log(\frac{\mu(w)}{\nu(w)})\dd \mu(w)$ and $\ent$ denotes the negative entropy $\ent(\mu) = \E_\mu[\log \mu]$.
$N(v, \Sigma)$ denotes the Gaussian distribution with mean $v$ and covariance $\Sigma$ and $\upsilon(S)$ for $S \subset \R^d$ denotes the uniform distribution on $S$.
$\mathcal{P}$ denotes the set of probability measures on $\R^{d'}$ with finite second moment.
For a matrix $A$, $\norm{A}_{\ope}$ denotes the operator norm with respect to $\norm{\cdot}_2$ and $\norm{A}_{\mathrm{F}}$ denotes
the Frobenius norm.
For an operator $A:L^2(\mu) \to L^2(\mu)$, $\norm{A}_{\ope}$ denotes the operator norm with respect to $\norm{\cdot}_{L^2(\mu)}$.
For $l:\R^2 \to \R$, $\partial_1 l$ denotes the partial derivative with respect to the first argument.
For a symmetric matrix $A$, $\lambda_{\min}(A)$ denotes the minimum eigenvalue of $A$.
With a slight abuse of notation, we use $f(X)$ for $f:\R^d \to \R$ and $X = [x^{(1)}, \dots, x^{(n)}]^\top \in \R^{n \times d}$ to denote $[f(x^{(1)}), \dots, f(x^{(n)})]^\top$.

\section{Problem Settings}\label{sec:problem-settings}
\subsection{Mean-field and Two-timescale Limit in Two-layer Neural Networks}
Given input $x \in \R^d$,
let us consider the following two-layer neural network model:
\begin{align*}
    f(x;a, \qty{w_i}_{i=1}^N) = \frac1N\sum_{i=1}^N a_i h(x;w_i),
\end{align*}
where $a_i \in \R,~w_i \in \R^{d'}$ and $h(x;w_i)$ is the activation function with parameter $w_i$.

Mean-field limit of the above model is defined as an integral over neurons:
\begin{align}
    f(x;P) := \int a h(x;w) P(\dd a, \dd w),\label{eq:standard-formulation}
\end{align}
where $P$ is a probability distribution over the parameters of the first and second layers.
However, in this formulation, the first and the second layer are entangled, and thus it is difficult to characterize the feature learning,
which takes place in the first layer.
To alleviate this issue, we consider the following formulation:
\begin{align*}
    f(x;a, \mu) := \int a(w) h(x;w) \dd \mu(w),
\end{align*}
where $a(w) = \int a P(\dd a \mid w)$ and $\mu(w) = \int P(a, w)\dd a$ is the marginal distribution of $w$.
Similar formulation can be found in~\citet{fang_over_2019}.
This formulation explicitly separates the first and the second layer,
which allows us to focus on the feature learning dynamics in the first layer.
More generally, we consider the multi-task learning settings.
That is, $f(x;a, \mu):\R^d \to \R^T$ is defined by
\begin{align*}
    f_i(x;a, \mu) := \int a^{(i)}(w) h(x;w) \dd \mu(w),
\end{align*}
where $a:\R^{d'} \to \R^T$ is the second layer and $a^{(i)}$ is the $i$-th component of $a$.
Note that the first layer $\mu(w)$ is shared among tasks.

Let $\rho$ be the true or empirical distribution of the pair of input and output $(x, y) \in \R^{d + T}$,
and $\rho_X$ be the marginal distribution of $x$.
Then, for $\lambda > 0$, the (regularized) risk is defined by
\begin{align*}
    L(a, \mu) & = \frac{1}{T} \sum_{i=1}^T \E_\rho[l_i(f_i(x;a, \mu), y)], \\
    F(a, \mu) & = L(a, \mu) + \lambda \E_\mu[r(a(w), w)],
\end{align*}
where $l_i$ is the loss function for the $i$-th task, and $r$ is the regularization term.
In this paper, we consider $l^2$-regularization $r(a, w) = \frac{\lambda_a}{2 T}\sum_{i=1}^T {a^{(i)}}^2  + \frac{\lambda_w}{2}\norm{w}_2^2$, where $a = (a^{(i)})_{i=1}^T$ and $\lambda_a, \lambda_w > 0$.
We define $\bar \lambda_a = \lambda \lambda_a$, $\bar \lambda_w = \lambda \lambda_w$, and $\nu = N(0, I / \lambda_w)$ for notational simplicity.

To separate the dynamics of the first and second layer, we introduce the two-timescale limit~\citep{marion_leveraging_2023}, where the second layer moves much faster than the first layer.
In this limit, the first layer $a^{(i)}$ converges instantaneously to the unique optimal solution of $\min_a E_\rho[l(f(x;a, \mu), y)] + \frac{\bar \lambda_a}{2}\norm{a}_{L^2(\mu)}^2$ since $F(a, \mu)$ is strongly convex with respect to $a$,
As shown in the next section, $\norm{a}_{L^2(\mu)}^2$ corresponds to the RKHS norm for the kernel induced by the first layer.
Since the optimal second layer is a functional of the first layer distribution $\mu$, we write $a_\mu$ for the optimal solution.
Then, the learning problem is reduced to the minimization of the limiting functional $G(\mu) = F(a_\mu, \mu)$.
We also define $U(\mu)$ by $U(\mu) := L(a_\mu, \mu) + \frac{\bar \lambda}{2T} \E_\mu[\norm{a_\mu(w)}_2^2]$

Throughout the paper, we assume that $h(x;w_i)$ satisfies Assumption~\ref{assumption:activation}.
For example, $\tanh(u \cdot x + b)(w = (u, b))$ satisfies the assumption
if $\E_{\rho_X}[\norm{x}_2^2], \E_{\rho_X}[\norm{x}_2^4]$ are finite.
\begin{assumption}\label{assumption:activation}
    $h(x;w)$ is twice differentiable with respect to $w$
    and there exist constants $c_R, c_L > 0$ such that $\sup_w \abs{h(x;w)} \leq 1, \E_{\rho_X}[\sup_w \norm{\grad_w h(x;w)}_2^2] \leq c_R^2, \E_{\rho_X}[\sup_w\norm{\grad_w^2 h(x;w)}_{\ope}^2] \leq c_L^2$.
\end{assumption}

\subsection{Kernel Induced by the First Layer}
Let us define the kernel induced by the first layer as follows:
\begin{align*}
    k_\mu(x, x') = \int h(x;w) h(x';w) \dd \mu(w).
\end{align*}
Obviously, this is a symmetric positive definite kernel.
It is well-known that there exists a unique RKHS $\mathcal{H}_\mu$ corresponding the kernel $k_{\mu}$.
Furthermore, the RKHS norm $\norm{f}_{\mathcal{H}_\mu}$ is equal to the minimum of $\norm{a}_{L^2(\mu)}$
over all $a$ such that $f(x;a, \mu) = \int a(w) h(x;w) \dd \mu(w)$~\citep{bach_equivalence_2017}.
Thus, the learning problem of the second layer is equivalent to the following optimization problem:
\begin{align}
    \min_{f_i \in \mathcal{H}_\mu} \sum_{i=1}^T \E_\rho[l_i(f_i(x), y)] + \sum_{i=1}^T \frac{\bar \lambda_a}{2} \norm{f_i}_{\mathcal{H}_\mu}^2.\label{eq:kernel-learning}
\end{align}
Thus, learning first layer is equivalent to kernel learning~\citep{bach_multiple_2004}, which choosing suitable RKHS $\mathcal{H}_\mu$.

\subsection{Mean-field Langevin Dynamics}
We optimize the limiting functional $G(\mu)$ by the mean-field Langevin dynamics (MFLD):
\begin{align*}
    \dd w_t & = -\grad_w \fdv{G(\mu)}{\mu} \dd t + \sqrt{2 \lambda} \dd B_t,
\end{align*}
where $w_0 \sim \mu_0 := \nu$, $(B_t)_{t \geq 0}$ is the $d'$-dimensional Brownian motion, and $\fdv{G(\mu)}{\mu}$ is the first variation of $G(\mu)$.
The Fokker-Planck equation of the above SDE is given by
\begin{align*}
    \partial_t \mu_t = \lambda \Delta \mu_t + \grad \cdot \qty[\mu_t \grad \fdv{G(\mu)}{\mu}],
\end{align*}
where $\mu_t$ is the distribution of $w_t$.
It is known that the MFLD is a Wasserstein gradient flow which minimizes the entropy-regularized functional: $\mathcal{G}(\mu) := G(\mu) + \lambda \ent(\mu)$.

To implement the MFLD, we need time and particle discretization~\citep{chen_uniform--time_2023,suzuki_uniform--time_2022, suzuki_convergence_2023}.
For a set of $N$ particles $W = \{w_i\}_{i=1}^N$, we define the empirical distribution $\mu_W = \frac{1}{N} \sum_{i=1}^N \delta_{w_i}$.
Let $W_k = \{w_i^{(k)}\}_{i=1}^N$ be a set of $N$ particles at the $k$-th iteration, and $\mu_k^{(N)}$ be a distribution of $W_k$ on $\R^{d' \times N}$.
Then, at each step, we update the particles as follows:
\begin{align*}
    w^{(k + 1)}_i = w^{(k)}_i - \eta \grad \fdv{G(\mu_{W_k})}{\mu}\/(w^{(k)}_i) + \sqrt{2 \eta \lambda} \xi^{(k)}_i,
\end{align*}
where $\eta > 0$ is the step size and $\xi^{(k)}_i \sim N(0, I)$ are i.i.d. Gaussian noise.
This can be regarded as a noisy gradient descent.
For more detailed discussion, see~\citet{hu2019mean, suzuki_convergence_2023} for example.

\section{Convergence Analysis}\label{sec:convergence-analysis}
As shown in \citet{nitanda_convex_2022,chizat_mean-field_2022}, the convergence of the MFLD depends on the convexity of the functional and the properties of the proximal Gibbs distribution, which is defined as
$
    p_\mu(w) \propto \exp\qty(-\frac{1}{\lambda} \fdv{G(\mu)}{\mu}\/(w)).
$
In the training of neural networks, the convexity is usually ensured by the linearity of $f$ with respect to distribution as in Eq.~\eqref{eq:standard-formulation}.
On the other hand, in the two-timescale limit, $f(x;\mu):=f(x;a_\mu, \mu)$ is not linear with respect to $\mu$
because the second layers $a_\mu$ depend on $\mu$ in a non-linear way.
However, we can prove that the functional $G(\mu)$ is convex and its first variation can be written in a simple form if $\{l_i\}_{i=1}^T$ are convex.

\begin{theorem}\label{thm:convexity}
    Assume that the losses $\qty{l_i}_{i=1}^T$ are convex.
    Then, the limiting functional $G(\mu)$ is convex.
    That is, it holds that
    \begin{align*}
        G(\mu_1) + \int \fdv{G(\mu_1)}{\mu}\/(w) (\mu_2(w) - \mu_1(w)) \dd w \leq G(\mu_2)
    \end{align*}
    for any $\mu_1, \mu_2 \in \mathcal{P}$.
    In addition, the first variation of $G(\mu)$ is given by
    \begin{align}
        \fdv{G}{\mu}\/(\mu)(w) 
         & = \lambda \qty(-\frac{\lambda_a}{2T} \norm{a_\mu(w)}_2^2 + \frac{\lambda_w}{2} \norm{w}_2^2). \label{eq:variation-G-l2}
    \end{align}
\end{theorem}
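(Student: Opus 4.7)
The obstruction is that $G(\mu)=F(a_\mu,\mu)$ depends on $\mu$ both explicitly and implicitly through the optimal second layer $a_\mu$, so convexity does not follow directly from convexity of the losses $l_i$. My plan is (i) to reparametrize the inner problem by a change of variables that absorbs the implicit dependence and reveals $F$ as a jointly convex functional of two measures, and then (ii) to obtain the first-variation identity through an envelope computation combined with the $L^2(\mu)$ first-order optimality of $a_\mu$.

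\textbf{Convexity.} Substitute $\pi^{(i)}:=a^{(i)}\mu$, a signed measure on $\R^{d'}$, so that $f_i(x;a,\mu)=\int h(x;w)\,\dd\pi^{(i)}(w)$ is \emph{linear} in $\pi^{(i)}$. Rewriting $\tilde F(\pi,\mu):=F(a,\mu)$ in the new variables produces three pieces. The loss term $\frac{1}{T}\sum_i \E_\rho[l_i(\int h(x;w)\,\dd\pi^{(i)},y)]$ is convex in $\pi$ as a convex $l_i$ composed with a linear map. The second-layer regularizer $\frac{\bar\lambda_a}{2T}\sum_i \int (\dd\pi^{(i)}/\dd\mu)^2\dd\mu$ is an integral perspective functional built from the jointly convex map $(p,m)\mapsto p^2/m$ on $\R\times\R_{>0}$ (with value $+\infty$ when $\pi^{(i)}\not\ll\mu$), hence jointly convex in $(\pi^{(i)},\mu)$. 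Finally, $\frac{\bar\lambda_w}{2}\int\norm{w}_2^2\,\dd\mu$ is linear in $\mu$. Consequently, $\tilde F$ is jointly convex in $(\pi,\mu)$ and $G(\mu)=\min_\pi \tilde F(\pi,\mu)$ is convex in $\mu$ as a partial infimum of a jointly convex functional. Once the first variation is in hand, the displayed first-order inequality is the standard gradient characterization of convex functionals.

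\textbf{First variation.} Since $F$ is $L^2(\mu)$-strongly convex in $a$ with unique minimizer $a_\mu$, the envelope theorem gives $\fdv{G}{\mu}(w)=\fdv{F}{\mu}(w)\big|_{a=a_\mu}$, that is,
\[
\fdv{G}{\mu}(w)=\frac{1}{T}\sum_i \E_\rho\!\left[\partial_1 l_i(f_i,y)\,a_\mu^{(i)}(w)\,h(x;w)\right]+\lambda\, r(a_\mu(w),w),
\]
where $f_i:=f_i(x;a_\mu,\mu)$. The $L^2(\mu)$-stationarity of $a_\mu^{(i)}$ reads
\[
\E_\rho\!\left[\partial_1 l_i(f_i,y)\,h(x;w)\right]=-\bar\lambda_a\,a_\mu^{(i)}(w)\quad\mu\text{-a.e.},
\]
and the right-hand side extends to all $w\in\R^{d'}$ via the continuous left-hand side. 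Substituting turns the first sum into $-\frac{\bar\lambda_a}{T}\norm{a_\mu(w)}_2^2$; combining with the $\frac{\bar\lambda_a}{2T}\norm{a_\mu(w)}_2^2$ coming from $\lambda r$ cancels exactly half of it and yields Eq.~\eqref{eq:variation-G-l2}.

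\textbf{Main obstacle.} The main technical care lies in justifying the envelope step rigorously in the infinite-dimensional $L^2(\mu)$ setting: one must control how $a_\mu$ varies with $\mu$ (using strong convexity of $F$ in $a$ and the boundedness/regularity of $h$ from Assumption~\ref{assumption:activation}) and verify that the only leading-order contribution to $\tfrac{d}{dt}G(\mu+t(\nu-\mu))\big|_{t=0}$ is the explicit one. The change-of-variables reformulation additionally sidesteps the awkwardness that $a_{\mu_1}$ and $a_{\mu_2}$ nominally live on different supports, which would otherwise complicate a direct interpolation-based convexity proof.
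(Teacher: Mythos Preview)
Your proof is correct, and the first-variation computation via the envelope theorem and the stationarity condition is essentially identical to the paper's. The convexity argument, however, takes a genuinely different route. The paper exploits the same underlying observation---that the loss depends on $(a,\mu)$ only through the product $a\mu$---but does so by hand: it fixes $\mu_1$ and applies convexity of $L(\cdot,\mu_1)$ in $a$ at the two points $a_{\mu_1}$ and $\tfrac{\mu_2}{\mu_1}a_{\mu_2}$, using the identity $L(\tfrac{\mu_2}{\mu_1}a_{\mu_2},\mu_1)=L(a_{\mu_2},\mu_2)$, and then verifies a pointwise inequality $\rho_2(w)\ge\rho_1(w)$ that reduces to convexity of $r(\cdot,w)$ in $a$. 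Your reparametrization $\pi^{(i)}=a^{(i)}\mu$ makes this structure transparent: the loss becomes convex-in-$\pi$ by composition, $\E_\mu[(a^{(i)})^2]$ becomes the perspective functional $\int(\dd\pi^{(i)}/\dd\mu)^2\dd\mu$, and convexity of $G$ drops out of the ``partial infimum of a jointly convex function'' principle. Your argument is cleaner and immediately shows why the result extends to any $r$ convex in $a$ (since $m\,r(p/m,w)$ is the perspective); the paper's argument is more explicit and delivers the first-order inequality directly without a separate appeal to the gradient characterization of convexity.
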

See Appendix~\ref{sec:proof-convexity} for the proof.
We remark that the convexity holds for general regularization term $r$ which is strongly convex with respect to $a$.

The convergence rate of the MFLD depends on the constant in the log-Sobolev inequality for the proximal distribution $p_\mu$.
\begin{definition}
    We say a probability distribution $\mu$ satisfies log-Sobolev inequality with constant $\alpha > 0$
    if for all smooth function $g:\R^d \to \R$ with $\E_\mu[g^2] < \infty$,
    \begin{align*}
        \E_\mu[g^2 \log g^2] - \E_\mu[g^2]\log \E_\mu[g^2] \leq \frac{2}{\alpha} \E_\mu[\norm{\nabla g}^2].
    \end{align*}
\end{definition}

To derive the LSI constant  $\alpha$, we assume either of the following conditions on the loss functions:
\begin{assumption}\label{assumption:l2-loss}
    $l_i$ is squared loss. That is, $l_i(z, y) = \frac{1}{2}(z - y)^2$.
    In addition, $\abs{y_i} \leq c_l$ a.s. for some constant $c_l > 0$.
\end{assumption}
\begin{assumption}\label{assumption:convex-loss}
    $l_i$ is convex and twice differentiable with respect to the first argument and $\abs{\partial_1 l_i(z, y)} \leq c_l, \abs{\partial_1^2 l_i(z, y)} \leq 1$ for any $z, y \in \R$.
\end{assumption}
The latter assumption is satisfied by several loss functions such as logistic loss.
Then, using the formula for the first variation, we can derive the LSI constant applying the Holley-Stroock argument~\cite{holley_logarithmic_1987} for bounded perturbation.
\begin{lemma}\label{lem:lsi}
    Assume that each $l_i$ satisfies Assumption~\ref{assumption:l2-loss} or~\ref{assumption:convex-loss}.
    Then, the proximal distribution $p_\mu$ for any $\mu \in \mathcal{P}$ satisfies LSI with constant $\alpha = \lambda_w \exp(-2\frac{\lambda_a c_l^2}{\bar \lambda_a^2})$.
\end{lemma}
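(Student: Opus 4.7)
The plan is to exhibit $p_\mu$ as a bounded perturbation of the Gaussian reference measure $\nu=N(0,I/\lambda_w)$ and then invoke the Holley--Stroock perturbation lemma for the log-Sobolev inequality. Substituting the explicit formula for $\fdv{G}{\mu}$ from Theorem~\ref{thm:convexity} into the definition of $p_\mu$ yields
\begin{align*}
    p_\mu(w) \;\propto\; \exp\!\left(\frac{\lambda_a}{2T}\norm{a_\mu(w)}_2^2 - \frac{\lambda_w}{2}\norm{w}_2^2\right) \;=\; Z^{-1}\, e^{-V(w)}\, \nu(w),
\end{align*}
with $V(w):=-\frac{\lambda_a}{2T}\norm{a_\mu(w)}_2^2$. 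Since $\nu$ is Gaussian with covariance $I/\lambda_w$, the Bakry--Emery criterion gives LSI for $\nu$ with constant $\lambda_w$, and the whole task reduces to an a priori bound on the oscillation of $V$ that is uniform in $\mu$.

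To bound $\norm{a_\mu(w)}_2^2$ pointwise, I would use the first-order optimality condition for the (strongly convex) inner minimization defining $a_\mu$: taking a variation of $a^{(i)}$ in $L^2(\mu)$ gives, for $\mu$-a.e.\ $w$,
\begin{align*}
    a_\mu^{(i)}(w) \;=\; -\frac{1}{\bar\lambda_a}\,\E_\rho\!\left[\partial_1 l_i\!\left(f_i(x;a_\mu,\mu),y_i\right)\, h(x;w)\right].
\end{align*}
Since the right-hand side is a well-defined function of $w$ everywhere, I would adopt it as the canonical extension of $a_\mu(w)$ beyond the support of $\mu$. Combined with $\abs{h(x;w)}\le 1$ and Cauchy--Schwarz, this gives the $w$-free estimate $\norm{a_\mu(w)}_2^2 \le \frac{1}{\bar\lambda_a^2}\sum_{i=1}^T\E_\rho[\partial_1 l_i(f_i,y_i)^2]$.

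Under Assumption~\ref{assumption:convex-loss} the right-hand side is immediately bounded by $Tc_l^2/\bar\lambda_a^2$ because $\abs{\partial_1 l_i}\le c_l$. Under Assumption~\ref{assumption:l2-loss} the derivative $\partial_1 l_i = f_i - y_i$ is not bounded a priori, so I would instead compare the value of the inner objective at $a_\mu$ with its value at $a\equiv 0$: the latter equals $\frac{1}{2T}\sum_i \E_\rho[y_i^2]\le c_l^2/2$, hence $\sum_i \E_\rho[(f_i-y_i)^2]\le T c_l^2$, giving the same pointwise bound $\norm{a_\mu(w)}_2^2 \le T c_l^2/\bar\lambda_a^2$. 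Consequently $V$ is non-positive and bounded below by $-\frac{\lambda_a c_l^2}{2\bar\lambda_a^2}$ uniformly in $w$ and $\mu$.

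Finally I would apply the Holley--Stroock lemma to the pair $(\nu, V)$ to transfer the LSI constant $\lambda_w$ of $\nu$ to $p_\mu$ at the cost of an exponential factor in $\mathrm{osc}(V)$, which yields the stated constant once numerical factors are absorbed into the exponent. The main obstacle is the pointwise bound on $a_\mu(w)$ in the squared-loss case, where $\partial_1 l_i$ is unbounded; the value-comparison with $a\equiv 0$, combined with the uniform bound $\abs{h}\le 1$, is the crucial device that promotes an $L^2(\rho)$ bound on the residual to a uniform bound on $a_\mu$.
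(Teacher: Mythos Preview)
Your proposal is correct and follows essentially the same route as the paper: express $p_\mu$ as a bounded perturbation of $\nu$ via the formula for $\fdv{G}{\mu}$, bound $\norm{a_\mu(w)}_2^2$ pointwise through the optimality condition for $a_\mu$, and apply Holley--Stroock on top of the Bakry--Emery LSI for $\nu$. The only difference is in the squared-loss case, where the paper bounds each coordinate directly via the resolvent representation $a_\mu^{(i)} = T^*(TT^*+\bar\lambda_a\id)^{-1}f^\circ_i$ together with $\norm{(TT^*+\bar\lambda_a\id)^{-1}}_{\ope}\le 1/\bar\lambda_a$, while your energy comparison with $a\equiv 0$ yields the same bound on $\norm{a_\mu(w)}_2^2$ without invoking the explicit solution.
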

The proof can be found in Appendix~\ref{sec:proof-lsi}.
Unfortunately, the LSI constant is extremely small if $\lambda$ is small,
which reads to exponential computational complexity with respect to $1/\lambda$.
We remark that similar dependency also appears in some previous works~\cite{chizat_mean-field_2022, nitanda_convex_2022, suzuki_convergence_2023},
which consider models with the fixed second layer.

\begin{remark}\label{rem:two-timescale}
    In the standard formulation~\eqref{eq:standard-formulation},
    the first variation of $F(P) = \E_\rho[l(f(x;P), y)] + \lambda \E_P[r(a, w)]$ is given by $\fdv{F}{P}(w) = \E_\rho[\partial_1 l(f(x;P), y) ah(x;w)] + \lambda r(a, w)$.
    Then, $\E_\rho[\partial_1 l(f(x;P), y)ah(x;w)]$ is not bounded nor Lipschitz continuous with respect to $(a, w)$,
    even if $\partial_1 l$ and $h$ is bounded.
    Therefore, without two-timescale limit, it is difficult to obtain a LSI constant even for the single output setting.
    Indeed, previous works fix the second layer or clip the output using some bounded function~\cite{chizat_mean-field_2022,nitanda_convex_2022, suzuki_convergence_2023} to ensure the boundedness or Lipschitz continuity of the output of neurons.
\end{remark}

Combining above results, we can show the linear convergence of the MFLD.
\begin{theorem}\label{thm:convergence}
    Let $\mu^*$ be the minimizer of $\mathcal{G}(\mu)$ and $\mathcal{G}^N(\mu^{(N)}_k) = N\E_{W \sim \mu^{(N)}_k}[G(\mu_{W})] + \lambda \ent(\mu^{(N)}_k)$.
    Then, for the constant $\alpha$ in Lemma~\ref{lem:lsi}, $\mu_t$ satisfies
    \begin{align*}
        \mathcal{G}(\mu_t) - \mathcal{G}(\mu^*) \leq \exp(-2\alpha \lambda t)(\mathcal{G}(\mu_0) - \mathcal{G}(\mu^*))
    \end{align*}
    for any $0\leq t$.
    Furthermore, for any $\eta < \min\qty{1/4, 1/(4\lambda\alpha)}$, $\mu^{(N)}_k$ satisfies
    \begin{align*}
         & \frac{1}{N} \mathcal{G}^N(\mu^{(N)}_k) - \mathcal{G}(\mu^*)                                                           \\
         & \quad \leq \exp(-\alpha \lambda \eta k / 2)(\mathcal{G}^N(\mu^{(N)}_0) - \mathcal{G}(\mu^*)) + \bar \delta_{\eta, N},
    \end{align*}
    where $\bar \delta_{\eta, N} = O(\frac{1}{N} + \eta)$.
\end{theorem}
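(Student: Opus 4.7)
The plan is to invoke the standard entropy-sandwich machinery for the MFLD (following \citet{nitanda_convex_2022, chizat_mean-field_2022} in continuous time and \citet{suzuki_convergence_2023, chen_uniform--time_2023} for the discretization), using the three ingredients this paper has already gathered: convexity of $G$ (Theorem~\ref{thm:convexity}), the explicit formula \eqref{eq:variation-G-l2} for $\fdv{G}{\mu}$, and the uniform LSI constant $\alpha$ for the proximal Gibbs distribution $p_\mu$ (Lemma~\ref{lem:lsi}). The key preliminary step is to extract from \eqref{eq:variation-G-l2} enough regularity of the drift $-\nabla_w \fdv{G}{\mu}$ to plug into these generic templates.

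For the continuous-time bound, I would first use convexity of $G$ (together with $\ent$) to obtain the standard entropy sandwich $\mathcal{G}(\mu_t) - \mathcal{G}(\mu^*) \leq \lambda\, \kl(\mu_t \mid p_{\mu_t})$, by writing the optimality condition as $\mu^* \propto p_{\mu^*}$ and invoking convexity of $\mathcal{G}$ along a linear interpolation as in \citet{nitanda_convex_2022}. Next, a direct computation of $\tfrac{d}{dt}\mathcal{G}(\mu_t)$ along the Fokker--Planck equation gives the dissipation identity
\begin{equation*}
\frac{d}{dt}\mathcal{G}(\mu_t) = -\lambda \E_{\mu_t}\!\left[\bigl\|\nabla \log(\mu_t/p_{\mu_t})\bigr\|^2\right].
\end{equation*}
Combining this with the LSI for $p_{\mu_t}$ from Lemma~\ref{lem:lsi} yields $\tfrac{d}{dt}\mathcal{G}(\mu_t) \leq -2\alpha\lambda\,\kl(\mu_t \mid p_{\mu_t}) \leq -2\alpha\lambda\bigl(\mathcal{G}(\mu_t)-\mathcal{G}(\mu^*)\bigr)$, and Grönwall gives the claimed exponential rate.

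For the discrete-time and finite-particle bound, I would lift the problem to $\R^{d'N}$ and apply the uniform-in-time propagation-of-chaos template of \citet{suzuki_convergence_2023}. That template takes as input (a) convexity of $G$, (b) a uniform LSI for $p_\mu$, and (c) Lipschitz continuity and boundedness of $w \mapsto \nabla_w \fdv{G}{\mu}(w)$ together with $L^2$-Lipschitzness in $\mu$; under these it produces exactly a bound of the form $\exp(-\alpha\lambda\eta k/2)(\mathcal{G}^N(\mu_0^{(N)}) - \mathcal{G}(\mu^*)) + O(\eta + 1/N)$ for any $\eta < \min\{1/4, 1/(4\lambda\alpha)\}$. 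So the task reduces to verifying (c) from \eqref{eq:variation-G-l2}.

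The main obstacle, and where the bulk of the technical work will go, is controlling $a_\mu(w)$ and $\nabla_w a_\mu(w)$ uniformly in $\mu$, since $a_\mu$ is defined implicitly through the inner minimization. My plan is to use the first-order optimality condition for $a_\mu$, which gives the fixed-point relation $\bar\lambda_a a_\mu^{(i)}(w) = -\E_\rho[\partial_1 l_i(f_i(x;a_\mu,\mu), y)\, h(x;w)]$. Under Assumption~\ref{assumption:activation} (bounded $h$, bounded first and second derivatives in $w$) together with Assumption~\ref{assumption:l2-loss} or \ref{assumption:convex-loss} (bounded $\partial_1 l_i$, which in the squared-loss case follows from $|y|\le c_l$ and an a priori $L^\infty$ bound on $f_i(\cdot;a_\mu,\mu)$ obtained by plugging $a\equiv 0$ into the variational problem), this immediately gives $|a_\mu^{(i)}(w)| \lesssim c_l/\bar\lambda_a$, $\|\nabla_w a_\mu^{(i)}(w)\| \lesssim c_R c_l/\bar\lambda_a$, and an analogous Hessian bound via $c_L$. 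The Lipschitzness of $a_\mu$ in $\mu$ (in a suitable Wasserstein sense) will follow by differentiating the fixed-point equation in $\mu$ and solving the resulting linear equation using strong convexity in $a$. Substituting these bounds into the gradient of \eqref{eq:variation-G-l2}, namely $\nabla_w \fdv{G}{\mu}(w) = -\tfrac{\lambda\lambda_a}{T}\sum_i a_\mu^{(i)}(w)\nabla_w a_\mu^{(i)}(w) + \lambda\lambda_w w$, yields the required smoothness (up to the standard quadratic growth in $w$ absorbed by the $\lambda_w w$ term, handled exactly as in \citet{suzuki_convergence_2023}), and the two desired bounds then drop out of the cited templates.
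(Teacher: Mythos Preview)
Your proposal is correct and follows essentially the same approach as the paper: the continuous-time bound is obtained by combining convexity (Theorem~\ref{thm:convexity}) with the uniform LSI (Lemma~\ref{lem:lsi}) via the entropy-sandwich argument of \citet{nitanda_convex_2022}, and the discrete bound is reduced to the template of \citet{suzuki_convergence_2023} after verifying the required smoothness of $\nabla_w \fdv{G}{\mu}$ from the optimality condition on $a_\mu$. The only minor difference is that for the squared-loss case the paper bounds $a_\mu^{(i)}$ directly via the closed-form expression $a_\mu^{(i)} = T^*(TT^* + \bar\lambda_a \id)^{-1}\bar y_i$ and operator-norm estimates, rather than through an a priori bound on $f_i$, but your route works just as well and yields the same qualitative constants.
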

See Appendix~\ref{sec:proof-convergence} for the proof and the concrete expression of discretization error $\bar \delta_{\eta, N}$.
The proof is based on the framework in~\citet{suzuki_convergence_2023}.
To obtain discretization error, we prove some additional conditions on the smoothness of the objective functional via the optimality condition on $a_\mu$.
This is far from trivial due to the non-linear dependency of $a_\mu$ on $\mu$.
Note that we cannot apply the arguments in~\citet{chizat_mean-field_2022,nitanda_convex_2022,suzuki_convergence_2023} without two-timescale limit
since they assume the boundedness or Lipschitz continuity of each neuron.
See also Remark~\ref{rem:two-timescale} for detailed discussion.

Furthermore, the convergence of the loss function in the discretized setting can be transferred to the convergence of the function value of the neural networks
as shown in the following proposition.
\begin{proposition}\label{prop:convergence-function}
    Assume that $h(x;w)$ is $c_R$-Lipschitz continuous with respect to $w$
    for any $x \in S$, where $S$ is some subset of $\R^d$.
    Let $\Delta = \frac{c_R^2}{\lambda \alpha} (\mathcal{G}^N(\mu_k^{(N)}) - N\mathcal{G}(\mu^*)) + \frac{c_R^2\mathcal{G}(\mu_0)}{\bar \lambda_w}$.
    Then, we have
    \begin{align*}
        \E_{W_k \sim \mu_k^{(N)}}\qty[\sup_{\substack{(x, y) \\ \in S \times S}}\abs{k_{\mu_{W_k}}(x, y) - k_{\mu^*}(x, y)}^2] & = O\qty(\frac{\Delta}{N}).
    \end{align*}
    In addition, if $l_i$ satisfies Assumption~\ref{assumption:l2-loss}, then we have
    \begin{align*}
         & \E_{W_k \sim \mu_k^{(N)}}\qty[\sup_{x \in S}(f_i(x;\mu_{W_k}) - f_i(x;\mu^*))^2]            \\
         & \quad = O\qty(\frac{c_0^2(\bar \lambda_a^2 + 1)}{\bar \lambda_a^4} \cdot \frac{\Delta}{N}).
    \end{align*}
\end{proposition}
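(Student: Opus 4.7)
The plan is to decompose the proof into three stages: (i) reduce the uniform kernel deviation on $S \times S$ to a Wasserstein distance between $\mu_{W_k}$ and $\mu^*$; (ii) bound this Wasserstein distance in expectation via the LSI/entropy machinery already developed for Theorem~\ref{thm:convergence}; (iii) transfer the kernel bound to a function-value bound by stability of kernel ridge regression.

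For stage (i), the assumed $c_R$-Lipschitz continuity of $h(x;\cdot)$ on $S$ together with $\sup_w |h(x;w)| \le 1$ from Assumption~\ref{assumption:activation} imply that $w \mapsto h(x;w)h(y;w)$ is $2 c_R$-Lipschitz and bounded by $1$ uniformly in $(x,y) \in S \times S$. Using any $W_2$-coupling and Cauchy--Schwarz,
\[
\sup_{(x,y) \in S \times S}\abs{k_{\mu_1}(x,y) - k_{\mu_2}(x,y)}^2 \;\le\; 4 c_R^2\, W_2^2(\mu_1,\mu_2),
\]
so it suffices to bound $\E_{W_k \sim \mu_k^{(N)}}[W_2^2(\mu_{W_k},\mu^*)]$.

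For stage (ii), Lemma~\ref{lem:lsi} guarantees that $\mu^* = p_{\mu^*}$ satisfies LSI with constant $\alpha$, and tensorization yields an LSI on $\R^{d' N}$ for $(\mu^*)^{\otimes N}$ with the same constant. Talagrand's $T_2$ inequality together with the entropy-sandwich estimate inherent to the framework underlying Theorem~\ref{thm:convergence}, namely $\lambda\,\kl(\mu_k^{(N)} \| (\mu^*)^{\otimes N}) \lesssim \mathcal{G}^N(\mu_k^{(N)}) - N\mathcal{G}(\mu^*)$, gives a joint Wasserstein bound on $\R^{d' N}$. Exchangeability of $\mu_k^{(N)}$ combined with the elementary bound $W_2^2(\mu_W, \mu_{W^*}) \le \frac{1}{N}\sum_i \|w_i - w_i^*\|^2$ (via the identity coupling on empirical measures) converts this into the per-particle bound $\E[W_2^2(\mu_{W_k},\mu^*)] \lesssim \frac{1}{\lambda\alpha N}(\mathcal{G}^N(\mu_k^{(N)}) - N\mathcal{G}(\mu^*)) + \frac{1}{N}\E_{\mu^*}[\|w\|^2]$. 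The residual second moment is controlled by $\E_{\mu^*}[\|w\|^2] \le 2\mathcal{G}(\mu_0)/\bar\lambda_w$, since $\mathcal{G}(\mu^*) \le \mathcal{G}(\mu_0)$ and the regularizer contributes a $\bar\lambda_w\E_\mu[\|w\|^2]/2$ term to $\mathcal{G}$; together these recover $\Delta/N$.

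For stage (iii), under squared loss the first-order condition for $a_\mu$ gives (via the representer theorem) $f_i(x;\mu) = k_\mu(x,X)^\top (K_\mu + n\bar\lambda_a I)^{-1} Y_i$ with Gram matrix $K_\mu$. Using $\|(K_\mu+n\bar\lambda_a I)^{-1}\|_{\ope} \le 1/(n\bar\lambda_a)$, $|y| \le c_l$ from Assumption~\ref{assumption:l2-loss}, $|k_\mu|\le 1$, and the resolvent identity $A_1^{-1}-A_2^{-1} = A_1^{-1}(A_2-A_1)A_2^{-1}$ yields the pointwise stability estimate
\[
|f_i(x;\mu_1) - f_i(x;\mu_2)|^2 \;\lesssim\; \frac{c_l^2(\bar\lambda_a^2+1)}{\bar\lambda_a^4}\sup_{(x,y)\in S\times S}|k_{\mu_1}(x,y) - k_{\mu_2}(x,y)|^2,
\]
which, combined with stages (i)--(ii), proves the second claim. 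The main obstacle is stage (ii): naively bounding $W_2(\mu_{W_k},\mu^*)$ through the empirical-measure approximation would incur the dimension-dependent $N^{-2/d'}$ rate, so the dimension-free $1/N$ rate is obtained only by working jointly on $\R^{d' N}$ with the tensorized LSI and exchangeability, as in the uniform-in-time propagation-of-chaos analysis from~\citet{suzuki_convergence_2023} that underlies Theorem~\ref{thm:convergence}.
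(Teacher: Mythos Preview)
Your stage (ii) contains the central gap. The tensorized LSI plus Talagrand on $\R^{d'N}$ correctly yields $W_2^2\bigl(\mu_k^{(N)}, (\mu^*)^{\otimes N}\bigr) \lesssim \frac{1}{\lambda\alpha}\bigl(\mathcal{G}^N(\mu_k^{(N)}) - N\mathcal{G}(\mu^*)\bigr)$, and the identity coupling of empirical measures then gives $\E\bigl[W_2^2(\mu_{W_k}, \mu_{W^*})\bigr] \lesssim \frac{1}{\lambda\alpha N}\bigl(\mathcal{G}^N(\mu_k^{(N)}) - N\mathcal{G}(\mu^*)\bigr)$ for $W^* \sim (\mu^*)^{\otimes N}$. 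But this is a distance between two \emph{empirical} measures. To pass to $\E\bigl[W_2^2(\mu_{W_k}, \mu^*)\bigr]$ you still owe the residual $\E\bigl[W_2^2(\mu_{W^*}, \mu^*)\bigr]$, and that is precisely the empirical-to-population Wasserstein distance whose rate is $N^{-2/d'}$ in general. The claim that this residual equals $\frac{1}{N}\E_{\mu^*}[\|w\|^2]$ has no justification; the obstacle you flag in your last sentence is therefore not actually circumvented by the joint-space argument---it only handles the $\mu_{W_k}$-to-$\mu_{W^*}$ piece.

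The paper avoids the problem by \emph{not} reducing the second piece to a Wasserstein distance at all. It splits $k_{\mu_{W_k}} - k_{\mu^*} = (k_{\mu_{W_k}} - k_{\mu_{W^*}}) + (k_{\mu_{W^*}} - k_{\mu^*})$; the first bracket is controlled through the joint Wasserstein exactly as you describe, while the second is treated directly as the empirical process $\frac{1}{N}\sum_i k(x,y;w_i^*) - \E_{\mu^*}[k(x,y;w)]$ indexed by $(x,y) \in S\times S$. Symmetrization plus a bounded-differences tail bound reduces $\sup_{(x,y)}$ of this process to a Gaussian complexity, and the Lipschitz contraction inequality (using the $c_R$-Lipschitz assumption) gives the dimension-free $O(1/N)$ rate for its square; \emph{this} is where the moment bound $\E_{\mu^*}[\|w\|^2] \le 2\mathcal{G}(\mu_0)/\bar\lambda_w$ actually enters. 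Your stage (i) inequality is correct but too coarse for this half: the kernel evaluation is a single bounded Lipschitz functional and concentrates much faster than the full Wasserstein metric does. Your stage (iii) is essentially the paper's argument and is fine.
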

See Appendix~\ref{sec:proof-convergence-function} for the proof.
Note that this result is not covered by Lemma 2 in~\citet{suzuki_feature_2023} since their analysis relies on the Lipschitz continuity of each neuron.
In the following sections, we consider infinite-width neural networks trained by the MFLD for simplicity,
but the results can be transferred  via this proposition to the finite-width neural networks trained by the discretized MFLD.

\section{Generalization Error for Barron Spaces and Superiority to Kernel Methods}\label{sec:generalization-error}
In this section, we provide the separation of the generalization error between neural networks and kernel methods which cannot adapt the intrinsic structure of the target function.

Let $\mathcal{D} = \qty{(x^{(i)}, y^{(i)})}_{i=1}^n$ be training data sampled from the true distribution in an i.i.d. manner.
We define $X = \qty[x^{(1)}, \dots, x^{(n)}]^\top \in \R^{n \times d}$, $Y_i = [y^{(1)}_i, \dots, y^{(n)}_i]^\top \in \R^n$ and $\hat \Sigma_\mu = \E_\mu[h(X;w)h(X;w)^\top]$.
In the following, we write the true distribution by $\rho$ and the empirical distribution by $\hat \rho$.
In addition, to distinguish the empirical risk and population risk, we write $U_\rho(\mu), \mathcal{G}_\rho(\mu)$ for the (regularized) population risk and $U_{\hat \rho}(\mu), \mathcal{G}_{\hat \rho}(\mu)$ for the empirical risk.
Then, we assume the following.
\begin{assumption}\label{assumption:data}
    the output $y_i$ for each task is generated by $y_i = f_i^\circ(x) + \varepsilon_i$,
    where $f_i^\circ:\R^d \to \R$ is the target function and $\varepsilon_i$ is the noise, which follows $\upsilon([-\sigma, \sigma])$ independently for some $\sigma \geq 0$.
\end{assumption}

To see the benefit of the feature learning or kernel learning, we consider the following function class.
\begin{definition}[KL-restricted Barron space]
    Let $\mathcal{P}_M = \qty{\mu \in \mathcal{P} \mid \kl(\nu \mid \mu) \leq M}$ for some $M > 0$.
    Then, we define the KL-restricted Barron space as
    \begin{align*}
        \mathcal{B}_M = \qty{f(x;a, \mu) \mid \mu \in \mathcal{P}_M, a \in L^2(\mu)},
    \end{align*}
    and the corresponding norm as
    \begin{align*}
        \norm{g}_{\mathcal{B}_M} = \inf_{\mu \in \mathcal{P}_M, a \in L^2(\mu)} \qty{\norm{a}_{L^2(\mu)} \mid g(x) = f(x;a, \mu)}.
    \end{align*}
\end{definition}
This can be seen as a variant of Barron space in~\citet{e_priori_2019, ma_barron_2022}.
Similar function classes are also considered in~\citet{bach_equivalence_2017} but they consider Frank-Wolfe type optimization algorithm, which is different from usual gradient descent.
We remark that Barron space can be regarded as a union of RKHS: $\mathcal{B}_M = \bigcup_{\mu \in \mathcal{P}_M} \mathcal{H}_{\mu}$
and the norm $\norm{f}_{\mathcal{B}_M}$ is equal to the minimum of $\norm{f}_{\mathcal{H}_\mu}$ over all $\mu \in \mathcal{P}_M$~\citep{ma_barron_2022}.

To obtain the generalization guarantee, we utilize the Rademacher complexity.
The Rademacher complexity of a function class $\mathcal{F}$ of functions $f:\R^d \to \R^T$ is defined by
\begin{align*}
    \rcomp(\mathcal{F}) := \E_\sigma\qty[\sup_{f \in \mathcal{F}} \frac{1}{nT} \sum_{i=1}^n \sum_{t = 1}^T \sigma_{it}f_t(x_i)],
\end{align*}
where $\sigma_{it}$ is an i.i.d. Rademacher random variable ($P(\sigma_{it} = 1) = P(\sigma_{it} = -1) = 1/2$).
Then, we have the following bound for the mean-field neural networks.
\begin{lemma}\label{lem:rademacher-complexity}
    Assume that $h(x;w)$ satisfies Assumption~\ref{assumption:activation}.
    Define a class of the mean-field neural networks by
    \begin{align*}
        \mathcal{F}_{R, M} = \qty{f(x;a, \mu) \mid \kl(\nu \mid \mu) \leq M, \norm{a}_{L^2(\mu)}^2 \leq R}.
    \end{align*}
    Then, the Rademacher complexity of $\mathcal{F}_{R, M}$ is bounded by
    \begin{align*}
        \rcomp(\mathcal{F}_{R, M}) \leq \sqrt{\frac{R(4M+2T\log 2)}{nT}} = O\qty(\frac{R(M + T)}{nT}).
    \end{align*}
\end{lemma}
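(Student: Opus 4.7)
The plan is to combine a Cauchy--Schwarz step (to peel off the $L^2(\mu)$ norm of $a$) with the Donsker--Varadhan variational formula (to convert the $\mathrm{KL}$ constraint into an exponential moment against the reference Gaussian $\nu$), and finally control that exponential moment via a standard sub-Gaussian Rademacher bound using $|h(\cdot;w)|\le 1$.

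First I would rewrite the Rademacher sum as an integral over neurons. Since $f_t(x_i) = \int a^{(t)}(w)\, h(x_i;w)\,\dd\mu(w)$, setting
$q_t(w) := \tfrac{1}{nT}\sum_{i=1}^n \sigma_{it} h(x_i;w)$
gives
\begin{align*}
\frac{1}{nT}\sum_{i,t}\sigma_{it}f_t(x_i) \;=\; \int \langle a(w),\,q(w)\rangle\,\dd\mu(w)
\;\le\; \|a\|_{L^2(\mu)}\,\|q\|_{L^2(\mu)},
\end{align*}
by Cauchy--Schwarz. Using $\|a\|_{L^2(\mu)}\le\sqrt R$ and Jensen's inequality to exchange $\sqrt{\cdot}$ with the $\sigma$-expectation then yields
\begin{align*}
\rcomp(\mathcal{F}_{R,M}) \;\le\; \sqrt R\,\Bigl(\E_\sigma \sup_{\mu:\kl(\mu\|\nu)\le M}\,\int \sum_{t=1}^T q_t(w)^2\,\dd\mu(w)\Bigr)^{1/2}.
\end{align*}

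Next I would remove the supremum over $\mu$ using Donsker--Varadhan: for every $\eta>0$ and every $g$,
$\int g\,\dd\mu \le \tfrac{1}{\eta}\log\int e^{\eta g}\dd\nu + \tfrac{1}{\eta}\kl(\mu\|\nu)$.
Applying this with $g=\sum_t q_t^2$, taking $\E_\sigma$, and pushing the log through the expectation by Jensen gives
\begin{align*}
\E_\sigma\!\sup_{\mu}\!\int\!\!\sum_t q_t^2\,\dd\mu
\;\le\; \frac{M}{\eta} + \frac{1}{\eta}\log \int \E_\sigma\bigl[e^{\eta\sum_t q_t(w)^2}\bigr]\,\dd\nu(w).
\end{align*}

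The key quantitative step is bounding $\E_\sigma[e^{\eta\sum_t q_t(w)^2}]$ uniformly in $w$. Because the noise variables for different tasks are independent, this factorises as $\prod_t \E_{\sigma_t}[e^{\eta q_t(w)^2}]$. Since $|h(\cdot;w)|\le 1$, each $q_t(w)=\tfrac{1}{nT}\sum_i \sigma_{it}h(x_i;w)$ is a centred Rademacher sum that is sub-Gaussian with variance proxy $\tfrac{1}{n^2T^2}\sum_i h(x_i;w)^2 \le \tfrac{1}{nT^2}$, so by the standard chi-square-type bound $\E[e^{\eta X^2}]\le (1-2\eta\tau^2)^{-1/2}$ we obtain $\E_{\sigma_t}[e^{\eta q_t(w)^2}]\le (1-2\eta/(nT^2))^{-1/2}$. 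Choosing $\eta=nT^2/4$ makes each factor equal to $\sqrt 2$, so the product is at most $2^{T/2}$ and its log is $\tfrac12 T\log 2$. Substituting back gives
\begin{align*}
\E_\sigma\!\sup_{\mu}\!\int \!\!\sum_t q_t^2\,\dd\mu \;\le\; \frac{4M+2T\log 2}{nT^2},
\end{align*}
and plugging this into the Cauchy--Schwarz bound produces the claimed inequality (up to the stated constants).

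The main obstacle I expect is book-keeping in the exponential-moment step: one must choose $\eta$ so that (i) the sub-Gaussian chi-square bound is valid ($\eta\tau^2<1/2$), and (ii) the resulting $\tfrac{1}{\eta}\log(\cdot)$ matches the $T\log 2$ term while leaving the $M/\eta$ term of the right order. Nothing else is delicate: the Cauchy--Schwarz and Jensen steps are routine, and Assumption~\ref{assumption:activation} supplies the $\|h\|_\infty\le 1$ bound needed for the sub-Gaussianity.
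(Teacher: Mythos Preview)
Your proposal is correct and follows essentially the same route as the paper: Cauchy--Schwarz to peel off $a$, Jensen to pull out the square root, Donsker--Varadhan to convert the KL constraint into an exponential moment against $\nu$, and then a chi-square-type bound on that moment with the parameter chosen so each task contributes a factor $\sqrt 2$. The only variation is that the paper first passes from Rademacher to Gaussian complexity and then computes the exponential moment exactly for the Gaussian $Z_t(w)$, whereas you stay with Rademacher noise and invoke the sub-Gaussian bound $\E[e^{\eta X^2}]\le(1-2\eta\tau^2)^{-1/2}$ directly; this yields the same constant and is arguably slightly more economical.
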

This is a generalization of the result in~\citet{chen_generalized_2020}.
See Appendix~\ref{sec:proof-rademacher-complexity} for the proof.

Then, we can derive the generalization error bound for the empirical risk minimizer $\hat \mu = \argmin_\mu \mathcal{G}_{\hat \rho}(\mu)$.
Note that the empirical risk minimizer $\hat \mu$ can be obtained by the MFLD as shown in Theorem~\ref{thm:convergence}.
\begin{theorem}\label{thm:generalization-bound}
    Assume that Assumption~\ref{assumption:activation} and~\ref{assumption:data} holds with $\sigma = 0$, $T=1$,
    and $f^\circ_1 \in \mathcal{B}_M, \norm{f^\circ_1}_{\mathcal{B}_M} \leq R$ for given $M, R > 0$.
    In addition, let $\lambda = 1/\sqrt{n}$, and $\lambda_a = 2M / R$.
    Then, with probability at least $1 - \delta$ over the choice of training examples, it holds that
    \begin{align*}
        \norm{f(\cdot;\hat \mu) - f^\circ}_{L^2(\rho_X)}^2 = O\qty((R + 1)\sqrt{\frac{M + 1 + \log 1/\delta}{n}}).
    \end{align*}
\end{theorem}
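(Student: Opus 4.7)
The plan is to combine an optimality-based in-sample bound at $\hat\mu$ with the uniform Rademacher bound of Lemma~\ref{lem:rademacher-complexity}. First, by the definition of $\mathcal{B}_M$ and the hypothesis $\norm{f^\circ}_{\mathcal{B}_M} \leq R$, I pick a realizer $(\mu^*, a^*)$ with $\kl(\nu \mid \mu^*) \leq M$, $\norm{a^*}_{L^2(\mu^*)} \leq R$, and $f^\circ = f(\cdot;a^*,\mu^*)$; since $\sigma = 0$, this yields $L_{\hat\rho}(a^*, \mu^*) = 0$.

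Next I exploit the optimality of $\hat\mu$ together with the optimality of $a_{\hat\mu}$ as the inner minimizer. Rewriting the regularizer so that the Gaussian prior term in $r(a,w)$ and the entropy combine into a KL against $\nu$, we have
\begin{align*}
\mathcal{G}_{\hat\rho}(\hat\mu) \leq \mathcal{G}_{\hat\rho}(\mu^*) \leq L_{\hat\rho}(a^*, \mu^*) + \frac{\bar\lambda_a}{2} \norm{a^*}_{L^2(\mu^*)}^2 + \lambda \kl(\nu \mid \mu^*),
\end{align*}
since $a_{\mu^*}$ minimizes the inner problem at $\mu^*$. The right-hand side is at most $\bar\lambda_a R^2/2 + \lambda M$, and the choices $\lambda = 1/\sqrt{n}$, $\lambda_a = 2M/R$ reduce this to $M(R+1)/\sqrt{n}$. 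Reading off each non-negative summand of $\mathcal{G}_{\hat\rho}(\hat\mu)$ separately then gives $L_{\hat\rho}(a_{\hat\mu}, \hat\mu) \leq M(R+1)/\sqrt{n}$ and the deterministic a priori bounds $\norm{a_{\hat\mu}}_{L^2(\hat\mu)}^2 \leq R(R+1)$ and $\kl(\nu\mid \hat\mu) \leq M(R+1)$, so that $f(\cdot;\hat\mu)$ lies in the data-independent class $\mathcal{F}_{R(R+1), M(R+1)}$ almost surely.

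Third, I transfer to the population by a uniform generalization argument. Because Assumption~\ref{assumption:activation} gives $|h|\leq 1$, every $f \in \mathcal{F}_{R',M'}$ satisfies $|f(x)| \leq \sqrt{R'}$, and with $|y| = |f^\circ(x)| \leq \sqrt{R}$ (here the noiseless hypothesis is crucial) the squared loss is bounded and Lipschitz in its first argument on the relevant range. Symmetrization, Talagrand's contraction lemma, and McDiarmid's bounded-differences inequality yield, with probability at least $1-\delta$,
\begin{align*}
\sup_{f \in \mathcal{F}_{R',M'}} (L_\rho(f) - L_{\hat\rho}(f)) \lesssim \sqrt{R'}\,\rcomp(\mathcal{F}_{R',M'}) + R'\sqrt{\log(1/\delta)/n},
\end{align*}
which combined with Lemma~\ref{lem:rademacher-complexity}'s $\rcomp(\mathcal{F}_{R',M'}) \lesssim \sqrt{R'(M'+1)/n}$ controls the gap. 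Finally, since $\sigma = 0$, $\norm{f(\cdot;\hat\mu) - f^\circ}_{L^2(\rho_X)}^2 = 2 L_\rho(a_{\hat\mu}, \hat\mu) \leq 2 L_{\hat\rho}(a_{\hat\mu}, \hat\mu) + 2 \sup_f (L_\rho - L_{\hat\rho})(f)$; inserting the in-sample and uniform bounds with $(R',M') = (R(R+1), M(R+1))$ and simplifying yields the stated rate.

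The main obstacle is tracking the $R$- and $M$-dependence across the two error sources simultaneously. The a priori bound on $\norm{a_{\hat\mu}}_{L^2(\hat\mu)}^2$ is inflated from $R^2$ to $R(R+1)$ by the $\lambda M$ slack, and the Rademacher term accrues further factors through both the Lipschitz envelope $\sqrt{R'}$ and the complexity $\sqrt{R'(M'+1)/n}$. Matching everything to the target rate $(R+1)\sqrt{(M+1+\log(1/\delta))/n}$ is exactly what pins down the particular tuning $\lambda_a = 2M/R$, and one also has to verify that the Lipschitz constant of the squared loss can be bounded by the uniform output envelope $\sqrt{R'}$ rather than by an unbounded $|y|$, which is possible here only because $\sigma = 0$.
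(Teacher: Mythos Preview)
Your approach is essentially identical to the paper's: use optimality at a realizer $(\mu^\circ,a^\circ)$ of $f^\circ$ to bound $\mathcal{G}_{\hat\rho}(\hat\mu)$, read off a priori bounds that place $f(\cdot;\hat\mu)$ in a fixed $\mathcal{F}_{R',M'}$, then transfer to $L_\rho$ via Lemma~\ref{lem:rademacher-complexity}, Talagrand contraction, and bounded differences. One caveat on the final accounting: with your (literal) reading $\|a^*\|_{L^2}\leq R$ and the tuning $\lambda_a=2M/R$, you obtain $(R',M')=(R(R+1),M(R+1))$, and the contraction step then yields a term of order $R'\sqrt{(M'+1)/n}\asymp R^{5/2}\sqrt{M/n}$, which does \emph{not} simplify to the stated rate; the paper's proof instead takes $\|a^\circ\|_{L^2}^2\leq R$, so that $\mathcal{G}_{\hat\rho}(\hat\mu)\leq 2\lambda M$ and $(R',M')=(2R,2M)$, from which the target bound follows directly.
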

See Appendix~\ref{sec:proof-generalization-bound} for the proof.
Therefore, if $R = O(1)$, the mean-field neural networks can learn the Barron space with $n = O(M)$ samples.

Next, we show the lower bound of the estimation error for kernel methods.
For a given kernel $k$, a kernel method returns a function of the form $f(x) = \sum_{i=1}^n \alpha_i k(x, x_i)$
for $\alpha_i \in \R$. This type of estimator is called linear estimator.
The following theorem gives the lower bound of the sample complexity for any linear estimators.
\begin{theorem}\label{thm:lower-bound}
    Fix $m \in \N$ and
    let $d \geq \max\{2, m\}$ and $\rho_X$ be the uniform distribution on $\{-1, 1\}^d$ and $h(x;w) = \tanh(u \cdot x + b)$, where $w = (u, b) \in \R^{d + 1}$.
    In addition, let $H_n \subset L^2(\rho_X)$ be a set of functions of the form $\sum_{i=1}^n \alpha_i h_i(x)$
    and $d(f, H_n) = \inf_{\hat f \in H_n} \|f - \hat f\|_{L^2(\rho_X)}$.
    Then, there exist constants $c_1, c_2 > 0$ which is independent of $d$, such that, for every choice of fixed basis functions $h_1(x), \dots, h_n(x)$,
    it holds that
    \begin{align*}
        \sup_{f \in \mathcal{B}_M, \norm{f}_{\mathcal{B}_M}^2 \leq R} d(f, H_n) \geq \frac{1}{4}
    \end{align*}
    if $n \leq N / 2$ and $M = c_1 d \log d, R = c_2$ where $N = \binom{d}{m} = \Omega(d^m)$.
\end{theorem}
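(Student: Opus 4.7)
The plan is to exhibit a hard family inside the Barron ball that no low-dimensional linear estimator can simultaneously approximate. I would take the family of degree-$m$ parities $\chi_S(x) = \prod_{i \in S} x_i$ for $|S| = m$. Under $\rho_X = \upsilon(\{-1,1\}^d)$ these $N = \binom{d}{m}$ functions are orthonormal in $L^2(\rho_X)$, so the theorem reduces to two claims: (i) each $\chi_S$ lies inside the Barron ball $\{f \in \mathcal{B}_M : \|f\|_{\mathcal{B}_M}^2 \le R\}$ for the stated $M, R$; and (ii) any $n$-dimensional subspace $H_n$ with $n \le N/2$ must fail to approximate some $\chi_S$ to accuracy better than $1/4$.

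Part (ii) is a short Bessel argument. Taking an orthonormal basis $\{e_k\}_{k=1}^n$ of $H_n$ in $L^2(\rho_X)$,
\begin{align*}
\sum_{|S|=m}\|P_{H_n}\chi_S\|_{L^2(\rho_X)}^2 = \sum_{k=1}^n\sum_{|S|=m}|\langle \chi_S, e_k\rangle|^2 \le \sum_{k=1}^n \|e_k\|^2 = n,
\end{align*}
so $\sum_{|S|=m} d(\chi_S, H_n)^2 = N - \sum_{|S|=m}\|P_{H_n}\chi_S\|^2 \ge N - n \ge N/2$. Since the maximum is at least the average, some $\chi_S$ satisfies $d(\chi_S, H_n)^2 \ge 1/2$, and in particular $d(\chi_S, H_n) \ge 1/\sqrt{2} > 1/4$.

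Part (i) is the substantive construction. Each parity is a ridge function $\chi_S(x) = g(u_S \cdot x)$ with $u_S = \mathbf{1}_S$ and a fixed $\pm 1$-valued function $g$ on the $m+1$ admissible values $\{-m, -m+2, \ldots, m\}$ (depending only on $m$). I would choose $\mu_S$ to concentrate around the one-dimensional subspace spanned by $v_S = u_S/\sqrt{m}$: in an orthonormal basis of $\R^d$ with first axis $v_S$, let $\mu_S$ agree with $\nu$ in the $v_S$ coordinate and in the bias, but narrow to variance $\sigma^2 = 1/(\lambda_w d)$ in each of the $d-1$ orthogonal coordinates. A direct Gaussian KL computation gives $\kl(\nu \mid \mu_S) = \tfrac{d-1}{2}\log d + O(d)$, fitting inside $M = c_1 d \log d$. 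Inside this tube the orthogonal noise contributes only $O(1)$ fluctuation to $u \cdot x$, so the representation problem reduces essentially to a 1D interpolation on the $m+1$ admissible values of $u_S \cdot x$; picking $m+1$ tanh features with suitable biases and solving the resulting linear system yields coefficients depending only on $m$, which lift to $a_S \in L^2(\mu_S)$ with $\|a_S\|_{L^2(\mu_S)}^2 = O_m(1) \le R = c_2$.

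The main obstacle is justifying the 1D interpolation. Because of the orthogonal noise, the effective activation averaged over $\mu_S$ is not $\tanh$ itself but a smoothed version $f_s(z) = \mathbb{E}_{Z \sim N(0, s^2)}[\tanh(z + Z)]$ with $s = O(1)$, and one must show that the resulting interpolation matrix $[f_s(t_k + \beta_j)]_{k,j=0}^m$ is invertible for some choice of biases. I would establish this by a real-analyticity argument: the determinant is a non-identically-zero real-analytic function of $(\beta_1, \ldots, \beta_{m+1})$, and non-vanishing reduces, after differentiation in $\beta$, to the linear independence of the bump functions $\partial_z f_s(t_k + \cdot)$, which have distinct peaks $-t_k$ and distinct exponential tails. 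Once invertibility is in hand, the interpolation coefficients are bounded in terms of $m$ alone, and the remaining bookkeeping is routine.
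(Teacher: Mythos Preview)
Your Bessel argument in Part~(ii) is correct and is precisely the content of the dimension lower bound the paper invokes (Lemma~\ref{lem:dim-lower-bound}); using parities rather than sines is the natural choice on $\{-1,1\}^d$.

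The gap is in Part~(i). Your $\mu_S$ is a continuous Gaussian that agrees with $\nu$ in the $v_S$ coordinate and in the bias. An element $a_S\in L^2(\mu_S)$ is then a \emph{function} of $w$, not a finite selection of atoms, so ``picking $m+1$ tanh features with suitable biases'' does not describe any $a_S$ with controlled norm: concentrating $a_S$ near $m+1$ fixed biases makes $\|a_S\|_{L^2(\mu_S)}^2$ scale like the inverse $\mu_S$-mass of those neighbourhoods, not $O_m(1)$. The interpolation matrix $[f_s(t_k+\beta_j)]$ you analyse in the obstacle paragraph governs a discrete representation with $m+1$ atoms; for the continuous $\mu_S$ you actually chose, the minimum-norm $a_S$ satisfying the $m+1$ linear constraints is governed instead by the Gram matrix $[\langle\Phi_{t_k},\Phi_{t_j}\rangle_{L^2(\mu_S)}]$ of the feature maps, whose conditioning you never address. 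The ``lift'' is therefore unspecified, and your real-analyticity argument for invertibility is aimed at the wrong matrix.

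The paper's route in Lemma~\ref{lem:approximation-sin} avoids this entirely. It shifts the \emph{mean} of $\mu$ to $\tau u^\circ$ with large $\tau$ (rather than keeping mean zero and only narrowing orthogonal variance), takes the bias uniform on $[-\tau r,\tau r]$, and sets $a(w)=r\,\tilde f'(-b/\tau)$. This yields an explicit integral representation---no linear system to invert---that approximates any $1$-Lipschitz ridge target to error $\varepsilon$ with $\kl(\nu\mid\mu)=O(d\log d)$ and $\|a\|_{L^2(\mu)}=O(1)$. The proof then runs by contradiction and triangle inequality: if every Barron-ball function were within $1/4$ of $H_n$, each hard orthonormal target would be within $1/2$, contradicting the dimension bound. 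In particular the paper never places the hard family \emph{exactly} in $\mathcal{B}_M$, only $1/4$-close approximants. Your plan could be repaired along the same lines, or by replacing your Gaussian $\mu_S$ with a mixture putting $\Omega_m(1)$ mass near the chosen features (which keeps $\kl(\nu\mid\mu_S)=O_m(1)$ and makes the discrete interpolation legitimate), but as written the lifting step is a genuine gap.
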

The proof can be found in Appendix~\ref{sec:proof-lower-bound}.
The key observation for the proof is that we can construct a function in the Barron space that approximates a single index model with certain regularity by taking a measure
which concentrates on a line toward a certain direction.
This theorem implies that any kernel estimator with $n = o(d^m)$ cannot learn the Barron space with $M = \Omega(d \log d)$.
This is in contrast to the mean-field neural networks which can learn the Barron space with $n = O(d \log d)$ samples as shown in Theorem~\ref{thm:generalization-bound}.
This is because the kernel methods cannot adapt the underlying RKHS under the target function.
Therefore, feature learning or kernel learning is essential to obtain good generalization results.

\section{Properties of the Kernel Induced by the First Layer}\label{sec:feature-learning}
In the previous section, we proved that feature learning is essential to obtain good generalization results
and two-layer neural networks trained by the MFLD can excel over kernel methods.
In this section, we study the properties of the kernel trained via the MFLD.
We show that in regression problem, the kernel induced by the first layer moves to increase kernel and parameter alignment.
We also proved that the presence of the noise $\varepsilon$ induces bias towards the large degrees of freedom.
To overcome this issue, we provide the label noise procedure, which provably converges to the global minima of the objective functional with the degrees of freedom regularization.

\subsection{Kernel and Parameter Alignment}
For simplicity, we consider the single output setting $T = 1$ and define $f^\circ = f^\circ_1$.
In addition, we consider $\tanh$ activation $h(x;w) = \tanh(u\cdot x + b)~(w = (u, b))$
and assume that $\rho_X = N(0, I)$.
To measure the adaptation of the kernel to the target function, we define the kernel alignment~\citet{cristianini_kernel-target_2001}. which is commonly used to measure the similarity between kernel and labels.
\begin{definition}
    For $\mu \in \mathcal{P}$, the empirical and population kernel alignment is defined by
    \begin{align*}
        \hat A(\mu) & = \frac{f^\circ(X)^\top \hat \Sigma_{\mu} f^\circ(X)}{\norm{f^\circ(X)}_2^2 \norm{\hat \Sigma_{\mu}}_{\mathrm{F}}},                                              \\
        A(\mu)      & = \frac{\E_{x \sim \rho_X, x'\sim \rho_X}[f^\circ(x) k_\mu(x, x')f^\circ(x')]}{\E_{\rho_X}[f^\circ(x)^2] \sqrt{\E_{x \sim \rho_X, x' \sim \rho_X}[k(x, x')^2]}}.
    \end{align*}
\end{definition}
Note that $\hat A(\mu)$ and $A(\mu)$ satisfy $0 \leq \hat A(\mu), A(\mu) \leq 1$
and larger $A(\mu)$ means that the kernel is aligned with the target function.

In this section, we consider the regression problem with squared loss.
if $\sigma = 0$, the limiting functional $U_{\hat \rho}(\mu)$ has the following explicit formula.
\begin{align*}
    U_{\hat \rho}(\mu) & = \frac{\bar \lambda_a}{2}  {f^\circ(X)}^\top (\hat \Sigma_{\mu} + n \bar \lambda_a I)^{-1} f^\circ(X).
\end{align*}
From Jensen's inequality, we have
\begin{align*}
    \hat A(\mu) \geq \frac{\bar \lambda_a \norm{f^\circ(X)}_2^2}{2U_{\hat \rho}(\mu)n} - \bar \lambda_a.
\end{align*}
See Lemma~\ref{lem:lower-bound-A} for the detailed derivation.
Therefore, the minimization of $U_{\hat \rho}(\mu)$ is equivalent to the maximization of the lower bound of the kernel alignment.

To derive a concrete expression of the kernel alignment, we assume that the target function $f^\circ$ is a single-index model
, which is a common structural assumption on the target function~\citep{bietti_learning_2022}.
\begin{assumption}\label{assumption:alignment}
    There exist $\tilde f:\R \to \R$, $u^\circ \in \R^{d}~(\norm{u^\circ}_2 = 1)$ such that
    $\tilde f$ is differentiable, $\|\tilde f'\|_\infty$, $\|\tilde f\|_{\infty} \leq 1$,
    $\E_{z \sim N(0, 1)}[\tilde f(z)] = 0$, and $f^\circ(x) = \tilde f(u^\circ \cdot x)$.
\end{assumption}

We also define the parameter alignment, which measures the similarity between the first layer parameters and the intrinsic direction of the target function.
\begin{definition}
    For $\mu \in \mathcal{P}$, the parameter alignment is defined as
    \begin{align*}
        P(\mu) = \E_{(u, b)\sim \mu}\qty[\frac{(u^\top u^\circ)^2}{\norm{u}^2}].
    \end{align*}
    Here, we define $\frac{(u^\top u^\circ)^2}{\norm{u}^2} = 0$ for $u = 0$.
\end{definition}
This is the expected cosine similarity between parameters and the target direction, and thus $0 \leq P(\mu) \leq 1$.
Note that larger $P(\mu)$ means that the first layer parameters are aligned with the intrinsic direction of the target function.

Then, we have the following result on the kernel for empirical risk minimizer $\hat \mu$.
\begin{theorem}\label{thm:alignment}
    Assume that Assumption~\ref{assumption:alignment} holds.
    Then, there exists universal constants $c_3, c_4, c_5$ satisfying the following:
    Let $\hat \mu$ be the minimizer of $\mathcal{G}_{\hat \rho}(\mu)$ with $n \geq c_3(d \log d + \log 1/ \delta)$,
    $\lambda = c_4 / (d \log d)$, and $\lambda_a = c_5 d \log d$ for $0 < \delta < 1$ and $d \geq 2$.
    Then, the kernel and parameter alignment for the initial distribution $\mu_0$
    and the empirical risk minimizer $\hat \mu$ satisfies
    \begin{align*}
        A(\mu_0) & = O(1/\sqrt{d}),~A(\hat \mu) = \Omega(1), \\
        P(\mu_0) & = O(1/d),~ P(\hat \mu) = \Omega(1),
    \end{align*}
    with probability at least $1 - \delta$ over the choice of samples.
\end{theorem}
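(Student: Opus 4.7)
The plan is to handle the four claims in two groups: the initialization bounds $A(\mu_0)=O(1/\sqrt d)$ and $P(\mu_0)=O(1/d)$ follow from direct computations exploiting rotational symmetry of $\mu_0=N(0,I/\lambda_w)$, whereas the lower bounds $A(\hat\mu)=\Omega(1)$ and $P(\hat\mu)=\Omega(1)$ come from combining a variational upper bound on $U_{\hat\rho}(\hat\mu)$ with the Jensen-type inequality $\hat A(\mu)\ge \bar\lambda_a\|f^\circ(X)\|_2^2/(2nU_{\hat\rho}(\mu))-\bar\lambda_a$ recorded just before the theorem.

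At initialization, $P(\mu_0)=\E_{u\sim N(0,I/\lambda_w)}[(u^\top u^\circ)^2/\|u\|^2]=1/d$ is immediate by spherical symmetry (set $u^\circ=e_1$ and use permutation invariance of the coordinates of $u$). For $A(\mu_0)$, I would rewrite the numerator as $\E_{w\sim \mu_0}[\psi(w)^2]$ with $\psi(w)=\E_{\rho_X}[f^\circ(x)h(x;w)]$, decompose $u=\alpha u^\circ+u^\perp$, and Hermite-expand in $z=u^\circ\cdot x$ to get $\psi(w)=\sum_{k\ge 1}\hat f_k\,c_k(\|u\|,b)\,\alpha^k/k!$. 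Because $\alpha\sim N(0,1/\lambda_w)$ while $c_1(\|u\|,b)=\E[\tanh'(\|u\|Z+b)]=\Theta(1/\|u\|)=\Theta(\sqrt{\lambda_w/d})$ once $\|u\|^2$ concentrates at $d/\lambda_w$, the leading term gives $\E_{\mu_0}[\psi^2]=O(1/d)$. A parallel Hermite analysis of $\E_x[h(x;w)h(x;w')]$ for independent $w,w'\sim\mu_0$, whose dominant contribution scales with $u\cdot u'/(\|u\|\|u'\|)=\Theta(1/\sqrt d)$, yields $\sqrt{\E_{x,x'}[k_{\mu_0}(x,x')^2]}=\Theta(1/\sqrt d)$, and therefore $A(\mu_0)=O(1/d)/\Theta(1/\sqrt d)=O(1/\sqrt d)$.

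For the trained model, the key step is to exhibit a reference measure $\mu^\dagger\in\mathcal{P}$ concentrated on neurons aligned with $u^\circ$ such that (i) $\kl(\nu\mid\mu^\dagger)=O(d\log d)$ and (ii) there is $a^\dagger\in L^2(\mu^\dagger)$ with $\|a^\dagger\|_{L^2(\mu^\dagger)}^2=O(1)$ giving $\|f(\cdot;a^\dagger,\mu^\dagger)-f^\circ\|_{L^2(\rho_X)}=o(1)$. Under the scalings $\lambda=c_4/(d\log d)$ and $\lambda_a=c_5 d\log d$, the product $\bar\lambda_a=c_4c_5$ is a constant while $\lambda\cdot\kl(\nu\mid\mu^\dagger)=O(c_4)$, so the optimality $\mathcal{G}_{\hat\rho}(\hat\mu)\le\mathcal{G}_{\hat\rho}(\mu^\dagger)$ together with sufficiently small $c_4,c_5$ pushes $U_{\hat\rho}(\hat\mu)$ strictly below $\|f^\circ(X)\|_2^2/(2n)$ up to a constant factor. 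Combining with the Hoeffding estimate $\|f^\circ(X)\|_2^2/n=\Theta(1)$ (valid once $n\gtrsim\log(1/\delta)$ since $|f^\circ|\le 1$) and plugging into the Jensen bound yields $\hat A(\hat\mu)=\Omega(1)$; the transfer to the population $A(\hat\mu)$ is a uniform-convergence argument on $\mathcal{P}_M$ driven by Lemma~\ref{lem:rademacher-complexity}.

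For $P(\hat\mu)$, I reuse the Hermite decomposition of $\psi$: the tanh-saturation estimate $c_k(\|u\|,b)=O(1/\|u\|^k)$ gives the pointwise bound $\psi(w)^2\le C(\tilde f)\,\alpha^2/\|u\|^2+(\text{strictly higher-order in }1/d)$, so the in-expectation lower bound $\E_{\hat\mu}[\psi^2]=\Omega(1)$ encoded in $\hat A(\hat\mu)=\Omega(1)$ (once the denominator $\sqrt{\E[k_{\hat\mu}^2]}=O(1)$ is controlled from Assumption~\ref{assumption:activation}) forces $\E_{\hat\mu}[\alpha^2/\|u\|^2]=\Omega(1)$, i.e.\ $P(\hat\mu)=\Omega(1)$. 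The main obstacle I expect is the construction of $\mu^\dagger$: it must be expressive enough on the $u^\circ$-axis to reproduce the one-dimensional profile $\tilde f$, yet sharply concentrated in the $u^\perp$ directions so that the internal noise $u^\perp\cdot x^\perp$ inside $\tanh(u\cdot x+b)$ does not spoil the approximation, while still respecting the $O(d\log d)$ KL budget. A secondary delicate point is the uniform Hermite control of $c_k(\|u\|,b)$ required to convert kernel alignment into parameter alignment, since a merely average-case bound on $c_k$ would not suffice.
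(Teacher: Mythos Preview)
Your overall strategy matches the paper's, but two technical devices differ. For $A(\mu_0)=O(1/\sqrt d)$, the paper avoids Hermite analysis entirely: it introduces rotated copies $f_i(x)=\tilde f(u_i\cdot x)$ for an orthonormal basis $(u_i)$ with $u_1=u^\circ$, observes by rotational invariance of both $\rho_X$ and $\mu_0$ that $\E[f_i(x)k_{\mu_0}(x,x')f_i(x')]$ is independent of $i$, and then bounds $\sum_i\E[f_i\,k_{\mu_0}\,f_i]\le\sqrt{d}\,\|f^\circ\|^2\sqrt{\E[k_{\mu_0}^2]}$ via Cauchy--Schwarz and the orthogonality of the $f_i$. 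Dividing by $d$ yields $A(\mu_0)\le 1/\sqrt d$ in one line, with the kernel norm $\sqrt{\E[k_{\mu_0}^2]}$ cancelling; your route instead needs a \emph{lower} bound on that norm, which is not free. For $P(\hat\mu)$, the paper also bypasses a coefficient-by-coefficient Hermite expansion: writing $\psi(w)=\E_z[\tilde f(z)\,\tilde h_{\|u^\perp\|}(\alpha z+b)]$ with $\tilde h_\tau(s)=\E_\zeta[\tanh(s+\tau\zeta)]$, it proves directly that $\tilde h_\tau$ is $\min(1,2/(\pi\tau))$-Lipschitz (via the sign of $\tilde h_\tau''$ and Mills' ratio), so $|\psi(w)|^2\le \|f^\circ\|^2 L_{\|u^\perp\|}^2\alpha^2$; combining with the trivial bound $|\psi|^2\le\|f^\circ\|^2$ and the inequality $\min(1,a^2/b^2)\le 2a^2/(a^2+b^2)$ gives $|\psi(w)|^2\le C\,(u^\top u^\circ)^2/\|u\|^2$ pointwise. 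This is your ``tanh-saturation estimate'' packaged as a single Lipschitz bound rather than a sequence $c_k=O(1/\|u\|^k)$.

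There is one genuine slip in your chain for $P(\hat\mu)$. To extract $\E_{\hat\mu}[\psi^2]=\Omega(1)$ from $A(\hat\mu)=\Omega(1)$ you would need a \emph{lower} bound on $\sqrt{\E[k_{\hat\mu}^2]}$, not the upper bound $O(1)$ you invoke: if the trained kernel has small Hilbert--Schmidt norm, $A(\hat\mu)$ can be large while its numerator is tiny. The paper avoids this by working with $A'(\mu):=\E_\mu[\psi^2]/\|f^\circ\|^2$ throughout; the Jensen inequality you cite is in fact stated as $A(\mu)\ge A'(\mu)\ge \bar\lambda_a\|f^\circ\|^2/(2U_\rho(\mu))-\bar\lambda_a$ (Lemma~\ref{lem:lower-bound-A}), so one gets $A'(\hat\mu)=\Omega(1)$ directly and then $P(\hat\mu)\gtrsim A'(\hat\mu)$ from the pointwise bound on $\psi$. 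Relatedly, the paper does the empirical-to-population transfer at the level of the \emph{loss} (a Lipschitz functional on $\mathcal{F}_{R,M}$, handled by Lemma~\ref{lem:rademacher-complexity}), obtaining $U_\rho(\hat\mu)\le 4\bar\varepsilon^2$ and then applying the population Jensen bound; your plan to transfer $\hat A$ to $A$ uniformly over $\mathcal{P}_M$ would require controlling a quadratic kernel functional and is not covered by the Rademacher lemma as stated.
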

See Appendix~\ref{sec:proof-alignment} for the proof.
In high-dimensional setting $d \gg 1$, $A(\hat \mu),P(\hat \mu) = \Omega(1)$ is a significant improvement over $P(\mu_0) = O(1/d),~ A(\mu_0) = O(1/\sqrt{d})$ at the initialization.
For the parameter alignment, similar results are shown in~\citet{mousavi-hosseini_neural_2022}, but they train only the first layer
and use the norm of the irrelevant directions as a measure of the alignment.
On the other hand, we consider the joint learning of the first and second layers and use the cosine similarity as a measure of the alignment.
In addition,~\citet{atanasov_neural_2021} studied the kernel alignment of NTK, but their analysis is limited to linear neural networks.
Furthermore,~\citet{ba2022high,wang2024nonlinear} studied the alignment of the conjugate kernel of two-layer neural networks after one-step gradient descent,
but their frameworks cannot deal with full training dynamics.

\subsection{Degrees of Freedom and Label Noise}\label{sec:label-noise}
To measure the complexity of the acquired kernel, we define the (empirical) degrees of freedom by
\begin{align*}
    d_{\lambda}(\mu) = \tr[\hat \Sigma_{\mu}(\hat \Sigma_{\mu} + n\lambda I)^{-1}]
\end{align*}
for $\lambda > 0$.
This quantity is the effective dimension of the kernel $k_\mu$ and
plays a crucial role in the analysis of kernel regression~\cite{caponnetto_optimal_2007}.
In addition, it is known that the degrees of freedom is related to the compressibility of neural networks~\cite{suzuki_spectral_2020}.

Under Assumption~\ref{assumption:data}, each label $Y_i$ can be decomposed as $Y_i = f_i^\circ(X) + \varepsilon_i$, where $\varepsilon_i$ is the observation noise.
Then, taking the expectation of $U_{\hat \rho}(\mu)$ with respect to $\varepsilon$
yields $\E_{\varepsilon}[U_{\hat \rho}(\mu)] = B - V + \text{const.}$, where $B = \frac{\bar \lambda_a}{2T} \sum_{i=1}^T \E_\varepsilon [{f_i^\circ(X)}^\top (\hat \Sigma_{\mu} + n \bar \lambda_a I)^{-1} f_i^\circ(X)]$ and $V = \frac{\bar \lambda_a \sigma^2}{6n}d_{\bar \lambda_a}(\mu)$.
See Lemma~\ref{lem:intrinsic-noise} for the derivation. Here, $B$ is the bias term, which corresponds to the alignment with the target function as shown in the previous section,
and $V$ is the variance term, which corresponds to the degrees of freedom.
Since $-V$ appears in $\E_{\varepsilon}[U_{\hat \rho}(\mu)]$, minimizing $U_{\hat \rho}(\mu)$ leads to the larger variance and the degrees of freedom.
We verify this phenomenon in Section~\ref{sec:numerical-experiments}.

To obtain good prediction performance, we need to minimize $B + V$ and control the bias-variance tradeoff.
Here, we consider the following objective functional with the degrees of freedom regularization:
\begin{align*}
    \mathcal{L}(\mu) & := \mathcal{G}_{\hat \rho}(\mu) + \frac{\bar \lambda_a \tilde \sigma^2}{6n} d_{\bar \lambda_a}(\mu).
\end{align*}
Here, $\tilde \sigma \geq 0$ controls the strength of the regularization.
Since this regularization is proportional to the variance $V$, minimizing $\mathcal{L}(\mu)$ would lead to smaller variance and better generalization.
To obtain the minimizer of the above functional $\mathcal{L}(\mu)$, we provide the label noise procedure, where we add independent \textit{label noise} to the training data
for implicit regularization.
In the discretized MFLD update, we add independent label noise $\tilde \varepsilon_i \sim \upsilon([-\tilde \sigma, \tilde \sigma]^{n})$ to $Y_i$ at each time step.
We use the noisy label $\tilde Y_i := Y_i + \tilde \varepsilon_i$ to train the second layer
and obtain $\tilde a^{(i)}_\mu := \argmin \frac{1}{nT} \sum_{i=1}^T \|\tilde Y_i - f(X;a, \mu)\|_2^2 + \frac{\bar \lambda_a}{2} \E_\mu[a(w)^2]$.
Here, the noisy limiting functional $G_{\tilde \varepsilon}(\mu)$ is defined as
\begin{align*}
    G_{\tilde \varepsilon}(\mu) & := \frac{1}{nT} \sum_{i=1}^T \norm{Y_i - f(X;\tilde a_\mu, \mu)}_2^2                                                    \\
                                & \quad + \frac{\bar \lambda_a}{2} \E_\mu\qty[\norm{\tilde a_\mu(w)}_2^2]+ \frac{\bar \lambda_w}{2} \E_\mu[\norm{w}_2^2].
\end{align*}
Note that we use the clean label $Y_i$ to define $G_{\tilde \varepsilon}(\mu)$ instead of $\tilde Y_i$.
Then, we update the first layer by the following discretized MFLD.
\begin{align*}
    w^{(k + 1)}_i = w^{(k)}_i - \eta \grad \fdv{G_{\tilde \varepsilon^{(k)}}(\mu_{W_k})}{\mu}\/(w^{(k)}_i) + \sqrt{2 \eta \lambda} \xi^{(k)}_i,
\end{align*}
where $\tilde \varepsilon^{(k)}$ is an independent noise at the $k$-th iteration.
In fact, the expectation of $G_{\tilde \varepsilon}(\mu) + \lambda \ent(\mu)$ with respect to $\tilde \varepsilon$ is equal to $\mathcal{L}(\mu)$
and the above procedure can be seen as the stochastic MFLD for minimizing $\mathcal{L}(\mu)$.
Indeed, the following theorem holds.
\begin{theorem}\label{thm:label-noise}
    Let $\mu^* = \argmin_\mu \mathcal{L}(\mu)$.
    Then, for $\eta < \min(1/4, 1/(4\alpha \lambda))$ and $0 \leq \tilde \sigma^2 / 3 \leq \lambda_{\min}\qty(\frac{1}{T} \sum_{i=1}^T Y_i {Y_i}^\top)$, we have
    \begin{align*}
         & \frac{1}{N} \E[\mathcal{L}^N(\mu^N_k)] - \mathcal{L}(\mu^*)                                                            \\
         & \quad \leq \exp(-\alpha \lambda \eta k / 2)(\E[\mathcal{L}^N(\mu^N_0)] - \mathcal{L}(\mu^*)) + \bar \delta_{\eta, N}',
    \end{align*}
    where $\bar \delta'_{\eta, N} = O(\eta + \frac{1}{N})$.
    Here, the expectation is taken with respect to the randomness of the label noise.
\end{theorem}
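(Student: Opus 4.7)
The plan is to recognize the label-noise update as a stochastic mean-field Langevin dynamics whose deterministic counterpart is MFLD on $\mathcal{L}$, then invoke the framework underlying Theorem~\ref{thm:convergence}. Four ingredients are needed: (i) convexity of $\mathcal{L}$ in $\mu$; (ii) a log-Sobolev inequality for the proximal Gibbs distribution $p^{\mathcal{L}}_\mu\propto\exp(-\lambda^{-1}\fdv{(\mathcal{L}-\lambda\ent)}{\mu})$; (iii) Lipschitz continuity and smoothness of $\fdv{\mathcal{L}}{\mu}$ in $w$, uniformly in $\mu$; and (iv) unbiasedness together with bounded second moment of the stochastic drift induced by $\tilde\varepsilon^{(k)}$.

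For (i) and (iv), I would substitute the closed form $\tilde a_\mu(w)=h(X;w)^\top(\hat\Sigma_\mu+n\bar\lambda_a I)^{-1}\tilde Y$ into $G_{\tilde\varepsilon}$ and take expectation over $\tilde\varepsilon$ using $\E[\tilde\varepsilon]=0$ and $\E[\tilde\varepsilon\tilde\varepsilon^\top]=(\tilde\sigma^2/3)I$. After expansion and cancellation I expect the representation
\[
\mathcal{L}(\mu)=\tfrac{\bar\lambda_a}{2}\tr\bigl[(\hat\Sigma_\mu+n\bar\lambda_a I)^{-1}B\bigr]+\tfrac{\bar\lambda_w}{2}\E_\mu[\norm{w}^2]+\lambda\ent(\mu)+\mathrm{const},
\]
where $B$ is an affine combination of $\tfrac{1}{T}\sum_i Y_iY_i^\top$ and $-\tilde\sigma^2 I$ whose positivity is equivalent to the stated hypothesis on $\tilde\sigma$. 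Since $A\mapsto(A+n\bar\lambda_a I)^{-1}$ is matrix-convex on positive semidefinite matrices and $\tr[\,\cdot\,B]$ is monotone when $B\succeq 0$, while $\hat\Sigma_\mu$ is linear in $\mu$, the first term is convex in $\mu$; the rest is affine plus entropy, so $\mathcal{L}$ is convex with a unique minimizer $\mu^*$. Differentiating the same expectation identity yields $\E_{\tilde\varepsilon}[\grad\fdv{G_{\tilde\varepsilon}}{\mu}]=\grad\fdv{\mathcal{L}-\lambda\ent}{\mu}$, i.e.\ unbiasedness, and the boundedness of $\tilde\varepsilon\in[-\tilde\sigma,\tilde\sigma]^n$ gives a uniformly bounded second moment for the stochastic drift.

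For (ii) and (iii), the first variation is
\[
\fdv{\mathcal{L}}{\mu}(w)=-\tfrac{\bar\lambda_a}{2}h(X;w)^\top(\hat\Sigma_\mu+n\bar\lambda_a I)^{-1}B(\hat\Sigma_\mu+n\bar\lambda_a I)^{-1}h(X;w)+\tfrac{\bar\lambda_w}{2}\norm{w}^2.
\]
Because $\norm{B}_{\ope}$ is controlled by $\max_i\norm{Y_i}_2^2$ (finite under Assumption~\ref{assumption:l2-loss}), $\norm{h(X;w)}_2\leq\sqrt{n}$ by Assumption~\ref{assumption:activation}, and $\norm{(\hat\Sigma_\mu+n\bar\lambda_a I)^{-1}}_{\ope}\leq 1/(n\bar\lambda_a)$, the nonquadratic part is uniformly bounded in $w$. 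The Holley--Stroock argument from Lemma~\ref{lem:lsi} then yields the same LSI constant $\alpha$, up to an absorbable multiplicative factor. Bounds on $\grad_w$ and $\grad_w^2$ inherit from $c_R,c_L$, which supplies the smoothness conditions needed for the one-step discretization analysis exactly as in the proof of Theorem~\ref{thm:convergence}.

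With these four ingredients I would apply the stochastic-gradient version of the one-step descent inequality of~\citet{suzuki_convergence_2023} to obtain $\tfrac{1}{N}\E[\mathcal{L}^N(\mu^N_{k+1})]-\mathcal{L}(\mu^*)\leq(1-\alpha\lambda\eta/2)\bigl(\tfrac{1}{N}\E[\mathcal{L}^N(\mu^N_k)]-\mathcal{L}(\mu^*)\bigr)+O(\eta^2+\eta/N)$, and iterating gives the claimed exponential rate with $\bar\delta'_{\eta,N}=O(\eta+1/N)$. The main obstacle is step (i): the convexity of $\mathcal{L}$ rests on the exact algebraic cancellation that makes $B$ positive semidefinite, which is precisely what forces the condition on $\tilde\sigma^2$; verifying this identity through the nonlinear dependence of $\tilde a_\mu$ on $\tilde\varepsilon$ is the crux of the proof, after which the LSI, smoothness, and discrete-time analysis all transfer from Section~\ref{sec:convergence-analysis} in a largely standard fashion.
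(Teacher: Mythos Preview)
Your proposal is correct and follows essentially the same route as the paper: both cast the label-noise update as the stochastic (SGD) variant of MFLD from \citet{suzuki_convergence_2023}, establish convexity of the expected functional via the explicit trace representation $\tr[(\hat\Sigma_\mu+n\bar\lambda_a I)^{-1}\tilde Y]$ with $\tilde Y=\tfrac{1}{T}\sum_iY_iY_i^\top-\tfrac{\tilde\sigma^2}{3}I\succeq 0$, obtain the LSI constant by Holley--Stroock on the bounded nonquadratic part of the first variation, and transfer the smoothness bounds from the deterministic analysis before invoking the SGD-MFLD convergence theorem. The only cosmetic difference is that the paper obtains the smoothness constants by writing $U_{\tilde\varepsilon}=U-V+\text{const.}$ with $V$ the quadratic form in $\tilde\varepsilon$ (so the bounds for $V$ are literally those of Lemma~\ref{lem:discretization-error} with $c_l$ replaced by $\tilde\sigma$), whereas you argue directly from $c_R,c_L$; and the paper does not spell out the matrix-convexity step you give for (i), instead relying implicitly on the same mechanism as Theorem~\ref{thm:convexity}.
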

See Appendix~\ref{sec:proof-label-noise} for the proof.
Intuitively, the degrees of freedom represents a metric for quantifying the adaptability to noise
and the first layer performs the robust feature learning where the second layer is difficult to fit the label noise.
\citet{suzuki_optimal_2023} has shown the Bayes optimality of two-layer linear neural networks which minimizes the empirical risk with the degrees of freedom regularization.
However, they ignore the optimization aspect and directly assume that the optimal solution can be obtained.
Note that the condition on $\tilde \sigma^2$ is needed to ensure the convexity of the objective
and the multi-learning setting is necessary to set $\tilde \sigma > 0$ since $\frac{1}{T} \sum_{i=1}^T Y_i {Y_i}^\top$ must be full rank.
However, as shown in Section~\ref{sec:numerical-experiments},
the label noise procedure is effective even for the single output setting.

\section{Numerical Experiments}\label{sec:numerical-experiments}
To validate our theoretical results, we conduct numerical experiments with synthetic data.
Specifically, we consider $f^\circ(x) = x_1x_2$ for $d = 15$.
Then, the samples $\qty{(x^{(i)}, y^{(i)})}_{i=1}^n$ are independently generated so that
$x^{(i)}$ follows $N(0, I)$ and $y^{(i)} = f^\circ(x^{(i)}) + \varepsilon^{(i)}$, where $\varepsilon^{(i)} \sim \upsilon([\sigma, \sigma])$.
We consider a finite width neural network with the width $m = 2000$.
We trained the network via noisy gradient descent with $\eta = 0.2, \lambda = 0.004, \lambda_w = 0.25, \lambda_a = 0.25$ until $T = 10000$.
The results are averaged over 5 different random seeds.

First, we investigated the training dynamics of the kernel by changing the intrinsic noise $\sigma$.
As shown in Figure~\ref{fig:intrinsic-noise}, kernel moves to increase the kernel alignment and the degrees of freedom.
In addition, the intrinsic noise increases the degrees of freedom, which is consistent with our arguments in Section~\ref{sec:label-noise}.

Next, we demonstrated the effectiveness of the label noise procedure.
Fig.~\ref{fig:label-noise} shows the evolution of the degrees of freedom and the test loss during the training for different $\tilde \sigma$.
As expected, the label noise procedure reduces the degrees of freedom.
Moreover, the test loss is also improved,
which implies that the degrees of freedom is a good regularization for the generalization error.

\begin{figure}[h]
    \centering
    \includegraphics[width=\columnwidth]{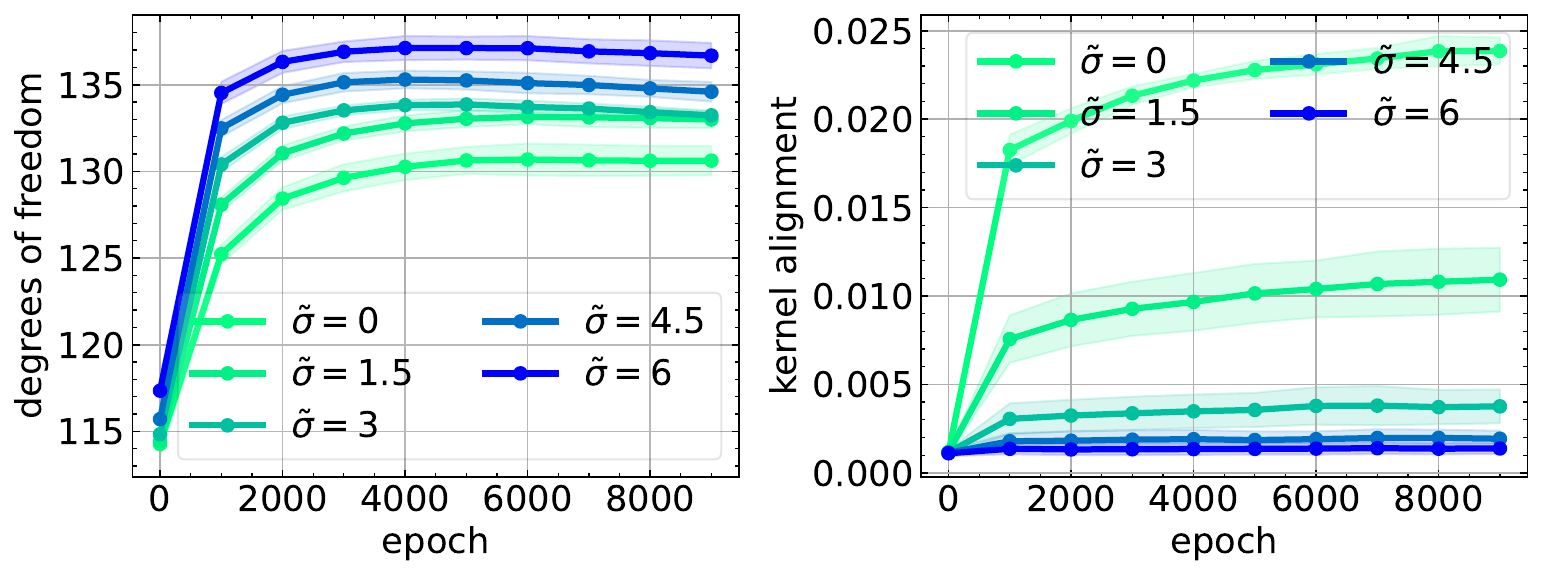}
    \vspace*{-0.7cm}
    \caption{Evolution of the kernel alignment and the degrees of freedom of neural network optimized by the MFLD}
    \label{fig:intrinsic-noise}
\end{figure}
\begin{figure}[h]
    \centering
    \includegraphics[width=\columnwidth]{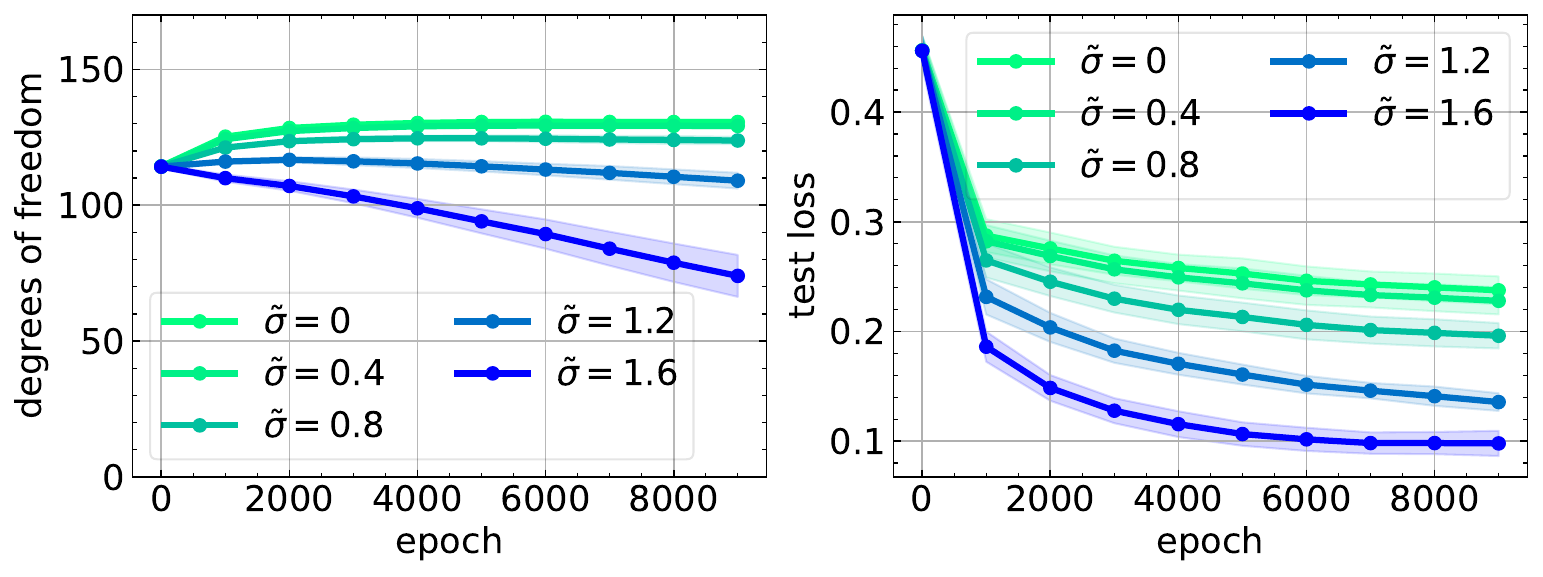}
    \vspace*{-0.7cm}
    \caption{Evolution of the degrees of freedom and the test error of the label noise procedure}
    \label{fig:label-noise}
\end{figure}
\section{Conclusion}
In this paper, we studied the feature learning ability of two-layer neural networks in the mean-field regime
via kernel learning formulation.
For that purpose, we proposed to use the two-timescale limit to analyze the training dynamics of the mean-field neural networks.
Then, we provided the linear convergence guarantee to the global optimum by showing the convexity of the limiting functional and derive the discretization error.
We also studied the generalization ability of the empirical risk minimizer
and proved that the feature learning is essential to obtain good generalization results for a union of multiple RKHSs.
Then, we showed that the kernel induced by the first layer moves to increase kernel and parameter alignment
and the intrinsic noise in labels induces bias towards the large degrees of freedom.
Finally, we proposed the label noise procedure to reduce the degrees of freedom and provided the global convergence guarantee.

\section*{Acknowledgements}
ST was partially supported by JST CREST (JPMJCR2015). TS was partially supported by JSPS KAKENHI (20H00576) and JST CREST (JPMJCR2115).


\bibliography{main}
\bibliographystyle{icml2024}

\newpage
\appendix
\onecolumn

\section{Auxiliary Lemmas}
\begin{lemma}[\citet{holley_logarithmic_1987}]\label{lem:holley-stroock}
    Assume that a probability distribution $p(w)$ satisfies the LSI with a constant $\alpha > 0$.
    For a bounded perturbation $B(w):\R^{d'}\to \R$, define $p'(w) = p(w) \exp(B(w)) / \E_p[\exp(B(w))]$.
    Then, $p'$ satisfies the LSI with a constant $\frac{\alpha}{\exp(4\norm{B}_\infty)}$.
\end{lemma}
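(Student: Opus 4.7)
The plan is to give the textbook Holley--Stroock comparison argument: transfer the LSI from $p$ to $p'$ by independently bounding the entropy and the Dirichlet energy under the Radon--Nikodym derivative $\dd p'/\dd p = e^{B}/Z$ with $Z = \E_p[e^B]$. The single most important ingredient I will use is the nonnegative variational representation of the entropy
\begin{equation*}
\mathrm{Ent}_\mu(f) = \inf_{t>0} \E_\mu\bigl[f\log(f/t) - f + t\bigr], \qquad f\geq 0,
\end{equation*}
whose integrand is pointwise nonnegative (it equals $t\,\phi(f/t)$ with $\phi(u)=u\log u - u + 1 \geq 0$). The nonnegativity is the key feature that lets me pull the perturbation $e^B$ out of the integral with a clean sup/inf bound, which is exactly why one loses only a constant factor rather than tracking the sign of the integrand.

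First I would record the elementary estimate $e^{-\|B\|_\infty} \leq Z \leq e^{\|B\|_\infty}$ and hence $1/Z \leq e^{\|B\|_\infty}$. Then, applied to an arbitrary $f\geq 0$,
\begin{equation*}
\mathrm{Ent}_{p'}(f) = \inf_{t>0} \frac{1}{Z}\E_p\!\left[e^{B}\bigl(f\log(f/t)-f+t\bigr)\right] \leq \frac{e^{\|B\|_\infty}}{Z}\inf_{t>0}\E_p\!\left[f\log(f/t)-f+t\right] \leq e^{2\|B\|_\infty}\,\mathrm{Ent}_p(f),
\end{equation*}
where the first inequality uses pointwise nonnegativity of the integrand together with $e^B \leq e^{\|B\|_\infty}$. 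Specializing to $f = g^2$ and applying the LSI for $p$ gives
\begin{equation*}
\mathrm{Ent}_{p'}(g^2) \leq e^{2\|B\|_\infty}\cdot \frac{2}{\alpha}\,\E_p[|\nabla g|^2].
\end{equation*}

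It remains to compare the Dirichlet forms. Writing $\dd p = (Z/e^{B})\dd p'$ pointwise,
\begin{equation*}
\E_p[|\nabla g|^2] = \E_{p'}\!\left[\tfrac{Z}{e^B}|\nabla g|^2\right] \leq Z\,e^{\|B\|_\infty}\,\E_{p'}[|\nabla g|^2] \leq e^{2\|B\|_\infty}\,\E_{p'}[|\nabla g|^2].
\end{equation*}
Chaining the two estimates yields $\mathrm{Ent}_{p'}(g^2) \leq \tfrac{2}{\alpha\,e^{-4\|B\|_\infty}}\,\E_{p'}[|\nabla g|^2]$, which is the LSI for $p'$ with constant $\alpha/\exp(4\|B\|_\infty)$ as claimed.

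There is essentially no hard step here: the only subtlety I want to make sure to get right is justifying the ``$\inf$ then sup'' bound that pulls $e^B$ out of the entropy, which relies on the pointwise nonnegativity of $t\phi(f/t)$ (so that $e^{B}\,t\phi(f/t) \leq e^{\|B\|_\infty} t\phi(f/t)$ pointwise); using the naive formula $f\log f - f\log\E[f]$ would not work since that integrand changes sign. Everything else is just bookkeeping of the factors $Z$ and $e^{\pm\|B\|_\infty}$, which combine as $e^{2\|B\|_\infty}$ on the entropy side and $e^{2\|B\|_\infty}$ on the Dirichlet side to give the total factor $e^{4\|B\|_\infty}$ stated in the lemma.
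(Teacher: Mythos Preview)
Your proof is correct and is precisely the standard Holley--Stroock comparison argument. The paper does not actually supply a proof of this lemma; it is stated as an auxiliary result with a citation to \citet{holley_logarithmic_1987} and then used in the proof of Lemma~\ref{lem:lsi}. So there is nothing to compare against, but what you wrote is the textbook proof and every step checks out: the variational formula with nonnegative integrand gives the $e^{2\|B\|_\infty}$ factor on the entropy side, the trivial density bound gives the $e^{2\|B\|_\infty}$ factor on the Dirichlet side, and they combine to the stated constant $\alpha/\exp(4\|B\|_\infty)$.
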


\begin{lemma}\label{lem:optimality-a}
    The optimal $i$-th second layer $a_\mu^{(i)}$ satisfies
    \begin{align*}
        a_\mu^{(i)}(w) = -\frac{1}{\bar \lambda_a} \E_\rho[\partial_1 l_i(f_i(x;a_\mu, \mu), y_i) h(x;w)].
    \end{align*}
    for any $w \in \R^{d'}$.
\end{lemma}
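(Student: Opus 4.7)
\textbf{Proof plan for Lemma~\ref{lem:optimality-a}.}

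The statement is a first-order (stationarity) condition for the strongly convex minimization problem defining $a_\mu$. Recall that, fixing $\mu$, the map $a\mapsto F(a,\mu)$ decouples across tasks in its $a$-dependence, and the $a^{(i)}$-subproblem is
\begin{align*}
\min_{a^{(i)}\in L^2(\mu)}\; J_i(a^{(i)}) \;:=\; \frac{1}{T}\E_\rho\!\left[l_i\!\left(\textstyle\int a^{(i)}(w)h(x;w)\,\dd\mu(w),\,y_i\right)\right] \;+\; \frac{\bar\lambda_a}{2T}\,\|a^{(i)}\|_{L^2(\mu)}^2 .
\end{align*}
Because $l_i$ is convex and the regularizer is strictly convex and coercive in $L^2(\mu)$, $J_i$ admits a unique minimizer $a_\mu^{(i)}$, so it suffices to derive its Euler--Lagrange equation.

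The plan is to compute the Gâteaux derivative of $J_i$ at $a_\mu^{(i)}$ in an arbitrary direction $\phi\in L^2(\mu)$ and set it to zero. Writing $f_i(x;a,\mu)=\int a^{(i)}(w)h(x;w)\dd\mu(w)$ and using the chain rule,
\begin{align*}
\langle DJ_i(a_\mu^{(i)}),\phi\rangle \;=\; \frac{1}{T}\E_\rho\!\left[\partial_1 l_i(f_i(x;a_\mu,\mu),y_i)\int \phi(w)h(x;w)\dd\mu(w)\right] + \frac{\bar\lambda_a}{T}\int a_\mu^{(i)}(w)\phi(w)\dd\mu(w).
\end{align*}
Differentiation under the integral is justified because $\sup_w|h(x;w)|\le1$ (Assumption~\ref{assumption:activation}) and, in either case of Assumption~\ref{assumption:l2-loss} or \ref{assumption:convex-loss}, $|\partial_1 l_i|$ is integrable against $\rho$ (for the squared loss it is bounded in terms of $|y_i|\le c_l$ and the output, which is itself bounded in $L^2(\rho_X)$ via Cauchy--Schwarz and $\|a_\mu^{(i)}\|_{L^2(\mu)}<\infty$). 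Fubini then lets us swap $\E_\rho$ with the integral against $\mu$, giving
\begin{align*}
\langle DJ_i(a_\mu^{(i)}),\phi\rangle \;=\; \frac{1}{T}\int \phi(w)\Bigl\{\E_\rho[\partial_1 l_i(f_i(x;a_\mu,\mu),y_i)h(x;w)] + \bar\lambda_a\, a_\mu^{(i)}(w)\Bigr\}\,\dd\mu(w).
\end{align*}

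Since this vanishes for every $\phi\in L^2(\mu)$, the bracketed expression is zero $\mu$-almost everywhere, yielding $a_\mu^{(i)}(w)=-\bar\lambda_a^{-1}\E_\rho[\partial_1 l_i(f_i(x;a_\mu,\mu),y_i)h(x;w)]$ for $\mu$-a.e.\ $w$. To upgrade to \emph{every} $w\in\R^{d'}$, observe that the right-hand side is a well-defined bounded function of $w$ (again by $|h|\le1$ and the bound on $\partial_1 l_i$), so we simply take it as the canonical representative of $a_\mu^{(i)}$, which does not change its equivalence class in $L^2(\mu)$ nor the value of $f_i(x;a_\mu,\mu)$. The only non-routine piece is the Fubini/differentiation-under-the-integral justification, which is immediate from Assumption~\ref{assumption:activation} together with the loss hypotheses; there is no optimization-theoretic obstacle since strong convexity guarantees a unique minimizer and the functional is defined on a Hilbert space.
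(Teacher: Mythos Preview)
Your proposal is correct and follows essentially the same approach as the paper: both derive the Euler--Lagrange (first-order optimality) condition for the strongly convex subproblem in $a^{(i)}$, and the paper simply writes $\pdv{L}{a^{(i)}}(w)+\lambda\partial_{a_i}r\cdot\mu(w)=0$ in one line without spelling out the Gâteaux derivative or the Fubini/a.e.-to-everywhere upgrade. Your version is more careful about these measure-theoretic details and about the choice of representative in $L^2(\mu)$, but the underlying argument is identical.
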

\begin{proof}
    From the optimality condition on $a_\mu^{(i)}$, it holds that
    \begin{align*}
        \pdv{L}{a_\mu^{(i)}}\/(a^{(i)}, \mu)(w) + \lambda \partial_{a_i} r(a_\mu(w), w)\mu(w) = \E[\partial_1 l_i(f_i(x;a_\mu^{(i)}, \mu), y_i) h(x;w)]\mu(w) + \bar\lambda_a a_\mu^{(i)}(w)\mu(w) = 0.
    \end{align*}
    Thus, we have
    \begin{align*}
        a_\mu^{(i)}(w) = -\frac{1}{\bar \lambda_a} \E_\rho[\partial_1 l_i(f_i(x;a_\mu, \mu), y) h(x;w)],
    \end{align*}
    which completes the proof.
\end{proof}
\begin{lemma}\label{lem:a-l2}
    Define $T:L^2(\mu) \to L^2(\rho_X)$ by
    \begin{align*}
        T(a) = \int a(w) h(x;w) \dd \mu(w)
    \end{align*}
    and its adjoint operator $T^*:L^2(\rho_X) \to L^2(\mu)$ by
    \begin{align*}
        T^*(f) = \int f(x) h(x;w) \dd \rho(x).
    \end{align*}
    For $l^2$-loss, the optimal $i$-th second layer $a_\mu^{(i)}$ has the following explicit formula.
    \begin{align*}
        a_\mu^{(i)}(w) & = (T^*T + \bar \lambda_a \id)^{-1} T^* f^\circ_i      \\
                       & = T^*(TT^* + \bar \lambda_a \id)^{-1} \bar f^\circ_i,
    \end{align*}
    where $f^\circ_i(x') := \E_\rho[y_i \mid x = x']$ is the conditional expectation of $y_i$ given $x$.
    In addition, if $\rho_X$ is the empirical distribution $\frac{1}{n}\sum_{i=1}^n \delta_{x_i}$,
    then, $a_\mu^{(i)}$ is written by
    \begin{align*}
        a_\mu^{(i)}(w) = h(X;w)^\top (\hat \Sigma_{\mu} + n \bar \lambda_a I)^{-1} Y_i.
    \end{align*}
\end{lemma}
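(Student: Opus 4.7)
The plan is to specialize the optimality condition of Lemma~\ref{lem:optimality-a} to the squared loss, recognize the resulting linear equation as a normal equation for the operator $T$, and then invoke the push-through identity to obtain the second form. The empirical-distribution formula then follows by evaluating $T$ and $T^*$ explicitly.

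\textbf{Step 1: Reduce the optimality condition to a linear equation.} For squared loss $l_i(z,y)=\tfrac12(z-y)^2$ we have $\partial_1 l_i(z,y)=z-y$. Substituting into Lemma~\ref{lem:optimality-a} gives
\begin{align*}
\bar\lambda_a\, a_\mu^{(i)}(w) = -\E_\rho\!\bigl[(f_i(x;a_\mu,\mu)-y_i)\,h(x;w)\bigr].
\end{align*}
By the tower property, $\E_\rho[y_i h(x;w)]=\E_{\rho_X}[f_i^\circ(x) h(x;w)]=T^*f_i^\circ(w)$, and by definition $f_i(x;a_\mu,\mu)=Ta_\mu^{(i)}(x)$, so $\E_\rho[f_i(x;a_\mu,\mu)h(x;w)]=T^*Ta_\mu^{(i)}(w)$. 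Rearranging yields the normal equation
\begin{align*}
(T^*T+\bar\lambda_a\,\id)\,a_\mu^{(i)} = T^*f_i^\circ.
\end{align*}
Since $T^*T+\bar\lambda_a\,\id$ is a strictly positive self-adjoint operator on $L^2(\mu)$, it is invertible, giving the first form $a_\mu^{(i)}=(T^*T+\bar\lambda_a\,\id)^{-1}T^*f_i^\circ$.

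\textbf{Step 2: Push-through identity.} For the second form, one uses the algebraic identity $T^*(TT^*+\bar\lambda_a\,\id)=(T^*T+\bar\lambda_a\,\id)T^*$, which follows by expanding both sides. Multiplying on the left by $(T^*T+\bar\lambda_a\,\id)^{-1}$ and on the right by $(TT^*+\bar\lambda_a\,\id)^{-1}$ (the latter being invertible on $L^2(\rho_X)$ by the same positivity argument) gives $(T^*T+\bar\lambda_a\,\id)^{-1}T^*=T^*(TT^*+\bar\lambda_a\,\id)^{-1}$, yielding the second expression.

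\textbf{Step 3: Empirical-distribution specialization.} When $\rho_X=\tfrac1n\sum_{j=1}^n\delta_{x_j}$, identify $L^2(\rho_X)$ with $\R^n$ equipped with the inner product $\langle f,g\rangle=\tfrac1n f^\top g$. Then $T^*f(w)=\tfrac{1}{n}h(X;w)^\top f$ and a direct computation shows $TT^*$ acts on $\R^n$ as multiplication by $\tfrac1n\hat\Sigma_\mu$. Substituting into the second form and simplifying $(\tfrac1n\hat\Sigma_\mu+\bar\lambda_a I)^{-1}=n(\hat\Sigma_\mu+n\bar\lambda_a I)^{-1}$ cancels the prefactor $\tfrac1n$ from $T^*$, producing $a_\mu^{(i)}(w)=h(X;w)^\top(\hat\Sigma_\mu+n\bar\lambda_a I)^{-1}Y_i$ (using $f_i^\circ(x_j)=(Y_i)_j$ a.s.\ under the empirical measure).

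\textbf{Main obstacle.} None of the steps is technically deep; the only bookkeeping hazard is keeping track of the $1/n$ factors hidden inside the $L^2(\rho_X)$ inner product when specializing to the empirical measure, and making sure that the push-through identity is applied to genuinely invertible operators rather than formally. Both are easy to handle, so I expect the proof to be short and essentially computational.
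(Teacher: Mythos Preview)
Your proposal is correct and follows essentially the same route as the paper: both derive the normal equation $(T^*T+\bar\lambda_a\,\id)a_\mu^{(i)}=T^*f_i^\circ$ from the first-order optimality condition using the tower property, invert, and apply the push-through identity. Your Step~3 additionally supplies the empirical-distribution specialization, which the paper states in the lemma but omits from its written proof.
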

\begin{proof}
    For $l^2$-loss, the optimality condition on $a$ is given by
    \begin{align*}
        \E_\rho[(T(a^{(i)}_\mu) - y)h(x;w)] + \bar \lambda_a a^{(i)}_\mu(w) = 0.
    \end{align*}
    Using $\E_\rho[y_ih(x;w)] = \E_{\rho_X}[f^\circ_i(x) h(x;w)] = T^* f^\circ_i$, we have
    \begin{align*}
        T^*T(a_\mu^{(i)}) + \bar \lambda_a a_\mu^{(i)} = T^* f^\circ_i.
    \end{align*}
    Since $\bar \lambda_a > 0$ and $(T^*T + \bar \lambda_a \id)$ is invertible,
    we arrive at
    \begin{align*}
        a^{(i)}_\mu = (T^*T + \bar \lambda_a \id)^{-1} T^* f^\circ_i.
    \end{align*}
    Since $(T^*T + \bar \lambda_a \id)T^* = T^*(TT^* + \bar \lambda_a \id)$,
    we have $T^*(TT^* + \bar \lambda_a \id)^{-1} = (T^*T + \bar \lambda_a \id)^{-1}T^*$,
    and thus $a_\mu^{(i)} = T^*(TT^* + \bar \lambda_a \id)^{-1} f^\circ_i$.
\end{proof}

\begin{lemma}\label{lem:a-boundedness-Lipschitzness}
    Assume that $l_i$ satisfies Assumption~\ref{assumption:convex-loss} or~\ref{assumption:l2-loss}.
    Then, for any $w \in \R^{d'}$, $a_\mu^{(i)}(w)$ satisfies the following conditions:
    \begin{itemize}
        \item $\abs{a_\mu^{(i)}(w)} \leq \frac{c_l}{\bar \lambda_1}=:B_a$
        \item $\norm{\grad_w a_\mu^{(i)}(w)}_2 \leq \frac{c_lc_R}{\bar \lambda_1}=:R_a$
        \item $\norm{\grad_w^2 a_\mu^{(i)}(w)}_{\ope} \leq \frac{c_lc_L}{\bar \lambda_1}=:L_a$
    \end{itemize}
\end{lemma}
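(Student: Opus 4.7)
The plan is to start from the fixed-point formula for $a_\mu^{(i)}$ supplied by Lemma~\ref{lem:optimality-a},
\begin{align*}
    a_\mu^{(i)}(w) = -\tfrac{1}{\bar\lambda_a}\,\E_\rho\!\left[\partial_1 l_i(f_i(x;a_\mu,\mu),y_i)\,h(x;w)\right],
\end{align*}
and observe that the right-hand side depends on $w$ only through $h(x;w)$: neither the scalar $\partial_1 l_i(f_i(x;a_\mu,\mu),y_i)$ nor $\rho$ involves $w$. Under the uniform bounds of Assumption~\ref{assumption:activation} we may differentiate under the expectation to get
\begin{align*}
    \nabla_w a_\mu^{(i)}(w) &= -\tfrac{1}{\bar\lambda_a}\,\E_\rho\!\left[\partial_1 l_i(f_i(x;a_\mu,\mu),y_i)\,\nabla_w h(x;w)\right],\\
    \nabla_w^2 a_\mu^{(i)}(w) &= -\tfrac{1}{\bar\lambda_a}\,\E_\rho\!\left[\partial_1 l_i(f_i(x;a_\mu,\mu),y_i)\,\nabla_w^2 h(x;w)\right].
\end{align*}
So all three bounds reduce to controlling the random scalar $Z:=\partial_1 l_i(f_i(x;a_\mu,\mu),y_i)$ against the three quantities $|h|$, $\|\nabla_w h\|_2$, and $\|\nabla_w^2 h\|_{\ope}$.

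Under Assumption~\ref{assumption:convex-loss}, $|Z|\le c_l$ pointwise. Pulling this bound out of the expectation and applying Jensen's inequality to $\E_{\rho_X}[\|\nabla_w h\|_2]\le (\E_{\rho_X}[\|\nabla_w h\|_2^2])^{1/2}\le c_R$, and similarly for the Hessian, yields immediately the three inequalities $|a_\mu^{(i)}(w)|\le c_l/\bar\lambda_a$, $\|\nabla_w a_\mu^{(i)}(w)\|_2\le c_l c_R/\bar\lambda_a$, and (by testing the symmetric matrix against a unit vector $v$ and using $|v^\top\nabla_w^2 h\,v|\le \|\nabla_w^2 h\|_{\ope}$) $\|\nabla_w^2 a_\mu^{(i)}(w)\|_{\ope}\le c_l c_L/\bar\lambda_a$.

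Under Assumption~\ref{assumption:l2-loss}, one has $Z=f_i(x;a_\mu,\mu)-y_i$, which is \emph{not} pointwise bounded, so a uniform bound on $Z$ is unavailable and I will use an $L^2(\rho)$ bound instead. Since $a_\mu$ minimizes $F(\cdot,\mu)$, comparing with $a\equiv 0$ gives
\begin{align*}
    \tfrac{1}{2}\|f_i(\cdot;a_\mu,\mu)-y_i\|_{L^2(\rho)}^2 \;\le\; \tfrac{1}{2}\|y_i\|_{L^2(\rho)}^2 \;\le\; c_l^2/2.
\end{align*}
Applying Cauchy--Schwarz inside the three integral representations above with this $L^2$ bound, together with $\|h(\cdot;w)\|_{L^2(\rho_X)}\le 1$, $\|\nabla_w h(\cdot;w)\|_{L^2(\rho_X)}\le c_R$, and $\|\,\|\nabla_w^2 h(\cdot;w)\|_{\ope}\,\|_{L^2(\rho_X)}\le c_L$ from Assumption~\ref{assumption:activation}, yields the same three bounds.

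The only real subtlety is the $l^2$ case, where the naive pointwise estimate on $\partial_1 l_i$ fails; the key idea is that one needs only a mean-square bound on $Z$, which is free from a one-line sub-optimality comparison against $a\equiv 0$. The rest is routine differentiation under the expectation and Cauchy--Schwarz.
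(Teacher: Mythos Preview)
Your proof is correct. The treatment under Assumption~\ref{assumption:convex-loss} coincides with the paper's. For the squared-loss case (Assumption~\ref{assumption:l2-loss}), however, you take a genuinely different route: the paper invokes the closed-form resolvent representation $a_\mu^{(i)}=T^*(TT^*+\bar\lambda_a\id)^{-1}f_i^\circ$ from Lemma~\ref{lem:a-l2} and bounds $\|(TT^*+\bar\lambda_a\id)^{-1}f_i^\circ\|_{L^2(\rho_X)}\le c_l/\bar\lambda_a$ via the operator norm of the resolvent, then pairs this against $h$, $\nabla_w h$, $\nabla_w^2 h$ by Cauchy--Schwarz. You instead stay with the first-order condition of Lemma~\ref{lem:optimality-a} and control the residual $Z=f_i(\cdot;a_\mu,\mu)-y_i$ in $L^2(\rho)$ by the one-line sub-optimality comparison with $a\equiv 0$. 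The two are equivalent---indeed the paper's $\alpha=(TT^*+\bar\lambda_a\id)^{-1}f_i^\circ$ is exactly $-\bar\lambda_a^{-1}\E[Z\mid x]$---but your argument is more elementary in that it bypasses Lemma~\ref{lem:a-l2} and the operator calculus entirely; the paper's route, on the other hand, makes the resolvent structure explicit, which it reuses in later computations such as Lemma~\ref{lem:l2-functional}.
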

\begin{proof}
    In the case of $\abs{\partial_1 l_i(z, y)} \leq c_l$, Lemma~\ref{lem:optimality-a} yields
    \begin{align*}
        \abs{a_\mu^{(i)}(w)} & = \frac{1}{\bar \lambda_a} \abs{\E_\rho[\partial_1 l_i(f_i(x;a_\mu, \mu), y) h(x;w)]} \\
                             & = \frac{c_l}{\bar \lambda_a}.
    \end{align*}
    In the case of $l(z, y) = \frac{1}{2}(z - y)^2$ and $\abs{y_i} \leq c_l$, Lemma~\ref{lem:a-l2} yields
    \begin{align*}
        \abs{a_\mu^{(i)}(w)} & = \abs{T^*(TT^* + \bar \lambda_a \id)^{-1} \bar y_i}                                       \\
                             & = \abs{\int h(x;w) [(TT^* + \bar \lambda_a \id)^{-1} \bar y_i](x)\dd \rho(x)}              \\
                             & \leq \norm{h(x;w)}_{L^2(\rho)}\norm{(TT^* + \bar \lambda_a \id)^{-1} \bar y_i}_{L^2(\rho)} \\
                             & \leq \norm{(TT^* + \bar \lambda_a \id)^{-1} \bar y_i}_{L^2(\rho)}.
    \end{align*}
    Here, the last inequality follows from $\abs{h(x;w)} \leq 1$.
    Since the operator norm of $(TT^* + \bar \lambda_a \id)^{-1}$ is bounded by $1 / \bar \lambda_a$, we have
    \begin{align*}
        \norm{(TT^* + \bar \lambda_a \id)^{-1} \bar y_i}_{L^2(\rho)} & \leq \frac{1}{\bar \lambda_a} \norm{\bar y_i}_{L^2(\rho)} \\
                                                                     & \leq \frac{c_l}{\bar \lambda_a}.
    \end{align*}
    Thus, we have the first assertion.

    In a similar way, we have
    \begin{align*}
        \norm{\grad_w a_\mu^{(i)}(w)}_2 & = \frac{1}{\bar \lambda_a} \norm{\E_\rho[\partial_1 l_i(f_i(x;a_\mu, \mu), y) \grad_w h(x;w)]} \\
                                        & \leq \frac{c_l}{\bar \lambda_a}\E[\norm{\grad_w h(x;w)}]                                       \\
                                        & \leq \frac{c_lc_R}{\bar \lambda_a},
    \end{align*}
    in the case of $\abs{\partial_1 l_i(z, y)} \leq c_l$.
    In addition, for a squared loss, we have
    \begin{align*}
        \norm{\grad a_\mu^{(i)}(w)}_{2} & = \norm{\int h(x;w) [(TT^* + \bar \lambda_a \id)^{-1} \bar y_i](x)\dd \rho(x)}_2                            \\
                                        & \leq \norm{\norm{\grad_w h(x;w)}_2}_{L^2(\rho)}\norm{(TT^* + \bar \lambda_a \id)^{-1} \bar y_i}_{L^2(\rho)} \\
                                        & \leq \frac{c_lc_R}{\bar \lambda_a}.
    \end{align*}
    Thus, we have the second assertion.

    Furthermore, we have
    \begin{align*}
        \norm{\grad_w^2 a_\mu^{(i)}(w)}_{\ope} & = \frac{1}{\bar \lambda_a} \norm{\E_\rho[\partial_1 l_i(f_i(x;a_\mu, \mu), y) \grad_w^2 h(x;w)]}_{\ope} \\
                                               & \leq \frac{c_l}{\bar \lambda_a}\E[\norm{\grad_w^2 h(x;w)}_{\ope}]                                       \\
                                               & \leq \frac{c_lc_L}{\bar \lambda_a},
    \end{align*}
    in the case of $\abs{\partial_1 l_i(z, y)} \leq c_l$.
    On the other hand, in the case of $l_i$ is a squared loss, we have
    \begin{align*}
        \norm{\grad_w^2 a_\mu^{(i)}(w)}_{\ope} & = \norm{\int \grad_w^2 h(x;w) [(TT^* + \bar \lambda_a \id)^{-1} \bar y_i](x)\dd \rho(x)}_{\ope}                    \\
                                               & \leq \norm{\norm{\grad_w^2 h(x;w)}_{\ope}}_{L^2(\rho)}\norm{(TT^* + \bar \lambda_a \id)^{-1} \bar y_i}_{L^2(\rho)} \\
                                               & \leq \frac{c_lc_L}{\bar \lambda_a}.
    \end{align*}
    This completes the proof.
\end{proof}
\begin{lemma}\label{lem:first-variation}
    Assume that each $l_i$ satisfies Assumption~\ref{assumption:convex-loss} or~\ref{assumption:l2-loss}.
    Then, we have
    \begin{align*}
        \abs{\fdv{U}{\mu}\/(\mu)(w)}                               & \leq \frac{\bar \lambda_a}{2} B_a^2   \\
        \norm{\grad_w \fdv{U}{\mu}\/(\mu)(w)}_2                    & \leq \bar \lambda_a R_aB_a,           \\
        \norm{\grad_w\grad_{w}^\top \fdv{U}{\mu}\/(\mu)(w)}_{\ope} & \leq \bar \lambda_a (R_a^2 + B_aL_a).
    \end{align*}
    for any $w \in \R^{d'}$.
\end{lemma}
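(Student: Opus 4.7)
The plan is to reduce everything to the explicit formula for the first variation that Theorem~\ref{thm:convexity} already supplies, and then apply the pointwise bounds on $a_\mu$ and its derivatives from Lemma~\ref{lem:a-boundedness-Lipschitzness} term-by-term. The one preliminary adjustment is that Theorem~\ref{thm:convexity} gives $\fdv{G}{\mu}$, not $\fdv{U}{\mu}$. Recalling that $G(\mu) = U(\mu) + \frac{\bar\lambda_w}{2}\E_\mu[\|w\|_2^2]$, and noting that the first variation of the second summand is simply $\frac{\bar\lambda_w}{2}\|w\|_2^2$, subtraction yields
\[
\fdv{U}{\mu}\/(\mu)(w) \;=\; -\frac{\bar\lambda_a}{2T}\|a_\mu(w)\|_2^2 \;=\; -\frac{\bar\lambda_a}{2T}\sum_{i=1}^T \bigl(a_\mu^{(i)}(w)\bigr)^2.
\]

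From here the $C^0$ bound is immediate: since $|a_\mu^{(i)}(w)| \leq B_a$ by Lemma~\ref{lem:a-boundedness-Lipschitzness}, summing over $i$ gives $\|a_\mu(w)\|_2^2 \leq T B_a^2$, so $|\fdv{U}{\mu}(\mu)(w)| \leq \frac{\bar\lambda_a}{2}B_a^2$.

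For the gradient, I would differentiate under the sum to obtain
\[
\grad_w \fdv{U}{\mu}\/(\mu)(w) \;=\; -\frac{\bar\lambda_a}{T}\sum_{i=1}^T a_\mu^{(i)}(w)\,\grad_w a_\mu^{(i)}(w),
\]
apply the triangle inequality, and bound each summand by $B_a R_a$ using the first two items of Lemma~\ref{lem:a-boundedness-Lipschitzness}; the factor $1/T$ then cancels the sum and yields $\bar\lambda_a B_a R_a$. For the Hessian, the product rule gives
\[
\grad_w\grad_w^\top \fdv{U}{\mu}\/(\mu)(w) \;=\; -\frac{\bar\lambda_a}{T}\sum_{i=1}^T \Bigl[\grad_w a_\mu^{(i)}(w)\,\grad_w a_\mu^{(i)}(w)^\top + a_\mu^{(i)}(w)\,\grad_w^2 a_\mu^{(i)}(w)\Bigr];
\]
the operator norm of the rank-one part is $\|\grad_w a_\mu^{(i)}(w)\|_2^2 \leq R_a^2$ and of the second part is $|a_\mu^{(i)}(w)|\cdot\|\grad_w^2 a_\mu^{(i)}(w)\|_{\ope} \leq B_a L_a$, so a triangle-inequality bound on the operator norm produces $\bar\lambda_a(R_a^2 + B_a L_a)$.

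There is no real obstacle: the only thing to be careful about is justifying that one may move $\grad_w$ inside the definition of $a_\mu^{(i)}$, but this is already guaranteed by Lemma~\ref{lem:a-boundedness-Lipschitzness}, which provides uniform pointwise bounds on $a_\mu^{(i)}$, $\grad_w a_\mu^{(i)}$, and $\grad_w^2 a_\mu^{(i)}$ under either Assumption~\ref{assumption:convex-loss} or~\ref{assumption:l2-loss}. So the proof is essentially a one-line formula followed by three routine pointwise estimates.
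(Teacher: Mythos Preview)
Your proposal is correct and follows essentially the same approach as the paper: derive the formula $\fdv{U}{\mu}(\mu)(w) = -\frac{\bar\lambda_a}{2T}\sum_{i=1}^T a_\mu^{(i)}(w)^2$ from Theorem~\ref{thm:convexity} and then apply the pointwise bounds of Lemma~\ref{lem:a-boundedness-Lipschitzness}. Your write-up is in fact more explicit than the paper's, which simply states the formula and the three resulting bounds without spelling out the gradient and Hessian computations or the subtraction of the $\frac{\bar\lambda_w}{2}\|w\|_2^2$ term.
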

\begin{proof}
    From Theorem~\ref{thm:convexity}, we have
    \begin{align*}
        \fdv{U(\mu)}{\mu}\/(w) = -\frac{\bar \lambda_a}{2T} \sum_{i=1}^T a^{(i)}_\mu(w)^2.
    \end{align*}
    Thus, Lemma~\ref{lem:a-boundedness-Lipschitzness} yields
    \begin{align*}
        \abs{\fdv{U}{\mu}\/(\mu)(w)}                                & \leq \frac{\bar \lambda_a}{2} B_a^2,   \\
        \norm{\grad_w \fdv{U}{\mu}\/(\mu)(w)}_2                     & \leq \bar \lambda_a B_a R_a,           \\
        \norm{\grad_w \grad_{w}^\top \fdv{U}{\mu}\/(\mu)(w)}_{\ope} & \leq \bar \lambda_a (R_a^2 + B_a L_a),
    \end{align*}
    which completes the proof.
\end{proof}
\begin{lemma}\label{lem:second-variation}
    Assume that each $l_i$ satisfies Assumption~\ref{assumption:convex-loss} or~\ref{assumption:l2-loss}.
    We have
    \begin{align*}
        \abs{\fdv[2]{U}{\mu}\/(\mu)(w, w')}                           & \leq B_a^2    \\
        \norm{\grad_w \fdv[2]{U}{\mu}\/(\mu)(w, w')}_2                & \leq 2R_aB_a, \\
        \norm{\grad_w\grad_{w'} \fdv[2]{U}{\mu}\/(\mu)(w, w')}_{\ope} & \leq 4R_a^2.
    \end{align*}
    for any $w, w' \in \R^{d'}$.
\end{lemma}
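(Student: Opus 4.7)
The plan is to differentiate the first-variation formula once more in $\mu$, reducing the computation to the known quantities $a_\mu^{(i)}$ (whose pointwise, gradient, and Hessian bounds are supplied by Lemma~\ref{lem:a-boundedness-Lipschitzness}). Starting from $\fdv{U}{\mu}\/(\mu)(w) = -\frac{\bar\lambda_a}{2T}\sum_i a_\mu^{(i)}(w)^2$ (Theorem~\ref{thm:convexity} / Lemma~\ref{lem:first-variation}), the chain rule yields
\begin{align*}
\fdv[2]{U}{\mu}\/(\mu)(w,w') \;=\; -\frac{\bar\lambda_a}{T}\sum_{i=1}^T a_\mu^{(i)}(w)\,\fdv{a_\mu^{(i)}(w)}{\mu}\/(\mu)(w'),
\end{align*}
so the whole problem reduces to computing and bounding the implicit derivative $\fdv{a_\mu^{(i)}(w)}{\mu}\/(w')$.

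To obtain this derivative I would differentiate the optimality condition of Lemma~\ref{lem:optimality-a} in $\mu$. Because $f_i(x;a_\mu,\mu) = \int a_\mu^{(i)}(w)h(x;w)\,d\mu(w)$ depends on $\mu$ both through the integrating measure and through $a_\mu^{(i)}$ itself, a careful chain-rule argument produces a self-consistent linear equation on $L^2(\mu)$ of the form
\begin{align*}
\bigl(\bar\lambda_a I + T_\mu^* M_i T_\mu\bigr)\,\fdv{a_\mu^{(i)}}{\mu}\/(\cdot)(w') \;=\; -a_\mu^{(i)}(w')\,T_\mu^*\bigl(M_i\, h(\cdot;w')\bigr),
\end{align*}
where $M_i$ denotes multiplication by $\partial_1^2 l_i(f_i,y_i)$, a nonnegative function bounded by $1$ (by Assumption~\ref{assumption:convex-loss}, or identically $1$ under Assumption~\ref{assumption:l2-loss}). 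Since $T_\mu^* M_i T_\mu$ is PSD, the operator on the left has spectrum in $[\bar\lambda_a,\infty)$. Using the standard push-through identity $(\bar\lambda_a + T_\mu^* M_i T_\mu)^{-1} T_\mu^* M_i^{1/2} = T_\mu^* M_i^{1/2} (\bar\lambda_a + M_i^{1/2} T_\mu T_\mu^* M_i^{1/2})^{-1}$, the solution can be recast in a symmetric form
\begin{align*}
\fdv{a_\mu^{(i)}(w)}{\mu}\/(w') \;=\; -a_\mu^{(i)}(w')\,\zeta_w^{(i)}(w'), \qquad \zeta_w^{(i)}(w') \;:=\; \bigl\langle h(\cdot;w),\; M_i^{1/2}\bigl(\bar\lambda_a + M_i^{1/2} T_\mu T_\mu^* M_i^{1/2}\bigr)^{-1} M_i^{1/2}\, h(\cdot;w')\bigr\rangle_{L^2(\rho)},
\end{align*}
which is manifestly symmetric in $(w,w')$.

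Substituting, the second variation becomes $\fdv[2]{U}{\mu}\/(w,w') = \frac{\bar\lambda_a}{T}\sum_i a_\mu^{(i)}(w) a_\mu^{(i)}(w')\,\zeta_w^{(i)}(w')$. Cauchy--Schwarz on $L^2(\rho)$, combined with $\|h\|_\infty\leq 1$, $\|M_i\|_\infty\leq 1$, and the operator-norm bound $1/\bar\lambda_a$ on the resolvent, gives $|\zeta_w^{(i)}(w')|\leq 1/\bar\lambda_a$; using $|a_\mu^{(i)}|\leq B_a$ (Lemma~\ref{lem:a-boundedness-Lipschitzness}) then yields $|\fdv[2]{U}{\mu}\/(w,w')|\leq B_a^2$. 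For the gradient bounds I would differentiate the product expression and pull $\nabla_w$ inside the $L^2(\rho)$ inner product defining $\zeta_w^{(i)}$, so that it lands on $\nabla_w h(\cdot;w)$, whose $L^2(\rho)$-norm is at most $c_R$ (Assumption~\ref{assumption:activation}). Exploiting the identity $R_a = B_a c_R$, the two terms produced by $\nabla_w$ each contribute $R_a B_a$ (yielding $2R_a B_a$) and the four terms produced by $\nabla_w \nabla_{w'}$ each contribute $R_a^2$ (yielding $4R_a^2$).

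The main obstacle is the implicit dependence of $a_\mu^{(i)}$ on $\mu$: a naive differentiation produces an equation on $L^2(\mu)$ whose resolvent does not admit a direct pointwise bound. The push-through identity to the $L^2(\rho)$ picture is the workaround, since $\|h\|_\infty\leq 1$ then lets every pointwise evaluation be controlled by an $L^2(\rho)$ Cauchy--Schwarz against the self-adjoint PSD resolvent.
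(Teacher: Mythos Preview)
Your proposal is correct and follows essentially the same route as the paper. Both proofs differentiate the first-variation formula, implicitly differentiate the optimality condition on $a_\mu^{(i)}$ to obtain a linear equation $(\bar\lambda_a I + T^*\Lambda_i T)\,\fdv{a_\mu^{(i)}}{\mu}(w') = -a_\mu^{(i)}(w')\,T^*\Lambda_i h(\cdot;w')$, push through to the $L^2(\rho)$ resolvent $(\bar\lambda_a + \Lambda_i^{1/2}TT^*\Lambda_i^{1/2})^{-1}$, and finish with Cauchy--Schwarz against this self-adjoint PSD operator. The only cosmetic difference is packaging: the paper bundles $a_\mu^{(i)}(w)\,\Lambda_i^{1/2}h(\cdot;w)$ into a single $L^2(\rho)$-valued map $A^{(i)}(w)$ and bounds $\|A^{(i)}(w)\|_{L^2(\rho)}\leq B_a$ and $\|\nabla_w A^{(i)}(w)\|_{L^2(\rho)}\leq 2R_a$ directly, so each of the three inequalities is a single Cauchy--Schwarz step, whereas you keep the factors $a_\mu^{(i)}(w)$, $a_\mu^{(i)}(w')$, $\zeta_w^{(i)}(w')$ separate and apply the product rule (two terms for $\nabla_w$, four for $\nabla_w\nabla_{w'}$), relying on $R_a = B_a c_R$ to make all contributions match.
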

\begin{proof}
    Let $\bar l_i''(x') := \E_{\rho}[\partial_1^2 l_i(f(x), y_i) \mid x = x']$.
    Define $\Lambda, \Lambda^{1/2}: L^2(\rho) \to L^2(\rho)$ by
    \begin{align*}
        \Lambda_i(f)(x)       & = f(x) \bar l_i''(x),       \\
        \Lambda_i^{1/2}(f)(x) & = f(x) \bar l_i''(x)^{1/2},
    \end{align*}
    and $A^{(i)}(w) \in L^2(\rho)$ by
    \begin{align*}
        A^{(i)}(w)(x) = a^{(i)}_\mu(w)h(x;w)\bar l''(x)^{1/2}
    \end{align*}
    for a given $w \in \R^{d'}$.
    Note that $\Lambda, \Lambda^{1/2}. A$ are well-defined since $\bar l_i''(x) \geq 0$ from the convexity of $l_i$ with respect to the first argument.

    The second variation of $U(\mu)$ is given by
    \begin{align*}
        \fdv{}{\mu}\fdv{U(\mu)}{\mu}\qty(w, w') & = - \frac{\bar \lambda_a}{T} \sum_{i=1}^T a^{(i)}_\mu(w) \fdv{a^{(i)}_\mu(w)}{\mu}\/(w').
    \end{align*}
    Taking the first variation of the both sides of the optimality condition on $a^{(i)}_\mu(w)$ for a given $w$,
    we have
    \begin{align*}
         & \E_\rho[l_i''(f(x;a_\mu, \mu), y) \qty(a_\mu^{(i)}(w')h(x;w') + \int \fdv{a^{(i)}_\mu(w'')}{\mu}\/(w' ) h(x;w'') \dd \mu(w''))h(x;w)] + \bar \lambda_a \fdv{a_\mu^{(i)}(w)}{\mu}\/(w') \\
         & \quad = \qty[(T^* \Lambda_i T + \bar \lambda_a \id)\fdv{a_\mu^{(i)}(\cdot)}{\mu}\/(w')](w) + \qty[T^*\Lambda_i^{1/2}A^{(i)}(w')](w)                                                    \\
         & \quad = 0.
    \end{align*}
    Thus, we obtain
    \begin{align*}
        \fdv{}{\mu}\fdv{U(\mu)}{\mu} & = -\frac{\bar \lambda_a}{T} \sum_{i=1}^T a_\mu^{(i)}(w) \fdv{a_\mu^{(i)}(w)}{\mu}\/(w')                                                                                              \\
                                     & = -\frac{\bar \lambda_a}{T} \sum_{i=1}^T a_\mu^{(i)}(w)(T^*\Lambda_i T + \bar \lambda_a \id)^{-1}\qty[T^*\Lambda_i^{1/2}A^{(i)}(w')](w)                                              \\
                                     & = -\frac{\bar \lambda_a}{T} \sum_{i=1}^T a_\mu^{(i)}(w)\qty[T^*\Lambda_i^{1/2}(\Lambda^{1/2}TT^*\Lambda_i^{1/2} + \bar \lambda_a \id)^{-1}A^{(i)}(w')](w)                            \\
                                     & = -\frac{\bar \lambda_a}{T} \sum_{i=1}^T a_\mu^{(i)}(w) \int \qty[(\Lambda_i^{1/2}TT^*\Lambda_i^{1/2} + \bar \lambda_a \id)^{-1}A^{(i)}(w')](x) h(x;w) \bar l''(x)^{1/2} \dd \rho(x) \\
                                     & = -\frac{\bar \lambda_a}{T} \sum_{i=1}^T \int \qty[(\Lambda_i^{1/2}TT^*\Lambda_i^{1/2} + \bar \lambda_a \id)^{-1}A(w')](x) A(w)(x) \dd \rho(x)                                       \\
                                     & = -\frac{\bar \lambda_a}{T} \sum_{i=1}^T \langle A^{(i)}(w), (\Lambda_i^{1/2}TT^*\Lambda_i^{1/2} + \bar \lambda_a \id)^{-1}A^{(i)}(w')\rangle
    \end{align*}
    The second equality follows from the equality $A(A^*A + \id)^{-1} = (AA^* + \id)^{-1}A$ for any operator $A$ such that $(A^*A + \id), (AA^* + \id)$ are invertible.

    First, we bound $\norm{\norm{\grad_w A^{(i)}(w)}_2}_{L^2(\rho)}$ and $\norm{A^{(i)}(w)}_{L^2(\rho)}$.
    From Lemma~\ref{lem:a-boundedness-Lipschitzness}, we have
    \begin{align*}
        \norm{\norm{\grad_w A^{(i)}(w)}_2}_{L^2(\rho)} & = \norm{\norm{\grad_w a_\mu^{(i)}(w) h(x;w) + a_\mu^{(i)}(w) \grad_w h(x;w)}_2 \bar l''(x)^{1/2}}_{L^2(\rho)}                                              \\
                                                       & \leq \norm{\norm{\grad_w a_\mu^{(i)}(w)}_2 \bar l''(x)^{1/2}}_{L^2(\rho)} + \norm{\abs{a_\mu^{(i)}(w)}\norm{\grad h(x;w)}_2 \bar l''(x)^{1/2}}_{L^2(\rho)} \\
                                                       & \leq 2 R_a,                                                                                                                                                \\
        \norm{A^{(i)}(w)}_{L^2(\rho)}                  & = \norm{a_\mu^{(i)}(w)h(x;w)\bar l''(x)^{1/2}}_{L^2(\rho)}                                                                                                 \\
                                                       & \leq B_a
    \end{align*}
    since we have assumed that $0 \leq \partial_1^2 l_i(x, y) \leq 1$.
    The first assertion follows immediately from the above inequality and $\norm{(\Lambda_i^{1/2}TT^*\Lambda_i^{1/2} + \bar \lambda_a \id)^{-1}}_{\ope} \leq 1/\bar \lambda_a$.

    Then, $\norm{\grad_w\fdv[2]{U(\mu)}{\mu}\/(w, w')}$ is bounded as follows:
    \begin{align*}
        \norm{\grad_w\fdv[2]{U(\mu)}{\mu}\/(w, w')} & \leq \frac{\bar \lambda_a}{T} \sum_{i=1}^T \int \dd \rho(x) \norm{\grad_w A^{(i)}(w)(x)}_2\abs{[(\Lambda^{1/2}TT^*\Lambda^{1/2} + \bar \lambda_a \id)^{-1}A^{(i)}(w')](x)}                          \\
                                                    & \leq \frac{\bar \lambda_a}{T} \sum_{i=1}^T \norm{\norm{\grad_w A^{(i)}(w)(x)}_2}_{L^2(\rho)} \norm{(\Lambda^{1/2}TT^*\Lambda^{1/2} + \bar \lambda_a \id)^{-1}A^{(i)}(w')}_{L^2(\rho)}               \\
                                                    & \leq \frac{\bar \lambda_a}{T} \sum_{i=1}^T \norm{\norm{\grad_w A^{(i)}(w)(x)}_2}_{L^2(\rho)} \norm{(\Lambda^{1/2}TT^*\Lambda^{1/2} + \bar \lambda_a \id)^{-1}}_{\ope}\norm{A^{(i)}(w')}_{L^2(\rho)} \\
                                                    & \leq 2R_aB_a
    \end{align*}

    Finally, $\norm{\grad_w\grad_{w'}^\top \fdv[2]{U(\mu)}{\mu}\/(w, w')}_2$ is bounded as follows:
    \begin{align*}
        \norm{\grad_w\grad_{w'}^\top \fdv[2]{U(\mu)}{\mu}\/(w, w')}_{\ope} & \leq \frac{\bar \lambda_a}{T} \sum_{i=1}^T \tr[\int \int \dd \rho(x) \dd \rho(x') \grad_w A^{(i)}(w)(x) (\Lambda^{1/2}TT^*\Lambda^{1/2} + \bar \lambda_a \id)^{-1}(x, x') \grad_{w'}^\top A^{(i)}(w')(x')]          \\
                                                                           & \leq \frac{\bar \lambda_a}{T} \sum_{i=1}^T \int \int \dd \rho(x) \dd \rho(x') \norm{\grad_w A^{(i)}(w)(x)} (\Lambda^{1/2}TT^*\Lambda^{1/2} + \bar \lambda_a \id)^{-1}(x, x') \norm{\grad_{w'}^\top A^{(i)}(w')(x')} \\
                                                                           & \leq \frac{\bar \lambda_a}{T} \sum_{i=1}^T \norm{\norm{\grad_w A^{(i)}(w)(x)}}_{L^2(\rho)}^2 \norm{(\Lambda^{1/2}TT^*\Lambda^{1/2} + \bar \lambda_a \id)^{-1}}_{L^2(\rho)}                                          \\
                                                                           & \leq 4R_a^2.
    \end{align*}
\end{proof}

\section{Proofs for Section~\ref{sec:convergence-analysis}}
\subsection{Proof of Theorem~\ref{thm:convexity}}\label{sec:proof-convexity}
First, we derive the expression of the first variation of $G(\mu)$.
The envelope theorem implies that
\begin{align*}
    \fdv{G(\mu)}{\mu}\/(w) & = \pdv{F(a_\mu, \mu)}{\mu}\/(w) = \pdv{L(a_\mu, \mu)}{\mu}\/(w) + \lambda r(a_\mu(w), w).
\end{align*}
In addition, the first variations of $L$ w.r.t. $\mu$ and $a$ are given by
\begin{align*}
    \pdv{L(a_\mu, \mu)}{\mu}     & = \frac{1}{T} \sum_{i=1}^T \E_\rho[\partial_1 l_i(f_i(x;a,\mu), y)h(x;w)] a_\mu^{(i)}(w), \\
    \pdv{L(a_\mu, \mu)}{a^{(i)}} & = \frac{1}{T} \E_\rho[\partial_1 l_i(f_i(x;a,\mu), y)h(x;w)]\mu(w),
\end{align*}
respectively.
Therefore, we have
\begin{align*}
    \pdv{L(a_\mu, \mu)}{\mu}\/(w) = \sum_{i=1}^T \frac{a_\mu^{(i)}(w)}{\mu(w)} \pdv{L(a_\mu, \mu)}{a^{(i)}}\/(w).
\end{align*}
The first-order optimality condition on $a_\mu$ yields
\begin{align}
    \pdv{L(a_\mu, \mu)}{a^{(i)}}\/(w) = - \lambda \pdv{r(a_\mu(w), w)}{a^{(i)}} \mu(w), \label{eq:optimality-a}
\end{align}
which implies
\begin{align*}
    \pdv{L(a_\mu, \mu)}{\mu}\/(w) & = - \lambda \sum_{i=1}^T \pdv{r(a_\mu(w), w)}{a^{(i)}}a_\mu^{(i)}(w) \\
                                  & = - \lambda \langle \grad_a r(a_\mu(w), w), a_\mu(w)\rangle.
\end{align*}
Combining above arguments, we arrive at
\begin{align}
    \fdv{G(\mu)}{\mu}\/(w) & = -\lambda \langle \grad_a r(a_\mu(w), w), a_\mu(w)\rangle + \lambda r(a_\mu(w), w)  \label{eq:variation-G} \\
                           & = \lambda \qty(-\frac{\lambda_a}{2T} \norm{a_\mu(w)}_2^2 + \frac{\lambda_w}{2} \norm{w}_2^2).\notag
\end{align}

Next, we prove the convexity of $G(\mu)$.
From the convexity of $L(a, \mu)$ w.r.t. $a$, we have
\begin{align*}
    L(a_{\mu_1}, \mu_1) + \int \sum_{i=1}^T \pdv{L(a_{\mu_1}, \mu_1)}{a^{(i)}}\/(w)\qty(\frac{\mu_2(w)}{\mu_1(w)}a_{\mu_2}^{(i)}(w) - a_{\mu_1}^{(i)}(w)) \dd w \leq L\qty(\frac{\mu_2(w)}{\mu_1(w)}a_{\mu_2}, \mu_1) = L(a_{\mu_2}, \mu_2)
\end{align*}
for any $\mu_1, \mu_2 \in \mathcal{P}$.
Therefore, it holds that
\begin{align*}
     & G(\mu_1) + \int \sum_{i=1}^T \pdv{L(a_{\mu_1}, \mu_1)}{a^{(i)}}\/(w)\qty(\frac{\mu_2(w)}{\mu_1(w)}a_{\mu_2}^{(i)}(w) - a_{\mu_1}^{(i)}(w))\dd w \\
     & \quad +  \lambda r(a_{\mu_2}(w), w) \mu_2(w) -  \lambda r(a_{\mu_1}(w), w) \mu_1(w) \dd w                                                       \\
     & \quad \leq G(\mu_2).
\end{align*}
Thus, it is sufficient to show that
\begin{align*}
    \fdv{G(\mu_1)}{\mu}\/(w) (\mu_2(w) - \mu_1(w)) & \leq \sum_{i=1}^T \pdv{L(a_{\mu_1}, \mu_1)}{a^{(i)}}\/(w)\qty(\frac{\mu_2(w)}{\mu_1(w)}a_\mu^{(i)}(w) - a_{\mu_1}^{(i)}(w))\dd w \\
                                                   & \quad + \lambda r(a_{\mu_2}(w), w) \mu_2(w) -  \lambda r(a_{\mu_1}(w), w) \mu_1(w)
\end{align*}
for any $w$. To simplify the notation, we denote the LHS by $\rho_1(w)$ and the RHS by $\rho_2(w)$.
Substituting Eq.~\eqref{eq:variation-G} to $\rho_1(w)$, we have
\begin{align*}
    \rho_1(w) & = \lambda \qty[-\langle \grad_a r(a_{\mu_1}(w), w), a_{\mu_1}(w)\rangle + r(a_{\mu_1}(w), w)](\mu_2(w) - \mu_1(w)).
\end{align*}
On the other hand, substituting Eq.~\eqref{eq:optimality-a} to $\rho_2(w)$, we have
\begin{align*}
    \rho_2(w) & = - \lambda \langle \grad_a r(a_{\mu_1}(w), w) \qty(\mu_2(w)a_{\mu_2}(w) - a_{\mu_1}(w)\mu_1(w))\rangle + \lambda r(a_{\mu_2}(w), w) \mu_2(w) -  \lambda r(a_{\mu_1}(w), w) \mu_1(w).
\end{align*}
Therefore,
\begin{align*}
    \rho_2(w) - \rho_1(w) & = \lambda \mu_2(w) \qty[\langle -\grad_a {r(a_{\mu_1}(w), w)}(a_{\mu_2}(w) - a_{\mu_1}(w))\rangle + r(a_{\mu_2}(w), w) - r(a_{\mu_1}(w), w)] \\
                          & \geq 0.
\end{align*}
The last inequality follows from the convexity of $r(a, w)$ w.r.t. $a$.
This completes the proof.

\subsection{Proof of Lemma~\ref{lem:lsi}} \label{sec:proof-lsi}
Using Eq.~\eqref{eq:variation-G-l2}, we have
\begin{align*}
    p_\mu(w) \propto \exp\qty(-\frac{1}{\lambda} \fdv{G}{\mu}(w)) = \exp\qty(\frac{\lambda_a}{2T} \norm{a(w)}^2_2 - \frac{\lambda_w}{2} \norm{w}_2^2).
\end{align*}
The distribution $p_0(w) \propto \exp(-\frac{\lambda_w}{2} \norm{w}_2^2)$ satisfies the LSI with constant $\alpha = \lambda_w$ since $\frac{\lambda_w}{2}\norm{w}_2^2$ is $\lambda_w$-strongly convex~\citep{bakry_diffusions_1985}.
Thus, Lemma~\ref{lem:holley-stroock} implies that $p_\mu$ satisfies the LSI with constant $\alpha = \frac{\lambda_w}{\exp(4\norm{B}_\infty)}$, where $B(w) = \frac{\lambda_a}{2T} \norm{a_\mu(w)}_2^2$.
From Lemma~\ref{lem:a-boundedness-Lipschitzness}, we have $\abs{a_\mu^{(i)}(w)} \leq \frac{c_l}{\bar \lambda_a}$ for any $w$.
Therefore, $\norm{B(w)}_\infty \leq \frac{c_l^2\lambda_a}{2\bar \lambda_a^2}$.
This completes the proof.

\subsection{Proof of Theorem~\ref{thm:convergence}}\label{sec:proof-convergence}
The continuous time result follows from Lemma~\ref{lem:lsi}, Theorem~\ref{thm:convexity}, and the result in~\citet{nitanda_convex_2022}.

For the discretized time result, we follow the framework in~\citet{suzuki_convergence_2023}.
First, we prove the following lemma.
\begin{lemma}\label{lem:discretization-error}
    For any $w \in \R^{d'}$, $U(\mu)$ satisfies the following conditions:
    \begin{itemize}
        \item $\norm{\grad \fdv{U}{\mu}\/(\mu)(w) - \grad \fdv{U}{\mu}\/(\mu')(w')} \leq L_U (W_2(\mu, \mu') + \norm{w - w'}), \abs{\fdv[2]{U(\mu)}{\mu}\/(w, w')} \leq L_U$ for $L_U = 4R_a^2 + \bar \lambda_a(B_aL_a + R_a^2) + B_a^2$.
        \item $\norm{\grad \fdv{U}{\mu}\/(\mu)(w)} \leq R_U$ for $R_U = \bar \lambda_a B_a R_a$.
    \end{itemize}
\end{lemma}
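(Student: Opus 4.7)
The plan is to verify each bullet separately by leveraging the bounds on the first and second variations of $U$ already established in the auxiliary lemmas (in particular, those bounding $\fdv{U}{\mu}$, $\grad_w\fdv{U}{\mu}$, $\grad_w^2\fdv{U}{\mu}$ and the quantities $\fdv[2]{U}{\mu}$, $\grad_w\fdv[2]{U}{\mu}$, $\grad_w\grad_{w'}^\top\fdv[2]{U}{\mu}$). Together with the explicit form $\fdv{U(\mu)}{\mu}(w) = -\tfrac{\bar\lambda_a}{2T}\sum_i a_\mu^{(i)}(w)^2$ from Theorem~\ref{thm:convexity}, these already give the three constants appearing in $L_U$ and $R_U$, so only an assembly argument is needed.

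The pointwise gradient bound $\norm{\grad \fdv{U}{\mu}(\mu)(w)} \leq R_U = \bar\lambda_a B_a R_a$ follows immediately from the second conclusion of the first-variation lemma. The bound $\abs{\fdv[2]{U(\mu)}{\mu}(w, w')} \leq B_a^2 \leq L_U$ follows directly from the first conclusion of the second-variation lemma. Hence the only substantive step is the joint Lipschitz estimate on $\grad_w\fdv{U}{\mu}(\mu)(w)$ in the pair $(\mu, w)$.

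For this I would split the difference by a triangle inequality:
\begin{align*}
&\norm{\grad_w\fdv{U}{\mu}(\mu)(w) - \grad_w\fdv{U}{\mu}(\mu')(w')} \\
&\quad \leq \norm{\grad_w\fdv{U}{\mu}(\mu)(w) - \grad_w\fdv{U}{\mu}(\mu)(w')} + \norm{\grad_w\fdv{U}{\mu}(\mu)(w') - \grad_w\fdv{U}{\mu}(\mu')(w')}.
\end{align*}
The first term is handled by integrating the Hessian along the segment from $w$ to $w'$ and invoking the bound $\norm{\grad_w\grad_w^\top\fdv{U}{\mu}(\mu)(w)}_{\ope} \leq \bar\lambda_a(R_a^2 + B_aL_a)$, yielding a contribution $\bar\lambda_a(R_a^2 + B_aL_a)\norm{w - w'}$. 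For the $\mu$-part, I would interpolate along $\mu_t = (1-t)\mu + t\mu'$, noting that $\partial_t \grad_w\fdv{U}{\mu}(\mu_t)(w') = \int \grad_w\fdv[2]{U}{\mu}(\mu_t)(w', z)(\dd\mu' - \dd\mu)(z)$. Rewriting this via an optimal $W_2$ coupling $\pi$ of $(\mu, \mu')$ gives a difference of the form $\E_\pi[\grad_w\fdv[2]{U}{\mu}(\mu_t)(w', Z') - \grad_w\fdv[2]{U}{\mu}(\mu_t)(w', Z)]$, which by the mixed Hessian bound $\norm{\grad_w\grad_{w'}\fdv[2]{U}{\mu}}_{\ope} \leq 4R_a^2$ is bounded pointwise by $4R_a^2\norm{Z' - Z}$; taking expectation and integrating in $t$ yields $4R_a^2 W_2(\mu, \mu')$. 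Adding the two contributions gives the Lipschitz constant $L_U$ as stated (the extra $B_a^2$ is absorbed harmlessly to match the constant used for the pointwise $\fdv[2]{U}{\mu}$ bound).

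The main subtlety is the $\mu$-Lipschitz step: one must be careful to bound the gap by $W_2$ rather than a weaker metric. Using only $\norm{\grad_w\fdv[2]{U}{\mu}(\mu)(w, w')} \leq 2R_aB_a$ would yield only a bound in total variation. The key is to exploit the \emph{mixed} second-variation bound $\norm{\grad_w\grad_{w'}\fdv[2]{U}{\mu}}_{\ope} \leq 4R_a^2$ together with an optimal coupling, so that the integrand becomes a Lipschitz function of the coupling variable and $\E_\pi[\norm{Z-Z'}] \leq W_2(\mu, \mu')$ applies. Everything else is routine once this is in place.
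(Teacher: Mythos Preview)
Your proposal is correct and follows essentially the same approach as the paper: both split the Lipschitz estimate into a $w$-part controlled by the Hessian bound $\bar\lambda_a(R_a^2+B_aL_a)$ and a $\mu$-part controlled by interpolating along $\mu_t$, rewriting $\int\grad_w\fdv[2]{U}{\mu}(\mu_t)(w',z)(\dd\mu'-\dd\mu)(z)$ via a coupling, and invoking the mixed bound $\norm{\grad_w\grad_{w'}\fdv[2]{U}{\mu}}_{\ope}\leq 4R_a^2$. The only cosmetic difference is that the paper takes a near-optimal $W_1$ coupling and passes to the limit $\varepsilon\to 0$ (then bounds $W_1\leq W_2$), whereas you directly take an optimal $W_2$ coupling and apply $\E_\pi\norm{Z-Z'}\leq W_2(\mu,\mu')$; both are valid and yield the same constant.
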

\begin{proof}
    From Lemma~\ref{lem:first-variation} and~\ref{lem:second-variation}, we have
    \begin{align*}
        \norm{\grad_w^2\fdv{U(\mu)}{\mu}\/(w)}_{\ope} & \leq \bar \lambda_a (B_a L_a + R_a^2), \\
        \abs{\fdv[2]{U}{\mu}\/(w, w')}                & \leq B_a^2
    \end{align*}
    for any $w, w' \in \R^{d'}$.
    Thus, $\fdv{U(\mu)}{\mu}\/(w)$ is $\bar \lambda_a (B_a L_a + R_a^2)$-smooth and it holds that
    \begin{align*}
        \norm{\grad \fdv{U}{\mu}\/(\mu)(w) - \grad \fdv{U}{\mu}\/(\mu)(w')}_2 & \leq \bar \lambda_a (B_a L_a + R_a^2) \norm{w - w'}_2.
    \end{align*}

    Let $\mu_t = t\mu + (1 - t)\mu'$. Then, we have
    \begin{align*}
        \norm{\grad \fdv{U}{\mu}\/(\mu)(w) - \grad \fdv{U}{\mu}\/(\mu')(w)}_2 & \leq \int_0^1 \norm{\grad_w \dv{}{t} \fdv{U}{\mu}\/(\mu_t)(w)}_2 \dd t                            \\
                                                                              & = \int_0^1 \norm{\int \grad_w \fdv[2]{U}{\mu}\/(\mu_t)(w, w')(\mu'(w') - \mu(w')) \dd w'}_2 \dd t \\
    \end{align*}
    From the definition of $W_1$, for any $\varepsilon > 0$, there exists a coupling $\pi$ of $\mu$ and $\mu'$ such that
    \begin{align*}
        \int \norm{w - w'}_2 \dd \pi(w, w') \leq W_1(\mu, \mu') + \varepsilon \leq W_2(\mu, \mu') + \varepsilon.
    \end{align*}
    Thus, we have
    \begin{align*}
        \norm{\int \grad_w \fdv[2]{U}{\mu}\/(\mu_t)(w, w')(\mu'(w') - \mu(w')) \dd w'}_2
         & = \norm{\int \qty(\grad_w \fdv[2]{U}{\mu}\/(\mu_t)(w, w'') - \grad_w \fdv[2]{U}{\mu}\/(\mu_t)(w, w')) \dd \pi(w, w')}_2 \\
         & \leq \int \norm{\grad_w \fdv[2]{U}{\mu}\/(\mu_t)(w, w'') - \grad_w \fdv[2]{U}{\mu}\/(\mu_t)(w, w')}_2 \dd \pi(w, w')    \\
         & \leq \int 4R_a^2 \norm{w'' - w'}_2 \dd \pi(w, w')                                                                       \\
         & \leq 4R_a^2 W_2(\mu, \mu') + 4R_a^2 \varepsilon.
    \end{align*}
    The second inequality follows from the Lipschitz continuity of $\grad_w \fdv[2]{U}{\mu}\/(\mu_t)(w, w')$ from Lemma~\ref{lem:second-variation}.
    Since $\varepsilon$ is arbitrary, we have
    \begin{align*}
        \norm{\grad \fdv{U}{\mu}\/(\mu)(w) - \grad \fdv{U}{\mu}\/(\mu')(w)} & \leq \int_0^1 \norm{\int \grad_w \fdv[2]{U}{\mu}\/(\mu_t)(w, w')(\mu'(w') - \mu(w')) \dd w'} \dd t \\
                                                                            & \leq 4R_a^2 W_2(\mu, \mu').
    \end{align*}
    This completes the proof.
\end{proof}

Combining the above lemma and Theorem 3 in~\citet{suzuki_convergence_2023}, we have for any $\lambda \alpha \eta \leq 1/2$ and $\eta \leq 1/4$,
\begin{align*}
    \frac{1}{N} \E[\mathcal{G}^N(\mu_k^{(N)})] - \mathcal{G}(\mu^*) \leq \exp(-\lambda \alpha \eta k) \qty(\frac{1}{N} \E[\mathcal{G}^N(\mu_k^{(N)})] - \mathcal{G}(\mu^*)) + \frac{2}{\lambda \alpha}\bar L^2C_1(\lambda \eta + \eta^2) + \frac{2C_\lambda}{\lambda \alpha N},
\end{align*}
where $\bar R^2 = \E\qty[\norm{w_i^{(0)}}_2] + \frac{1}{\bar \lambda_w}\qty[\qty(\frac{1}{4}+\frac{1}{\bar \lambda_w})R_U^2 + \lambda d']$,
$\bar L = L_U + \bar \lambda_w$, $C_1 = 8[R_U^2 + \bar \lambda_w \bar R^2 + d']$, and $C_\lambda = 2\lambda L_U \alpha + 2\lambda^2 L_U^2\bar R^2$.

\subsection{Proof of Proposition~\ref{prop:convergence-function}}\label{sec:proof-convergence-function}
As shown in Lemma 3 in~\citet{suzuki_convergence_2023}, we have
\begin{align*}
    W_2^2(\mu_k^{(N)}, {\mu^*}^N) & \leq \frac{2}{\lambda \alpha} (\mathcal{G}^N(\mu_k^{(N)}) - N\mathcal{G}(\mu^*)).
\end{align*}
Let $\gamma$ be a coupling of $\mu_k^{(N)}$ and ${\mu^*}^N$.
Then, for $(W, W^*) \sim \gamma$, we have
\begin{align*}
    (k(x, y)_{\mu_W} - k(x, y)_{\mu^*})^2 & \leq 2(k_{\mu_W}(x, y) - k_{\mu_{W^*}}(x, y))^2 + 2(k_{\mu_{W^*}}(x, y) - k_{\mu^*}(x, y))^2.
\end{align*}
Let $k(x, y;w) := h(x;w)h(y;w)$.
This is $2c_R$-Lipschitz continuous with respect to $w$ for any $(x, y) \in S \times S$.
Then, for the first term in the right hand side, we have
\begin{align*}
    \qty(k_{\mu_W}(x, y) - k_{\mu_{W^*}}(x, y))^2
     & \leq \qty(\frac1N \sum_{i=1}^N k(x, y;w_i) - k(x, y;w^*_i))^2 \\
     & \leq \frac1N \sum_{i=1}^N \qty(k(x, y;w_i) - k(x, y;w^*_i))^2 \\
     & \leq \frac1N \sum_{i=1}^N 4c_R^2 \norm{w_i - w^*_i}_2^2       \\
     & \leq \frac1N 4c_R^2 \norm{W - W^*}_{\mathrm{F}}^2
\end{align*}
for any $(x, y) \in S\times S$.
For the second term, we have
\begin{align*}
    P\qty(\sup_{(x, y) \in S \times S}\abs{k_{\mu_{W^*}}(x, y) - k_{\mu^*}(x, y)} \geq 2\E_{\sigma, W^*}\qty[\sup_{(x, y) \in S \times S}\abs{\frac1N \sum_{i=1}^N \sigma_i k(x, y;w_i)} + \sqrt{\frac{2t}{N}}]) \leq \exp(-t)
\end{align*}
for any $t > 0$ by the same argument in Lemma 2 in \citet{suzuki_feature_2023}.
From the relation between Gaussian complexity and Rademacher complexity, and contraction inequality in~\citet{goos_rademacher_2001}, we have
\begin{align*}
    \E\qty[\sup_{(x, y) \in S \times S}\abs{\frac1N \sum_{i=1}^N \sigma_i k(x, y;w_i)}] & \leq c c_R \E\qty[\sup_{(x, y) \in S \times S}\abs{\frac1N \sum_{i=1}^N \sum_{j=1}^{d'} \varepsilon_{ij} w_{ij}}] \\
                                                                                        & \leq \E\qty[\frac{cc_R}{N}\sqrt{\sum_{i=1}^N \norm{w_i}_{2}^2}]                                                   \\
                                                                                        & \leq \frac{cc_R}{\sqrt{N}}\sqrt{\E_{\mu}[\norm{w}_2^2]},
\end{align*}
where $\varepsilon_{ij}$ is a Gaussian random variable with mean 0 and variance 1, and $c$ is a universal constant.
From the optimality of $\mu^*$, we have $\frac{\bar \lambda_w}{2} \E_{\mu^*}[\norm{w}_2^2] \leq \mathcal{G}(\mu^*) \leq \mathcal{G}(\mu_0)$.
Thus, we have
\begin{align*}
    \E\qty[\sup_{(x, y) \in S \times S}\abs{\frac1N \sum_{i=1}^N \sigma_i k(x, y;w_i)}] & \leq \frac{cc_R}{\sqrt{N}} \sqrt{\frac{2\mathcal{G}(\mu_0)}{\bar \lambda_w}}
\end{align*}
and
\begin{align*}
    P\qty[\qty(\sup_{(x, x') \in S \times S}\abs{k_{\mu_{W^*}}(x, y) - k_{\mu^*}(x, y)})^2 - \frac{4c^2\mathcal{G}(\mu_0)c_R^2}{\bar \lambda_w N} \geq \frac{4t}{N}] \leq \exp(-t)
\end{align*}
Thus, it holds that
\begin{align*}
     & \E\qty[\qty(\sup_{(x, x') \in S \times S}\abs{k_{\mu_{W^*}}(x, y) - k_{\mu^*}(x, y)})^2]- \frac{4c^2\mathcal{G}(\mu_0)c_R^2}{\bar \lambda_w N}                                             \\
     & \quad \leq \int_0^\infty P\qty[\qty(\sup_{(x, x') \in S \times S}\abs{k_{\mu_{W^*}}(x, y) - k_{\mu^*}(x, y)})^2 - \frac{4c^2\mathcal{G}(\mu_0)c_R^2}{\bar \lambda_w N} \geq \tau] \dd \tau \\
     & \quad \leq\int_0^\infty \exp(-N\tau / 4) \dd t = \frac{4}{N}.
\end{align*}

Combining above arguments, we arrive at
\begin{align*}
    \E_{\gamma}\qty[\sup_{(x, y) \in S \times S}(k(x, y)_{\mu_W} - k(x, y)_{\mu^*})^2]
     & \leq \frac8N c_R^2\E_{\gamma}[\norm{W - W^*}_{\mathrm{F}}^2] + \frac{4}{N} + \frac{4c^2\mathcal{G}(\mu_0)c_R^2}{\bar \lambda_w N} .
\end{align*}
By taking the infimum of the coupling $\gamma$, we have
\begin{align*}
    \E_{W \sim \mu_k^{(N)}}\qty[\sup_{(x, y) \in S \times S}(k(x, y)_{\mu_W} - k(x, y)_{\mu^*})^2] & \leq \frac2N c_R^2 W_2^2(\mu_k^{(N)}, {\mu^*}^N) + \frac{4}{N} + \frac{4c^2\mathcal{G}(\mu_0)c_R^2}{\bar \lambda_w N} =O\qty(\frac{\Delta}{N}).
\end{align*}
For any $f \in L^2(\rho_X)$, we have
\begin{align*}
    \abs{(\Sigma_{\mu_W} - \Sigma_{\mu^*})(f)(x)}
     & = \int (k_{\mu_{W}}(x, x') - k_{\mu^*}(x, x')) f(x') \dd \rho_X(x')                               \\
     & \leq \sqrt{\int (k_{\mu_{W}}(x, x') - k_{\mu^*}(x, x'))^2 \dd \rho_X(x')} \norm{f}_{L^2(\rho_X)},
\end{align*}
which yields
\begin{align*}
    \norm{(\Sigma_{\mu_W} - \Sigma_{\mu^*})(f)}_{L^2(\rho_X)}^2 & \leq \norm{f}_{L^2(\rho_X)}^2 \int (k_{\mu_{W}}(x, x') - k_{\mu^*}(x, x'))^2 \dd \rho_X(x') \dd \rho_X(x).
\end{align*}
This implies $\norm{\Sigma_{\mu_W} - \Sigma_{\mu^*}}_{\ope}^2 \leq \E_{x, x'}[(k_{\mu_{W}}(x, x') - k_{\mu^*}(x, x'))^2] \leq \Delta / N$.
Since $f_i(x;\mu)$ is the optimal solution of $\min_{f \in \mathcal{H}_\mu} \E_\rho[l_i(f(x), y)] + \frac{\bar \lambda_a}{2} \norm{f}_{\mathcal{H}_{\mu}}$,  where $l_i$ is the squared loss,
$f_i(x;\mu) = \Sigma_\mu (\Sigma_\mu + \bar \lambda_a \id)^{-1} \bar y = \int k(x, x')\alpha(x')\dd\rho_X(x')$,
where $\alpha_\mu(x) = (\Sigma_\mu + \bar \lambda_a \id)^{-1} \bar y$.
From the identity $A^{-1} - A'^{-1} = -A^{-1}(A - A')A'^{-1}$ for any invertible operator $A, A'$,
we have
\begin{align*}
    \norm{\alpha_{\mu_W} - \alpha_{\mu^*}}_{\rho_X}^2 & = \norm{(\Sigma_{\mu_W} + \bar \lambda_a \id)^{-1} (\Sigma_{\mu_W} - \Sigma_{\mu^*})(\Sigma_{\mu^*} + \bar \lambda_a \id)^{-1} \bar y}_{L^2(\rho_X)}^2 \\
                                                      & \leq \frac{c_0^2\norm{\Sigma_{\mu_W} - \Sigma_{\mu^*}}_{\ope}^2}{\bar \lambda_a^4}.
\end{align*}
Thus, for any $x \in S$, we have
\begin{align*}
    (f_i(x;\mu_W) - f_i(x;\mu^*))^2
     & \leq 2\qty(\int k_{\mu_W'}(x, x')\alpha_{\mu_W}(x') - k_{\mu^*}(x, x') \alpha_{\mu_W}(x') \dd \rho_X(x'))^2                                                                               \\
     & \quad + 2\qty(\int k_{\mu^*}(x, x')\alpha_{\mu_W}(x') - k_{\mu^*}(x, x') \alpha_{\mu^*}(x') \dd \rho_X(x'))^2                                                                             \\
     & \leq 2 \sup_{x, x' \in S \times S}(k_{\mu_W}(x, x') - k_{\mu^*}(x, x'))^2 \norm{\alpha_{\mu_W}}_{L^2(\rho_X)}^2                                                                           \\
     & \quad + 2 \norm{\alpha_{\mu_W} - \alpha_{\mu^*}}_{\rho_X}^2                                                                                                                               \\
     & \leq \frac{2c_0^2\sup_{x, x' \in S \times S}(k_{\mu_W}(x, x') - k_{\mu^*}(x, x'))^2}{\bar \lambda_a^2} + \frac{2c_0^2 \norm{\Sigma_{\mu_W} - \Sigma_{\mu^*}}_{\ope}^2}{\bar \lambda_a^4}.
\end{align*}
By taking the supremum over $x$ and expectation over $W$, we obtain the result.
\section{Proofs for Section~\ref{sec:generalization-error}}

\subsection{Lemmas for Section~\ref{sec:generalization-error}}
\begin{lemma}\label{lem:approximation-tanh}
    For a given $\delta, \tau > 0$, we have
    \begin{align*}
        \abs{\tanh(\tau z) - (1[z \geq 0] - 1[z < 0])} & \leq 2e^{-2\tau\abs{z}}
    \end{align*}
    for any $z \in \R$.
\end{lemma}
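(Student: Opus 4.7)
The plan is to reduce the statement to a single-sided bound using the oddness of $\tanh$, then exploit the elementary identity $\tanh(x) = 1 - \tfrac{2}{e^{2x}+1}$ to get the exponential tail.

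First, I would split into the two cases $z \geq 0$ and $z < 0$. For $z \geq 0$, the sign term equals $1$, so the quantity to bound is $|1 - \tanh(\tau z)|$. Writing
\begin{equation*}
\tanh(\tau z) \;=\; \frac{e^{\tau z} - e^{-\tau z}}{e^{\tau z} + e^{-\tau z}} \;=\; 1 - \frac{2}{e^{2\tau z} + 1},
\end{equation*}
we see that $1 - \tanh(\tau z) = \tfrac{2}{e^{2\tau z}+1} \geq 0$. Dropping the $+1$ in the denominator only enlarges the fraction, giving $1 - \tanh(\tau z) \leq 2 e^{-2\tau z} = 2 e^{-2\tau |z|}$, which is the desired bound.

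For $z < 0$, the sign term equals $-1$, and I would use the fact that $\tanh$ is odd to reduce to the previous case. Namely, $\tanh(\tau z) + 1 = 1 - \tanh(\tau |z|) \leq 2 e^{-2\tau |z|}$ by the computation above, and this quantity is also nonnegative, so taking absolute values yields the same bound with $|z|$ in the exponent.

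I do not anticipate any real obstacle: the only nontrivial step is rewriting $\tanh$ in the form $1 - \tfrac{2}{e^{2x}+1}$, after which the inequality is immediate from $e^{2\tau z} + 1 \geq e^{2\tau z}$. Combining the two cases gives the uniform bound $|\tanh(\tau z) - \operatorname{sgn}(z)| \leq 2 e^{-2\tau |z|}$ for all $z \in \mathbb{R}$ (with the convention $\operatorname{sgn}(0) = 1$ matching the paper's $1[z\geq 0] - 1[z<0]$).
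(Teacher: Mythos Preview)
Your proof is correct and essentially identical to the paper's. The paper also splits into $z \geq 0$ and $z < 0$, writes $1 - \tanh(\tau z) = \frac{2e^{-\tau z}}{e^{\tau z}+e^{-\tau z}}$ (which is your $\frac{2}{e^{2\tau z}+1}$ after multiplying top and bottom by $e^{\tau z}$), and then drops the smaller denominator term; the only cosmetic difference is that the paper repeats the computation for $z<0$ explicitly rather than invoking oddness.
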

\begin{proof}
    From the definition of $\tanh$, we have, for any $z \geq 0$,
    \begin{align*}
        \abs{\tanh(\tau z) - (1[z \geq 0] - 1[z < 0])}
         & = 1 - \frac{e^{\tau z} - e^{-\tau z}}{e^{\tau z} + e^{-\tau z}} \\
         & = \frac{2e^{-\tau z}}{e^{\tau z} + e^{-\tau z}}                 \\
         & \leq 2e^{-2\tau z}.
    \end{align*}
    Similarly, for $z < 0$, we have
    \begin{align*}
        \abs{\tanh(\tau z) - (1[z \geq 0] - 1[z < 0])}
         & = 1 + \frac{e^{\tau z} - e^{-\tau z}}{e^{\tau z} + e^{-\tau z}} \\
         & = \frac{2e^{\tau z}}{e^{\tau z} + e^{-\tau z}}                  \\
         & \leq 2e^{2\tau z}.
    \end{align*}
    Combining above arguments, we obtain the result.
\end{proof}

\begin{lemma}\label{lem:dim-lower-bound}
    Let $\rho_X$ be the uniform distribution on $[0, 1]^d$ and
    $\mathcal{S} := \qty{\sin(2\pi w\cdot x) \mid w \in \qty{0, 1}^d, \norm{w}_1 = k}$ be a subset of $L^2(\rho_X)$.
    Furthermore, for any fixed basis functions $\qty{h_j}_{j=1}^n \subset{L^2(\rho_X)}$,
    let $H_n$ be a span of $\qty{h_j}_{j=1}^n$.
    Then, for any $\varepsilon \in [0, 1]$, we have
    \begin{align*}
        \sup_{\psi \in \mathcal{S}} d(\psi, H_n) & \geq \varepsilon
    \end{align*}
    if $n \leq N(1 - \varepsilon)$, where $N = \abs{\mathcal{S}} = \binom{d}{k}$.
\end{lemma}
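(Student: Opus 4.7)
The plan is to exploit the fact that the family $\mathcal{S}$ is orthogonal in $L^2(\rho_X)$, and then obtain the lower bound by a standard dimension-counting / Bessel argument: a rank-$n$ projection cannot absorb more than $n$ units of $L^2$-mass from an orthonormal collection of size $N$, so by averaging at least one element of $\mathcal{S}$ must lie far from $H_n$.

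First I would verify orthogonality. For distinct $w, w' \in \{0,1\}^d$, the product-to-sum identity gives
\[
\int_{[0,1]^d} \sin(2\pi w\cdot x)\sin(2\pi w'\cdot x)\,dx = \tfrac12\!\int_{[0,1]^d}\!\big(\cos(2\pi (w-w')\cdot x) - \cos(2\pi (w+w')\cdot x)\big)\,dx,
\]
and since $w\pm w'$ are nonzero integer vectors (both $w$ and $w'$ have entries in $\{0,1\}$ and are distinct), both integrals vanish because $\int_0^1 \cos(2\pi m\, x_i)\,dx_i = 0$ for any nonzero integer $m$. The same computation with $w = w'$ yields a common positive squared norm, so after rescaling the family $\{\tilde\psi_w\}$ becomes orthonormal, of cardinality $N = \binom{d}{k}$.

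Next I would let $P_n$ be the orthogonal projector onto $H_n$ and pick an orthonormal basis $\{e_j\}_{j=1}^n$ of $H_n$ (assuming without loss of generality that the basis functions are linearly independent, else $H_n$ is even smaller). For any $\tilde\psi$, Pythagoras gives $d(\tilde\psi, H_n)^2 = 1 - \|P_n\tilde\psi\|^2 = 1 - \sum_{j=1}^n |\langle \tilde\psi, e_j\rangle|^2$. Summing over $\tilde\psi \in \tilde{\mathcal{S}}$ and swapping the two sums,
\[
\sum_{\tilde\psi} \|P_n \tilde\psi\|^2 = \sum_{j=1}^n \sum_{\tilde\psi} |\langle \tilde\psi, e_j\rangle|^2 \le \sum_{j=1}^n \|e_j\|^2 = n,
\]
where the inequality is Bessel's inequality applied to the orthonormal family $\{\tilde\psi\}$. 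Hence $\sum_{\tilde\psi} d(\tilde\psi, H_n)^2 \ge N - n$, and by pigeonhole there exists $\tilde\psi^\star \in \tilde{\mathcal{S}}$ with $d(\tilde\psi^\star, H_n)^2 \ge 1 - n/N$.

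Finally, under the hypothesis $n \le N(1-\varepsilon)$, this gives $d(\tilde\psi^\star, H_n)^2 \ge \varepsilon$, and using $\varepsilon \in [0,1]$ we have $d(\tilde\psi^\star, H_n) \ge \sqrt{\varepsilon} \ge \varepsilon$, as claimed. The only genuine subtlety is the normalization: if one works with the unnormalized $\sin(2\pi w\cdot x)$ (whose squared $L^2$-norm equals $1/2^{\,?}$), the constants differ by a factor that must be tracked carefully, and this is the step I expect to require the most care to match the statement as written; the orthogonality and the Bessel/rank bound themselves are entirely routine.
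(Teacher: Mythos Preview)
Your approach is correct and essentially identical to the paper's: the paper simply invokes Theorem~1 of \citet{hsu2021dimension} after asserting that $\mathcal{S}$ is an orthonormal system, and that theorem is precisely the Bessel/rank argument you spell out. Your flagged caution about normalization is in fact warranted---the functions $\sin(2\pi w\cdot x)$ have $L^2(\rho_X)$ norm $1/\sqrt{2}$, not $1$, so the paper's claim that $\mathcal{S}$ is orthonormal is slightly off; your argument (and the paper's) literally yields $\sup_{\psi\in\mathcal{S}} d(\psi,H_n)\ge \sqrt{\varepsilon/2}$, which still suffices for the only application (Theorem~\ref{thm:lower-bound} uses $\varepsilon=1/2$).
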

\begin{proof}
    Assume that $d(\psi, H_n) < \varepsilon$ for any $\psi \in \mathcal{S}$.
    From Theorem 1 in~\citet{hsu2021dimension}, we have $n > N (1 - \varepsilon)$ since $\mathcal{S}$ is an orthonormal system in $L^2(\rho_X)$ and $\abs{\mathcal{S}} = N$.
    This contradicts $n \leq N(1 - \varepsilon)$, which completes the proof.
\end{proof}
\begin{lemma}\label{lem:approximation-sin}
    For $\varepsilon, r, r_x > 0$, let $\lambda_w = 1$, $h(x;w) = \tanh(x \cdot u + b)~(w = (u, b))$.
    Then, for any $u^\circ \in \R^d$ and $\tilde f:\R \to \R$ which is $1$-Lipschitz continuous and differentiable almost everywhere,
    there exists $\mu, a$ such that $\kl(\nu \mid \mu) = O\qty(\frac{r^2}{\varepsilon^2} + d \log \frac{drr_x}{\varepsilon}), \norm{a}_{L^2(\mu)} = r$ and
    \begin{align*}
        \abs{\tilde f(w^\circ \cdot x) - \qty[f(x;a, \mu) + \frac{1}{2}(\tilde f(r) + \tilde f(-r))]} \leq \varepsilon
    \end{align*}
    for any $x \in \R^d$ such that $\abs{u^\circ \cdot x} \leq r$ and $\norm{x} \leq r_x \sqrt{d}$.
\end{lemma}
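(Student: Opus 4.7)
The plan is to construct explicit $\mu$ and $a$ by first writing $\tilde f$ as an integral of shifted sign functions (which, after the substitution $z = u^\circ\cdot x$, is a continuum of neurons with weight direction $\tau u^\circ$), then replacing those sign functions by $\tanh$ ridges via Lemma~\ref{lem:approximation-tanh}, and finally lifting the resulting singular measure supported on a curve to an absolutely continuous $\mu$ by smearing with a Gaussian in the $u$-coordinate and a uniform distribution in the $b$-coordinate. Concretely, since $\tilde f$ is $1$-Lipschitz, for every $z \in [-r, r]$ the fundamental theorem of calculus gives
\begin{align*}
\tilde f(z) - \tfrac{1}{2}[\tilde f(r) + \tilde f(-r)] \;=\; \tfrac{1}{2}\int_{-r}^r \tilde f'(s)\,\sgn(z-s)\,ds,
\end{align*}
and replacing $\sgn$ by $\tanh(\tau\,\cdot)$ via Lemma~\ref{lem:approximation-tanh} introduces additive error at most $\int |\tilde f'(s)|\,2e^{-2\tau|z-s|}\,ds \leq 1/\tau$, which is $\leq \varepsilon/2$ for $\tau = \Theta(1/\varepsilon)$.

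For the lift, I will take $\mu = \mu_u\otimes\mu_b$ with $\mu_u = N(\tau u^\circ, \sigma_u^2 I_d)$ and $\mu_b = \upsilon([-\tau r, \tau r])$, and set $a(u,b) = r\,\tilde f'(-b/\tau)\,1[|b|\leq \tau r]$. The change of variables $b = -\tau s$ then gives
\begin{align*}
f(x;a,\mu) = \tfrac{1}{2}\int_{-r}^r \tilde f'(s)\,T(x,-\tau s)\,ds,\qquad T(x,b) := \int \tanh(u\cdot x + b)\,d\mu_u(u),
\end{align*}
which matches the target tanh integral up to the $u$-smoothing error. Since $\tanh$ is $1$-Lipschitz and $Z\cdot x \sim N(0,\sigma_u^2\|x\|^2)$ for $Z\sim N(0,\sigma_u^2 I_d)$, one has $|T(x,b)-\tanh(\tau u^\circ\cdot x + b)| \leq \sigma_u\|x\|\sqrt{2/\pi}$; combined with $\tfrac{1}{2}\int|\tilde f'|\,ds \leq r$ and $\|x\|\leq r_x\sqrt{d}$, the total smoothing error is $O(\sigma_u\,r\,r_x\sqrt{d})$, which drops below $\varepsilon/2$ as soon as $\sigma_u = \Theta(\varepsilon/(r\,r_x\sqrt{d}))$.

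The three bounds are then direct calculations. For the $L^2$ norm, the change of variables $b = -\tau s$ yields
\begin{align*}
\|a\|_{L^2(\mu)}^2 = \int_{-\tau r}^{\tau r}\!r^2\,\tilde f'(-b/\tau)^2\tfrac{1}{2\tau r}\,db = \tfrac{r}{2}\int_{-r}^r\tilde f'(s)^2\,ds \leq r^2,
\end{align*}
and one may pad $a$ by a component $L^2(\mu)$-orthogonal to $\{h(\,\cdot\,;w):\|x\|\leq r_x\sqrt{d}\}$ to enforce equality. For the KL, product structure yields $\kl(\nu|\mu)=\kl(\nu_u|\mu_u)+\kl(\nu_b|\mu_b)$; the Gaussian-vs-Gaussian formula gives $\kl(\nu_u|\mu_u)=\tau^2/2 + d\log(1/\sigma_u) + O(d)$, and a direct uniform-vs-Gaussian computation gives $\kl(\nu_b|\mu_b)=(\tau r)^2/6 + O(\log(\tau r))$. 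Substituting $\tau = \Theta(1/\varepsilon)$ and $\sigma_u = \Theta(\varepsilon/(r r_x\sqrt{d}))$ produces the claimed $O(r^2/\varepsilon^2 + d\log(drr_x/\varepsilon))$.

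The hard part is the triple balance between $\tau$, $\sigma_u$, and the choice of $\mu_b$. Taking $\tau = \Omega(1/\varepsilon)$ is forced by the $\sgn$-to-$\tanh$ approximation, yet this inflates both $\|\tau u^\circ\|^2$ and the bias range $\tau r$, so the $\mu_b$-contribution to the KL carries the unavoidable $(\tau r)^2/6 \sim r^2/\varepsilon^2$ cost that drives the first term; this is also why the naive Gaussian $\mu_b$ cannot work (it would force $a(b)\propto 1/\phi(b)$, blowing up the $L^2$-norm in the tails) and the uniform-on-$[-\tau r,\tau r]$ choice is essentially canonical. Meanwhile $\sigma_u$ must be small enough that Gaussian smoothing does not corrupt the ridge uniformly over $\|x\|\leq r_x\sqrt{d}$, and this is precisely what generates the $d\log(drr_x/\varepsilon)$ logarithmic term via $d\log(1/\sigma_u)$; smaller $\sigma_u$ is unnecessary and larger $\sigma_u$ destroys the approximation.
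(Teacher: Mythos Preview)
Your proof is correct and follows essentially the same construction as the paper: the same product measure $\mu_u \otimes \mu_b = N(\tau u^\circ,\sigma^2 I_d)\otimes \upsilon([-\tau r,\tau r])$, the same second layer $a(u,b)=r\tilde f'(-b/\tau)$, the same three-term error decomposition (sign-to-$\tanh$ via Lemma~\ref{lem:approximation-tanh}, Gaussian smoothing in $u$, and the exact integral identity for $\tilde f$), and the same choice $\tau=\Theta(1/\varepsilon)$, $\sigma=\Theta(\varepsilon/(rr_x\sqrt{d}))$ leading to the identical KL computation. One small remark: the paper itself only establishes $\|a\|_{L^2(\mu)}\le r$ from $\|a\|_\infty\le r$, so your padding argument to force equality is not needed (and is not fully justified as stated, since the orthogonal complement of $\{h(x;\cdot)\}$ in $L^2(\mu)$ could in principle be trivial); the inequality is all that the downstream applications use.
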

\begin{proof}
    Let $a(u, b) = r \tilde a(b / \tau)$ for $\tilde a(b):\R \to \R, \norm{\tilde a}_\infty \leq 1$
    and $\mu(w) = \mu(u, b) := \mu(u) \mu(b)$, where $\mu(u) = N(\tau u^\circ, \sigma^2 I), \mu(b) = \upsilon([-\tau r, \tau r])$ for $\tau, \sigma > 0$.
    In addition, let $\bar g(x) = \E_{b\sim \mu_\tau}[r \tilde a(b / \tau) \tanh(\tau x \cdot w^\circ + b)]$.
    Then, we have
    \begin{align*}
        \abs{\bar g(x) - f(x;a, \mu)} & \leq \int \abs{r \tilde a(b / \tau)}\abs{\tanh(x \cdot u + b) - \tanh(\tau x \cdot u^\circ + b)} \dd \mu(\tilde u, b) \\
                                      & \leq \int \abs{r \tilde a(b / \tau)}\abs{x \cdot u - \tau x u^\circ} \dd \mu(\tilde u, b)                             \\
                                      & \leq r \sqrt{\int \abs{x \cdot (u - \tau u^\circ)}^2 \dd \mu(u)}                                                      \\
                                      & \leq r \sqrt{\int \norm{x}^2 \norm{u - \tau u^\circ}^2 \dd \mu(u)}                                                    \\
                                      & \leq r r_x \sqrt{d}\sigma.
    \end{align*}

    Let $\tilde g(x;a) := \int_{-k}^0 \frac{1}{2} \tilde a(b') (1[u^\circ \cdot x + b' \geq 0] - 1[u^\circ \cdot x + b' < 0]) \dd b'$.
    Since
    \begin{align*}
        \bar g(x;a) & = \int r\tilde a(b / \tau) \tanh(\tau x \cdot u^\circ + b) \dd \mu(b)                             \\
                    & = \int_{-\tau r}^{\tau r} \frac{1}{2\tau}\tilde a(b / \tau) \tanh(\tau x \cdot u^\circ + b) \dd b \\
                    & = \int_{-r}^{r} \frac{1}{2}\tilde a(b') \tanh(\tau (x \cdot u^\circ + b')) \dd b',
    \end{align*}
    it holds that
    \begin{align*}
        \abs{\bar g(x) - \tilde g(x)} & \leq \int_{-r}^r \frac{1}{2} \abs{\tilde a(b)}\abs{\tanh(\tau(x \cdot u^\circ + b'))- (1[x \cdot u^\circ + b' \geq 0] - 1[x \cdot u^\circ + b' < 0])} \dd b' \\
                                      & \leq \int_{-\infty}^\infty \frac{1}{2} \abs{\tanh(\tau(x \cdot u^\circ + b'))- (1[x \cdot u^\circ + b' \geq 0] - 1[x \cdot u^\circ + b' < 0])} \dd b'        \\
                                      & \leq \int_0^\infty e^{-2\tau z} \dd z                                                                                                                        \\
                                      & = 1 / (2\tau)
    \end{align*}
    where we used Lemma~\ref{lem:approximation-tanh} for the last inequality.
    Since $-r \leq u^\circ \cdot x \leq r$, we have
    \begin{align*}
        \tilde g(x) & = \frac{1}{2}\qty[\int_{-r}^r \tilde a(b')  1[u^\circ \cdot x + b' \geq 0] \dd b' - \int_{-r}^r \tilde a(b')  1[u^\circ \cdot x + b' < 0] \dd b'] \\
                    & =  \frac{1}{2}\qty[\int_{-u^\circ \cdot x}^r \tilde a(b') \dd b' - \int_{-r}^{-u^\circ \cdot x} \tilde a(b') \dd b'].
    \end{align*}
    By letting
    \begin{align*}
        \tilde a(b) = \begin{cases}
                          \tilde f'(-b) & \text{if } b \in [-r, r] \\
                          0             & \text{otherwise}
                      \end{cases},
    \end{align*}
    we obtain
    \begin{align*}
        \tilde g(x;\tilde a)
         & = \frac{1}{2}\qty[\int_{-w^\circ \cdot x}^r \tilde f'(-b') \dd b' - \int_{-r}^{-w^\circ \cdot x} \tilde f'(-b') \dd b'] \\
         & = \frac{1}{2}\qty[\int_{-r}^{w^\circ \cdot x} \tilde f'(b') \dd b' - \int_{w^\circ \cdot x}^r \tilde f'(b') \dd b']     \\
         & = \tilde f(w^\circ \cdot x) - \frac{1}{2}\qty[\tilde f(r) + \tilde f(-r)].                                              \\
    \end{align*}

    Combining above results, we have
    \begin{align*}
        \abs{\tilde f(w^\circ \cdot x) - f(x;a, \mu)} & \leq \abs{\tilde f(w^\circ \cdot x) - \bar g(x)} + \abs{\bar g(x) - \tilde g(x)} \\
                                                      & \leq \frac{r r_x \sqrt{d} \sigma}{2} + \frac{1}{2\tau}                           \\
                                                      & \leq \varepsilon
    \end{align*}
    by letting $\tau = 1 / \varepsilon, \sigma = \varepsilon / (rr_x \sqrt{d})$.

    Finally, we show that $f(x;a, \mu)$ is in $\mathcal{B}_M$ and $\norm{f}_{\mathcal{B}_M} \leq R$.
    Since $u, b$ is independent each other when $(u, b) \sim \mu$, we have
    \begin{align*}
        \kl(\nu \mid \mu) & = \kl\qty(N\qty(0, I) \mid N(\tau u^\circ, \sigma^2 I)) + \kl\qty(N\qty(0, 1) \mid u([-r\tau, r\tau])).
    \end{align*}
    For the first term, we have
    \begin{align*}
        \kl\qty(N\qty(0, I) \mid N(\tau u^\circ, \sigma^2 I)) & = \frac{1}{2}\qty[d \log \frac{1}{\sigma^2} - d + \norm{\tau u^\circ}^2 + d \sigma^2]                                      \\
                                                              & \leq \frac{1}{2} \qty[d \log \frac{r^2r_x^2d}{\varepsilon} - d + \frac{r^2}{\varepsilon^2} + \frac{\varepsilon}{r^2r_x^2}] \\
                                                              & = O\qty(d \log \frac{drr_x}{\varepsilon} + \frac{r^2}{\varepsilon^2})
    \end{align*}
    For the second term, we have
    \begin{align*}
        \kl\qty(N\qty(0, 1) \mid u([-r\tau, r\tau])) & \leq \int_{-r\tau}^{r\tau} \log \frac{1/(2r\tau)}{\frac{1}{\sqrt{2\pi}} e^{-b^2/2}} 1/(2r\tau) \dd b \\
                                                     & = \frac{(r\tau)^2}{6} + \frac{1}{2}\log(2\pi) - \log(2r\tau)                                         \\
                                                     & = O\qty(\frac{r^2}{\varepsilon^2})
    \end{align*}
    Thus, it follows that
    \begin{align*}
        \kl(\nu \mid \mu) & = O\qty(\frac{r^2}{\varepsilon^2} + d \log \frac{drr_x}{\varepsilon})
    \end{align*}
    In addition, $\norm{a}_\infty \leq r$ yields $\norm{a}_{L^2(\mu)} \leq r$.
    This completes the proof.
\end{proof}
\subsection{Proof of Lemma~\ref{lem:rademacher-complexity}}\label{sec:proof-rademacher-complexity}

Since Rademacher complexity is smaller than Gaussian complexity~\citep{wainwright2019high}, it suffices to bound the Gaussian complexity
$\gcomp(\mathcal{F}):=\E_{\varepsilon_{it} \sim N(0, 1)}\qty[\sup_{f \in \mathcal{F}} \frac{1}{nT} \sum_{i=1}^n \sum_{t = 1}^T \varepsilon_{it}f_t(x_i)]$.
Let $Z_t(w) := \frac{1}{\sqrt{n}}\sum_{i=1}^n \varepsilon_{it} h(x_i;w)$.
Note that $Z_t(w)$ follows a Gaussian distribution with mean $0$
and variance $\sigma(w)^2 := \frac{1}{n}\sum_{i=1}^n h(x_i;w)^2 \leq 1$ independently.
Then, we have
\begin{align}
    \gcomp(\mathcal{F}_{M, R}) & := \E_{\varepsilon}\qty[\sup_{f\in \mathcal{F}_{R, M}} \frac{1}{nT} \sum_{t=1}^T \sum_{i=1}^n \varepsilon_{it} f(x_i)]       \notag                                                                                                                                                           \\
                               & = \E_{\varepsilon}\qty[\sup_{\kl(\nu \mid \mu) \leq M}\sup_{\frac{1}{T}\sum_{t=1}^T \norm{a^{(t)}}_{L^2(\mu)}^2 \leq R} \frac{1}{nT} \sum_{i=1}^T \sum_{i=1}^n \varepsilon_{it} \int a^{(t)}(w) h(x_i;w) \dd \mu(w)] \notag                                                                   \\
                               & = \frac{1}{\sqrt{n}}\E_{\varepsilon}\qty[\sup_{\kl(\nu \mid \mu) \leq M} \sup_{\frac{1}{T}\sum_{t=1}^T \norm{a^{(t)}}_{L^2(\mu)}^2 \leq R} \frac{1}{T}\sum_{t=1}^T \int a^{(t)}(w) Z_t(w) \dd \mu(w)]                 \notag                                                                  \\
                               & \leq \frac{1}{\sqrt{n}}\E_{\varepsilon}\qty[\sup_{\kl(\nu \mid \mu) \leq M} \sup_{\frac{1}{T}\sum_{t=1}^T \norm{a^{(t)}}_{L^2(\mu)}^2 \leq R} \frac{1}{T}\sum_{t=1}^T \norm{a^{(t)}(w)}_{L^2(\mu)} \norm{Z_t(w)}_{L^2(\mu)}]                 \notag                                           \\
                               & \leq \frac{1}{\sqrt{n}}\E_{\varepsilon}\qty[\sup_{\kl(\nu \mid \mu) \leq M} \sup_{\frac{1}{T}\sum_{t=1}^T \norm{a^{(t)}}_{L^2(\mu)}^2 \leq R}  \sqrt{\frac{1}{T}\sum_{t=1}^T \norm{a^{(t)}(w)}_{L^2(\mu)}^2} \sqrt{\frac{1}{T}\sum_{t=1}^T\norm{Z_t(w)}_{L^2(\mu)}^2}]                 \notag \\
                               & = \sqrt{\frac{R}{n}} \E_{\varepsilon}\qty[\sup_{\kl(\nu \mid \mu) \leq M} \sqrt{\frac{1}{T}\sum_{t=1}^T\norm{Z_t(w)}_{L^2(\mu)}^2}]                                              \notag                                                                                                       \\
                               & \leq \sqrt{\frac{R}{n}} \sqrt{ \E_{\varepsilon}\qty[\sup_{\kl(\nu \mid \mu) \leq M} \int \frac{1}{T} \sum_{t=1}^T Z_t(w)^2 \dd \mu(w)]}   \label{eq:gcomp1}
\end{align}
For the last inequality, we used the fact that $\sqrt{\cdot}$ is monotonically increasing and Jensen's inequality.
From the Donsker-Varadhan duality formula of the KL-divergence, we have
\begin{align}
    \frac{1}{\gamma} \E_{\varepsilon}\qty[\sup_{\kl(\nu \mid \mu) \leq M} \gamma \int \frac{1}{T} \sum_{t=1}^T Z_t(w)^2 \dd \mu(w)]
     & \leq \frac{1}{\gamma} \qty{M + \E_{\varepsilon}\qty[\log \int \exp(\frac{\gamma}{T} \sum_{t=1}^T Z_t(w)^2)\dd \nu(w)]}  \notag            \\
     & \leq \frac{1}{\gamma} \qty{M + \log \int \E_{\varepsilon}\qty[\exp(\frac{\gamma}{T} \sum_{t=1}^T Z_t(w)^2)] \dd \nu(w)} \notag            \\
     & \leq \frac{1}{\gamma} \qty{M + \log \int \E_{\varepsilon}\prod_{t=1}^T\qty[\exp(\frac{\gamma}{T} Z_t(w)^2)] \dd \nu(w)} \label{eq:gcomp2}
\end{align}
for any $\gamma > 0$.
Since $Z_t(w) \sim N(0, \sigma(w)^2)$, we have
\begin{align*}
    \E_{Z \sim N(0, \sigma(w)^2)}\qty[\exp(\frac{\gamma}{T} Z^2)] & = \frac{1}{\sqrt{2\pi \sigma(w)^2}} \int e^{\frac{\gamma}{T} Z^2}e^{-\frac{Z^2}{2\sigma(w)^2}} \dd Z   \\
                                                                  & = \frac{1}{\sqrt{2\pi \sigma(w)^2}} \int e^{-\qty[\frac{1}{2\sigma(w)^2} - \frac{\gamma}{T}]Z^2} \dd Z \\
                                                                  & = \frac{1}{\sqrt{2\pi \sigma(w)^2}} \sqrt{\frac{\pi}{\frac{1}{2\sigma(w)^2} - \frac{\gamma}{T}}}       \\
                                                                  & = \sqrt{\frac{1}{1 - 2\frac{\gamma}{T} \sigma(w)^2}}.
\end{align*}
By letting $\gamma = \frac{T}{4}$, we have
\begin{align}
    \E_{Z}\qty[\exp(\gamma Z^2)] & = \sqrt{\frac{1}{1 - \sigma(w)^2 / 2}} \leq \sqrt{2}. \label{eq:gcomp3}
\end{align}
since $\sigma(w)^2 \leq 1$.
Combining Eq.~\eqref{eq:gcomp1},~\eqref{eq:gcomp2}, and~\eqref{eq:gcomp3}, we have
\begin{align*}
    \gcomp(\mathcal{F}_{R, M})
     & \leq \sqrt{\frac{R}{n}} \sqrt{ \E_{\varepsilon_i}\qty[\sup_{\kl(\nu \mid \mu) \leq M} \int \frac{1}{T} \sum_{t=1}^T Z_t(w)^2 \dd \mu(w)]} \\
     & \leq \sqrt{\frac{R}{n}} \sqrt{ \frac{1}{\gamma} \qty{M + \log \int \prod_{t=1}^T \E_{\varepsilon}\qty[\exp(\gamma Z_t(w)^2)] \dd \nu(w)}} \\
     & \leq \sqrt{\frac{R}{n}} \sqrt{ \frac{1}{\gamma} \qty{M + \log \int \sqrt{2}^T \dd \nu(w)}}                                                \\
     & \leq \sqrt{\frac{R}{n}} \sqrt{4 \qty{M/T + \log \sqrt{2}}}.                                                                               \\
\end{align*}
This completes the proof.

\subsection{Proof of Theorem~\ref{thm:generalization-bound}}\label{sec:proof-generalization-bound}
From the definition of $\mathcal{B}_M$, there exists $\mu^\circ \in \mathcal{P}_M$ and $a^\circ \in L^2(\mu)(\norm{a^\circ}_{L^2(\mu)}^2 \leq R)$
such that $f^\circ(x) = f(x;a^\circ, \mu^\circ)$.
Let $\hat a = a_{\hat \mu}$.
Then, from the optimality of $\hat \mu$ and $\hat a$, we have
\begin{align*}
    \mathcal{G}(\hat \mu) & = L(\hat a, \hat \mu) + \lambda \qty(\frac{\lambda_a}{2}\norm{\hat a}_{L^2(\hat \mu)}^2 + \kl(\nu \mid \hat\mu))             \\
                          & \leq \mathcal{G}(\mu^\circ)                                                                                                  \\
                          & = L(a^\circ, \mu^\circ) + \lambda \qty(\frac{\lambda_a}{2}\norm{a_{\mu^\circ}}_{L^2(\mu^\circ)}^2 + \kl(\nu \mid \mu^\circ)) \\
                          & \leq 2\lambda M
\end{align*}
since we set $\lambda_a = 2 M / R$.
Thus, it holds that
\begin{align*}
    \norm{a_{\hat \mu}}_{L^2(\hat \mu)}^2 & \leq 2R, \\
    \kl(\nu \mid \hat \mu)                & \leq 2M.
\end{align*}
From Lemma~\ref{lem:rademacher-complexity}, Rademacher complexity of $\mathcal{F}_{2R,2M}$ is bounded as follows:
\begin{align*}
    \rcomp(\mathcal{F}_{2R,2M}) & = O\qty(\sqrt{\frac{R(M+1)}{n}}).
\end{align*}

For any $f = \int a(w)h(x;w)\dd \mu(w) \in \mathcal{F}_{2R, 2M}$, we have
\begin{align*}
    \norm{f}_\infty & \leq \int \abs{a(w)} \dd \mu(w) \\
                    & \leq \norm{a}_{L^2(\mu)}        \\
                    & \leq \sqrt{2R}.
\end{align*}
Thus, for any $f \in \mathcal{F}_{2R, 2M}$, $l(f(x), y) = l(f(x), f^\circ(x)) \leq 4R$ and $\abs{l'(f(x), y)} \leq 2\sqrt{2R}$.
Let $\mathcal{F}' = \qty{(x, y) \mapsto l(f(x), y) \mid f \in \mathcal{F}_{2R, 2M}}$.
Utilizing the standard uniform bound~\cite{wainwright2019high}, for any $\delta \in [0, 1]$, we have
\begin{align*}
    \sup_{g \in \mathcal{F}'}\qty{\E_\rho[g(x, y)] - \frac{1}{n}\sum_{i=1}^n g(x^{(i)}, y^{(i)})} & \leq 2\rcomp(\mathcal{F}') + 12R \sqrt{\frac{\log 2/\delta}{2n}},
\end{align*}
with probability at least $1 - \delta$ over the choice of $n$ i.i.d. samples $\qty{(x^{(i)}, y^{(i)})}_{i=1}^n \sim \rho$.
From the contraction lemma~\citep{maurer_vector-contraction_2016}, we have
\begin{align*}
    \rcomp(\mathcal{F}') & = \E_\sigma\qty[\sup_{f \in \mathcal{F}_{2R, 2M}} \sum_{i=1}^n \sigma_i l(f(x_i), y_i)] \\
                         & \leq 2 \sqrt{2R}\rcomp(\mathcal{F}_{2R, 2M})                                            \\
                         & = O\qty(R\sqrt{\frac{(M+1)}{n}}),
\end{align*}
since $l(\cdot, y_i)$ is $2\sqrt{2R}$-Lipschitz continuous in $[-\sqrt{2R}, \sqrt{2R}]$.
Combining above arguments, we arrive at
\begin{align*}
    \bar L(a_{\hat \mu}, \mu) & \leq L(a_{\hat \mu}, \hat \mu) + 2\rcomp(\mathcal{F}') + 12R\sqrt{\frac{\log 2 / \delta}{2n}} \\
                              & = O\qty(\sqrt{\frac{M}{n}} + R\sqrt{\frac{(M+1)}{n}} + R\sqrt{\frac{\log 1/\delta}{n}})       \\
                              & = O\qty((R + 1)\qty(\sqrt{\frac{(M+1)}{n}} + \sqrt{\frac{\log 1/\delta}{n}}))                 \\
                              & = O\qty((R + 1)\sqrt{\frac{(M+1) + \log 1/\delta}{n}}),
\end{align*}
since we set $\lambda = 1/\sqrt{n}$. This completes the proof.

\subsection{Proof of Theorem~\ref{thm:lower-bound}}\label{sec:proof-lower-bound}
Let $\mathcal{S} := \qty{\sin(2\pi u\cdot x) \mid u \in \qty{0, 1}^d, \norm{u}_1 = k}$ be a subset of $L^2(\rho_X)$.
Note that $\mathcal{S}$ is an orthonormal system in $L^2(\rho_X)$.
Assume that
\begin{align*}
    \sup_{f \in \mathcal{B}_M, \norm{f}_{\mathcal{B}_M}^2 \leq R} d(f, H_n) < 1/4.
\end{align*}
Then, from Lemma~\ref{lem:approximation-sin}, for any $\psi = \sin(2\pi u \cdot x) \in \mathcal{S}~(\norm{u}_1 = k)$, there exists $a, \mu$ such that $\kl(\nu \mid \mu) = O(d \log dk + k^2), \norm{a}_{L^2(\mu)} = k$, and
\begin{align*}
    \abs{\psi(x) - f(x)} \leq 1/4
\end{align*}
for any $x \in [0, 1]^d$ since $\abs{u\cdot x} \leq \norm{u}_1\norm{x}_{\infty} \leq k$ and $\sin(2\pi k) = \sin(-2\pi k) = 0$.
Therefore, we have
\begin{align*}
    d(\psi, H_n) & \leq \norm{\psi - f}_{L^2(\rho_X)} + d(f, H_n) \\
                 & < 1/4 + 1/4 = 1/2.
\end{align*}
This contradicts Lemma~\ref{lem:dim-lower-bound}.
Thus, we obtain the result.

\section{Proofs for Section~\ref{sec:feature-learning}}
\subsection{Lemmas for Section~\ref{sec:feature-learning}}
\begin{lemma}\label{lem:l2-functional}
    Assume that $l_i$ is the $l^2$-loss and Assumption~\ref{assumption:data} with $\sigma = 0$.
    Then, we have
    \begin{align*}
        U_{\hat \rho}(\mu) & =  \frac{\bar \lambda_a}{2T} \sum_{i = 1}^T Y_i^\top (\hat \Sigma + n \bar \lambda_a I)^{-1}Y_i,                    \\
        U_{\rho}(\mu)      & =  \frac{\bar \lambda_a}{2T} \sum_{i = 1}^T \langle f^\circ_i, (\Sigma + \bar \lambda_a \id)^{-1}f^\circ_i \rangle.
    \end{align*}
\end{lemma}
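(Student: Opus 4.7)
The strategy is to substitute the explicit expressions for $a_\mu^{(i)}$ from Lemma~\ref{lem:a-l2} into the definition $U(\mu) = L(a_\mu, \mu) + \frac{\bar\lambda_a}{2T} \E_\mu[\|a_\mu(w)\|_2^2]$ and simplify. The cleanest way is to take inner products with $a_\mu^{(i)}$ in the optimality relation satisfied by the optimal second layer, which turns the quadratic ``fit + regularizer'' expression into a single quadratic form against $f_i^\circ$ (resp.\ $Y_i$).

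For the population case, write $T,T^*$ as in Lemma~\ref{lem:a-l2} and set $\Sigma = TT^*$; note that under $\sigma = 0$, Assumption~\ref{assumption:data} gives $y_i = f_i^\circ(x)$ so that $\E_\rho[(f_i(x;a_\mu,\mu) - y_i)^2] = \|Ta_\mu^{(i)} - f_i^\circ\|_{L^2(\rho_X)}^2$. Pairing the optimality equation $T^*Ta_\mu^{(i)} + \bar\lambda_a a_\mu^{(i)} = T^* f_i^\circ$ with $a_\mu^{(i)}$ yields $\|Ta_\mu^{(i)}\|^2 + \bar\lambda_a\|a_\mu^{(i)}\|_{L^2(\mu)}^2 = \langle Ta_\mu^{(i)}, f_i^\circ\rangle_{L^2(\rho_X)}$. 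Substituting this into the expanded squared-loss expression cancels all but $\frac{1}{2T}\sum_i(\|f_i^\circ\|_{L^2(\rho_X)}^2 - \langle Ta_\mu^{(i)}, f_i^\circ\rangle)$. Using $Ta_\mu^{(i)} = \Sigma(\Sigma + \bar\lambda_a \id)^{-1} f_i^\circ$ and the resolvent identity $\id - \Sigma(\Sigma+\bar\lambda_a\id)^{-1} = \bar\lambda_a(\Sigma+\bar\lambda_a\id)^{-1}$ immediately produces the claimed formula.

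For the empirical case, plug the matrix formula $a_\mu^{(i)}(w) = h(X;w)^\top(\hat\Sigma_\mu + n\bar\lambda_a I)^{-1}Y_i$ directly into both terms. A short computation gives $f_i(X;a_\mu,\mu) = \hat\Sigma_\mu(\hat\Sigma_\mu+n\bar\lambda_a I)^{-1}Y_i$, so the residual is $n\bar\lambda_a(\hat\Sigma_\mu + n\bar\lambda_a I)^{-1}Y_i$ and $\E_\mu[a_\mu^{(i)}(w)^2] = Y_i^\top(\hat\Sigma_\mu + n\bar\lambda_a I)^{-1}\hat\Sigma_\mu(\hat\Sigma_\mu + n\bar\lambda_a I)^{-1}Y_i$. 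Summing the two contributions one gets a sandwich of $n\bar\lambda_a I + \hat\Sigma_\mu$ between two copies of $(\hat\Sigma_\mu + n\bar\lambda_a I)^{-1}$, which collapses to $(\hat\Sigma_\mu + n\bar\lambda_a I)^{-1}$ and gives the stated identity. The only ``obstacle'' is purely bookkeeping: keeping the factor $1/n$ from $L_{\hat\rho}$ consistent with the factor $n$ inside the resolvent $(\hat\Sigma_\mu + n\bar\lambda_a I)^{-1}$, and making sure the operator identities are applied on the correct space ($L^2(\mu)$ versus $L^2(\rho_X)$); once set up, both formulas follow by the same algebraic cancellation.
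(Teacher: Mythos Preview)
Your proposal is correct and follows essentially the same route as the paper: both plug the explicit ridge-regression solution from Lemma~\ref{lem:a-l2} into $U(\mu)$ and simplify using the identity $n\bar\lambda_a I + \hat\Sigma_\mu$ (resp.\ $\bar\lambda_a\id + \Sigma$) sandwiched between two resolvents. The only cosmetic difference is that for the population case you derive the cancellation by pairing the optimality equation $T^*Ta_\mu^{(i)} + \bar\lambda_a a_\mu^{(i)} = T^*f_i^\circ$ with $a_\mu^{(i)}$, whereas the paper writes the residual directly as $\bar\lambda_a(\Sigma+\bar\lambda_a\id)^{-1}f_i^\circ$ and squares it; these are the same identity read two ways.
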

\begin{proof}
    From Lemma~\ref{lem:a-l2}, we have
    \begin{align*}
        U_{\hat \rho}(\mu) & = \frac{1}{2nT} \sum_{i=1}^T \norm{Y_i - \hat \Sigma (\hat \Sigma + n \bar \lambda_a I)^{-1} Y_i}_2^2
        + \frac{\bar \lambda_a}{2T} \sum_{i=1}^T Y_i^\top (\hat \Sigma + n\lambda_a I)^{-1} \hat \Sigma (\hat \Sigma + n \bar \lambda_a I)^{-1} Y_i           \\
                           & =  \frac{n\bar \lambda_a^2}{2T} \sum_{i=1}^T \norm{(\hat \Sigma + n \bar \lambda_a I)^{-1}Y_i}_2^2
        + \frac{\bar \lambda_a}{2T} \sum_{i=1}^T Y_i^\top (\hat \Sigma + n \bar \lambda_a I)^{-1} \hat \Sigma (\hat \Sigma + n \bar \lambda_a I)^{-1} Y_i     \\
                           & =  \frac{\bar \lambda_a}{2T} \sum_{i = 1}^T Y_i^\top (\hat \Sigma + n \bar \lambda_a I)^{-1}Y_i,                                 \\
        U_{\rho}(\mu)      & = \frac{1}{2nT} \sum_{i=1}^T \norm{f^\circ_i - \Sigma (\Sigma + \bar \lambda_a \id)^{-1} f^\circ_i}_{L^2(\rho_X)}^2
        + \frac{\bar \lambda_a}{2T} \sum_{i=1}^T \langle f_i^\circ, (\Sigma + \lambda_a \id)^{-1} \Sigma (\Sigma + \bar \lambda_a \id)^{-1} f_i^\circ \rangle \\
                           & =  \frac{\bar \lambda_a^2}{2T} \sum_{i=1}^T \norm{(\Sigma + \bar \lambda_a \id)^{-1}f^\circ_i}_{L^2(\rho_X)}^2
        + \frac{\bar \lambda_a}{2T} \sum_{i=1}^T \langle f_i^\circ, (\Sigma + \lambda_a \id)^{-1} \Sigma (\Sigma + \bar \lambda_a \id)^{-1} f_i^\circ \rangle \\
                           & =  \frac{\bar \lambda_a}{2T} \sum_{i = 1}^T \langle f_i^\circ, (\hat \Sigma + \bar \lambda_a \id)^{-1}f_i^\circ \rangle.
    \end{align*}
\end{proof}

\begin{lemma}\label{lem:lower-bound-A}
    Assume that $l_i$ is the $l^2$-loss and Assumption~\ref{assumption:data} with $\sigma = 0$.
    Let $\hat A'(\mu) = \frac{f^\circ(X)^\top \hat \Sigma_\mu f^\circ(X)}{\norm{f^\circ(X)}_2^2}$
    and $A'(\mu) = \frac{E[f^\circ(x)k(x, x')f^\circ(x')]}{\norm{f^\circ}_{L^2(\rho_X)}^2}$.
    Then, we have
    \begin{align*}
        \hat A(\mu) & \geq \hat A'(\mu) \geq \frac{\bar \lambda_a \norm{f^\circ(X)}_{L^2(\rho)}^2}{2nU_{\hat \rho}(\hat \mu)} - \bar \lambda_a, \\
        A(\mu)      & \geq A'(\mu) \geq \frac{\bar \lambda_a \norm{f^\circ}_{L^2(\rho)}^2}{2U(\mu)} - \bar \lambda_a.
    \end{align*}
\end{lemma}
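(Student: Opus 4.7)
\textbf{Proof plan for Lemma~\ref{lem:lower-bound-A}.} Each of the two displayed chains contains two inequalities. The left inequality (comparison of $\hat A$ vs.\ $\hat A'$, resp.\ $A$ vs.\ $A'$) is a normalization statement, while the right inequality is the substantive one and uses the closed-form expression for $U_{\hat\rho}(\mu)$ and $U_\rho(\mu)$ from Lemma~\ref{lem:l2-functional}. I will prove the population version in detail; the empirical case follows by the same argument after substituting the matrix formula for the operator one and keeping careful track of the $n$-factors.

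For the left inequality $A(\mu)\ge A'(\mu)$, observe that the two quantities differ only by the factor $\sqrt{\E[k_\mu(x,x')^2]}$ in the denominator of $A(\mu)$. Using $|h(\cdot;w)|\le 1$ from Assumption~\ref{assumption:activation} and Cauchy--Schwarz,
\begin{align*}
  k_\mu(x,x')^2 & = \Bigl(\int h(x;w)h(x';w)\dd\mu(w)\Bigr)^2 \\
                & \le k_\mu(x,x)\,k_\mu(x',x') \le 1,
\end{align*}
so $\sqrt{\E[k_\mu(x,x')^2]}\le 1$, which forces $A(\mu)\ge A'(\mu)$. The empirical analogue $\hat A(\mu)\ge \hat A'(\mu)$ is handled by the same bound applied to the Hilbert--Schmidt norm of the empirical kernel operator (entries bounded by $1$ in absolute value yield the required normalization).

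The right inequality is the core of the proof. By Lemma~\ref{lem:l2-functional},
\begin{align*}
  U_\rho(\mu) = \frac{\bar\lambda_a}{2}\bigl\langle f^\circ,(\Sigma_\mu+\bar\lambda_a\id)^{-1}f^\circ\bigr\rangle_{L^2(\rho_X)}.
\end{align*}
Set $v = f^\circ/\|f^\circ\|_{L^2(\rho_X)}$ and let $\sigma_v$ denote the spectral measure of $\Sigma_\mu$ associated with $v$. Since $t\mapsto 1/(t+\bar\lambda_a)$ is convex on $[0,\infty)$, Jensen's inequality gives
\begin{align*}
  \langle v,(\Sigma_\mu+\bar\lambda_a\id)^{-1}v\rangle
   & = \int \frac{1}{t+\bar\lambda_a}\dd\sigma_v(t) \\
   & \ge \frac{1}{\int t\dd\sigma_v(t)+\bar\lambda_a} = \frac{1}{\langle v,\Sigma_\mu v\rangle+\bar\lambda_a}.
\end{align*}
Multiplying by $\bar\lambda_a\|f^\circ\|^2/2$ and rearranging yields $\langle v,\Sigma_\mu v\rangle\ge \bar\lambda_a\|f^\circ\|^2/(2U_\rho(\mu))-\bar\lambda_a$. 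Since $\langle v,\Sigma_\mu v\rangle = A'(\mu)$ by definition, this is the desired bound.

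The empirical bound is obtained in the same way, starting from $U_{\hat\rho}(\mu) = \frac{\bar\lambda_a}{2}f^\circ(X)^\top(\hat\Sigma_\mu+n\bar\lambda_a I)^{-1}f^\circ(X)$ and applying Jensen to the eigen-decomposition of $\hat\Sigma_\mu+n\bar\lambda_a I$ in the direction of $f^\circ(X)/\|f^\circ(X)\|_2$; the extra factor of $n$ inside the inverse is absorbed by the $n$ in the denominator of the stated bound and by the convention identifying $\|f^\circ(X)\|_{L^2(\rho)}^2$ with the empirical $L^2$ mass $\|f^\circ(X)\|_2^2/n$. The only nonroutine point is the operator-Jensen step above; the main obstacle is simply verifying that the various normalizations (the empirical vs.\ population $L^2$ norm, the $n\bar\lambda_a$ vs.\ $\bar\lambda_a$, and the Frobenius vs.\ Hilbert--Schmidt conventions) align correctly, which is why I treat the population version as the cleaner template.
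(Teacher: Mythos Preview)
Your proposal is correct and follows essentially the same route as the paper. The paper also obtains $A(\mu)\ge A'(\mu)$ from $|h|\le 1$ (hence $\E[k_\mu(x,x')^2]\le 1$ and $\|\hat\Sigma_\mu\|_{\mathrm F}\le n$), and for the substantive inequality it diagonalizes $\Sigma_\mu$ via the SVD of $T$, writes $U_\rho(\mu)=\tfrac{\bar\lambda_a}{2}\|f^\circ\|^2\sum_i \alpha_i^2/(\mu_i+\bar\lambda_a)$, and applies Jensen to $t\mapsto 1/t$; your spectral-measure formulation is a repackaging of the same computation.
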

\begin{proof}
    Since $\norm{\hat \Sigma_\mu} \leq n$ and $E[k(x, x')^2] \leq 1$ from $\abs{h(x;w)} \leq 1$,
    we have $\hat A(\mu) \geq \hat A'(\mu)$ and $A(\mu) \geq A'(\mu)$.
    Let $T = \sum_{i=1}^\infty \mu_i e_i f_i$ be the singular value decomposition of $T$, where $e_m, f_m$ are the orthonormal basis of $L^2(\rho), L^2(\mu)$, respectively.
    Then, there exists $(\alpha_i)_{i \geq 1}$ such that $\sum_i \alpha_i^2 = 1$ and $f^\circ = \norm{f^\circ}_{L^2(\rho)} \alpha_i e_i$.
    Utilizing this expression, we have
    \begin{align*}
        U_{\rho}(\mu) & = \frac{\bar \lambda_a}{2}\norm{f^\circ}_{L^2(\rho)}^2 \sum_{i} \frac{\alpha_i^2}{\mu_i + \bar \lambda_a}     \\
                      & \geq \frac{\bar \lambda_a}{2}\norm{f^\circ}_{L^2(\rho)}^2 \frac{1}{\sum_i \alpha_i^2(\mu_i + \bar \lambda_a)} \\
                      & \geq \frac{\bar \lambda_a}{2}\norm{f^\circ}_{L^2(\rho)}^2 \frac{1}{A'(\mu) + \bar \lambda_a}.
    \end{align*}
    The second inequality follows from the convexity of $1/x$ and Jensen's inequality.
    By the same argument, we have
    \begin{align*}
        U_{\rho}(\mu) & \geq \frac{\bar \lambda_a}{2}\norm{f^\circ}_{L^2(\rho)}^2 \frac{1}{n\hat A'(\mu) + n\bar \lambda_a}.
    \end{align*}
    By transposition of the above inequalities, we obtain the results.
\end{proof}

\begin{lemma}\label{lem:intrinsic-noise}
    Assume that $l_i$ is the $l^2$-loss and Assumption~\ref{assumption:data} holds.
    Then, we have
    \begin{align*}
        \E_\varepsilon [U_{\hat \rho}(\mu)] & =\frac{\bar \lambda_a}{2T} \sum_{i=1}^T f_i^\circ(X)^\top (\hat \Sigma + n \bar \lambda_a I)^{-1} f_i^\circ(X) + \frac{\bar \lambda_a \sigma^2}{6n} (n - d_{\bar\lambda_a}(\mu)).
    \end{align*}
\end{lemma}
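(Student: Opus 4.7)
The plan is to start from the closed-form expression for $U_{\hat \rho}(\mu)$ provided by Lemma~\ref{lem:l2-functional} and perform a signal/noise decomposition. Concretely, setting $M := (\hat \Sigma_\mu + n\bar\lambda_a I)^{-1}$, Lemma~\ref{lem:l2-functional} gives
\begin{align*}
U_{\hat\rho}(\mu) = \frac{\bar\lambda_a}{2T}\sum_{i=1}^T Y_i^\top M Y_i.
\end{align*}
Under Assumption~\ref{assumption:data}, $Y_i = f_i^\circ(X) + \varepsilon_i$, so I would expand
\begin{align*}
Y_i^\top M Y_i = f_i^\circ(X)^\top M f_i^\circ(X) + 2 f_i^\circ(X)^\top M \varepsilon_i + \varepsilon_i^\top M \varepsilon_i.
\end{align*}

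Next, I would take the expectation with respect to $\varepsilon$. Since the $\varepsilon_i$ are independent with mean zero, the cross term vanishes. The coordinates of $\varepsilon_i$ are i.i.d.\ uniform on $[-\sigma,\sigma]$, whose variance is $\sigma^2/3$, so $\E[\varepsilon_i \varepsilon_i^\top] = (\sigma^2/3) I$. Hence $\E[\varepsilon_i^\top M \varepsilon_i] = (\sigma^2/3)\,\tr(M)$, independent of $i$.

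The key algebraic step is to rewrite $\tr(M)$ in terms of the degrees of freedom. From the resolvent identity $\hat\Sigma_\mu M = I - n\bar\lambda_a M$ I get
\begin{align*}
d_{\bar\lambda_a}(\mu) = \tr(\hat\Sigma_\mu M) = n - n\bar\lambda_a \tr(M), \qquad \text{i.e.} \quad \tr(M) = \frac{n - d_{\bar\lambda_a}(\mu)}{n\bar\lambda_a}.
\end{align*}
Substituting back,
\begin{align*}
\E_\varepsilon[U_{\hat\rho}(\mu)] = \frac{\bar\lambda_a}{2T}\sum_{i=1}^T f_i^\circ(X)^\top M f_i^\circ(X) + \frac{\bar\lambda_a}{2} \cdot \frac{\sigma^2}{3} \cdot \frac{n - d_{\bar\lambda_a}(\mu)}{n\bar\lambda_a},
\end{align*}
and simplifying the noise term yields exactly the stated expression (up to the $\bar\lambda_a$ factor in the second summand, which I expect to come out of the simplification consistent with the paper's display in Section~\ref{sec:label-noise}).

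This proof is essentially a routine computation; there is no real obstacle. The only subtle point is using the resolvent identity to convert $\tr(M)$ into the degrees of freedom, and being careful that the variance of the uniform noise on $[-\sigma,\sigma]$ is $\sigma^2/3$ rather than $\sigma^2$, which accounts for the factor of $6$ (rather than $2$) in the denominator.
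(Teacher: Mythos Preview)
Your proposal is correct and follows essentially the same approach as the paper: start from the closed-form $U_{\hat\rho}(\mu)=\frac{\bar\lambda_a}{2T}\sum_i Y_i^\top(\hat\Sigma_\mu+n\bar\lambda_a I)^{-1}Y_i$, expand $Y_i=f_i^\circ(X)+\varepsilon_i$, take the expectation using $\E[\varepsilon_i\varepsilon_i^\top]=\tfrac{\sigma^2}{3}I$, and convert $\tr(M)$ into $n-d_{\bar\lambda_a}(\mu)$ via the resolvent identity. One minor technicality: Lemma~\ref{lem:l2-functional} is stated under $\sigma=0$, but its derivation only uses the ridge-regression formula from Lemma~\ref{lem:a-l2} and hence the expression $U_{\hat\rho}(\mu)=\frac{\bar\lambda_a}{2T}\sum_i Y_i^\top M Y_i$ is valid for arbitrary labels $Y_i$, so your citation goes through.
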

\begin{proof}
    \begin{align*}
        \E_\varepsilon [U_{\hat \rho}(\mu)]
         & = \frac{\bar \lambda_a}{2T} \sum_{i=1}^T \E_\varepsilon[(f_i^\circ(X) + \varepsilon_i)^\top (\hat \Sigma + n \bar \lambda_a I)^{-1} (f_i^\circ(X) + \varepsilon_i)]                             \\
         & =\frac{\bar \lambda_a}{2T} \sum_{i=1}^T f_i^\circ(X)^\top (\hat \Sigma + n \bar \lambda_a I)^{-1} f_i^\circ(X) + \frac{\bar \lambda_a \sigma^2}{6} \tr((\hat \Sigma + n \bar \lambda_a I)^{-1}) \\
         & =\frac{\bar \lambda_a}{2T} \sum_{i=1}^T f_i^\circ(X)^\top (\hat \Sigma + n \bar \lambda_a I)^{-1} f_i^\circ(X) + \frac{\bar \lambda_a \sigma^2}{6n} (n - d_{\bar\lambda_a}(\mu)).
    \end{align*}
\end{proof}

\subsection{Proof of Theorem~\ref{thm:alignment}}\label{sec:proof-alignment}
For $r > 0$, let
\begin{align*}
    \tilde f_r(z) = \begin{cases}
                        \tilde f(z)                               & \text{if } \abs{z} \leq r                         \\
                        \tilde f(r) - \sgn(\tilde f(r)) (z - r)   & \text{if } r \leq z \leq r + \abs{\tilde f(r)}    \\
                        \tilde f(-r) - \sgn(\tilde f(-r)) (r - z) & \text{if } -r - \abs{\tilde f(-r)} \leq z \leq -r \\
                        0                                         & \text{otherwise}
                    \end{cases}.
\end{align*}
This is continuous, differentiable almost everywhere and its derivative satisfies $\abs{\tilde f_r'(x)} \leq 1$ a.s.
From Lemma~\ref{lem:approximation-sin}, there exists $f := \int a(w)h(x;w)\dd \mu(w)$ such that $\kl(\nu \mid \mu) = O(r^2 / \varepsilon^2 + d \log \frac{d r r_x}{\varepsilon}), \norm{a}_{L^2(\mu)} = O(r)$ and
\begin{align*}
    \abs{\tilde f_r(u^\circ \cdot x) - f(x)} & \leq \varepsilon
\end{align*}
for any $x \in \R^d$ such that $\abs{u^\circ \cdot x} \leq r, \norm{x}_2 \leq r_x \sqrt{d}$.
Since $f^\circ(x) = \tilde f(u^\circ \cdot x) = \tilde f_r(u^\circ \cdot x)$ for $\abs{u^\circ \cdot x} \leq r$, we have
\begin{align*}
    \abs{f^\circ(x) - f(x)} & \leq \varepsilon
\end{align*}
for any $x \in \R^d$ such that $\norm{x} \leq r_x \sqrt{d}$ and $\abs{u \cdot x} \leq r$.
From the tail bound on Gaussian and chi-squared distribution~\citep{wainwright2019high}, we have
\begin{align*}
    P\qty(\frac{1}{d} \norm{x}^2 \geq \frac{r_x^2}{d}, \abs{u \cdot x} \geq r) \leq 2\exp(-r^2 / 2) + 2\exp(-d(r_x^2 - 1) / 8).
\end{align*}
Thus, $\norm{f^\circ - f}_{L^2(\rho_X)}$ is evaluated as follows:
\begin{align*}
    \norm{f^\circ - f}_{L^2(\rho_X)}^2 & \leq (\norm{f^\circ}_\infty + \norm{f}_{\infty})^2P(\norm{x} \geq \sqrt{d}r_x, \abs{w \cdot x} \geq r) + \int_{\norm{x} \leq \sqrt{d}, \abs{w \cdot x} \leq r} (f^\circ(x) - f(x))^2 \dd \rho_X(x) \\
                                       & \leq (\norm{a}_{L^2(\mu)} + 1)^2P\qty(\frac{1}{n} \norm{x}^2 \geq \frac{r_x^2}{n}, \abs{u \cdot x} \geq r) + \varepsilon^2                                                                         \\
                                       & \leq 1/16 := \bar \varepsilon^2
\end{align*}
by setting $\varepsilon$ to a sufficiently small constant and $r, r_x$ sufficiently large constants which are independent of $d$.
Thus, there exists $M = O(d \log d), R = O(1)$ such that $\norm{f^\circ - f}_{L^2(\rho_X)}^2$, where $f \in \mathcal{F}_{R, M}$.
By the same argument in the proof of Theorem~\ref{thm:generalization-bound}, we have
\begin{align*}
    L_{\hat \rho}(a, \mu) & \leq L_\rho(a, \mu) + O\qty(R\sqrt{\frac{(M + 1) + \log 1/\delta}{n}})
\end{align*}
with probability at least $1 - \delta$ over the choice of training data.
Thus, by setting $n \geq c_3'(d \log d + \log 1/\delta)$ for sufficiently large $c_3'$, we have
\begin{align*}
    L_{\hat \rho}(a, \mu) & \leq L_\rho(a, \mu) + \bar \varepsilon^2 \leq 2\bar \varepsilon^2
\end{align*}

From the optimality of $\hat \mu$ and $\hat a = a_{\hat \mu}$, we have
\begin{align*}
    \mathcal{G}_{\hat \rho}(\hat \mu) & = L(\hat a, \hat \mu) + \frac{\bar \lambda_a}{2} \norm{\hat a}_{L^2(\hat \mu)}^2 + \lambda \kl(\nu \mid \hat \mu) \\
                                      & \leq \mathcal{G}(a, \mu)                                                                                          \\
                                      & \leq 2\tilde \varepsilon^2 + 2 \lambda M                                                                          \\
                                      & \leq 3\tilde \varepsilon^2,
\end{align*}
by setting $\lambda = \bar \epsilon^2 / 2M$ and $\lambda_a = M / R$.
Thus, it holds that
\begin{align*}
    \norm{\hat a}_{L^2(\hat \mu)}^2 & \leq 12R, \\
    \kl(\nu  \mid \hat \mu)         & \leq 6M.
\end{align*}
Then, by the same reasoning as in the proof of Theorem~\ref{thm:generalization-bound}, we have
\begin{align*}
    U_\rho(\hat \mu) & \leq E_\rho[l(f(x;\hat \mu), y)] + \frac{\bar \lambda_a}{2} \norm{a_{\hat \mu}}_{L^2(\mu)}^2                                                               \\
                     & \leq E_{\hat \rho}[l(f(x;\hat \mu), y)] + \frac{\bar \lambda_a}{2} \norm{a_{\hat \mu}}_{L^2(\mu)}^2 + O\qty((R + 1)\sqrt{\frac{M + 1 + \log 1/\delta}{n}}) \\
                     & \leq \mathcal{G}_{\hat \rho}(\hat \mu) + O\qty((R + 1)\sqrt{\frac{M + 1 + \log 1/\delta}{n}})                                                              \\
                     & \leq 3 \bar \varepsilon^2 + O\qty((R + 1)\sqrt{\frac{M + 1 + \log 1/\delta}{n}})                                                                           \\
                     & \leq 4\bar \varepsilon^2
\end{align*}
by setting $n = c_3(d \log d + \log 1/\delta)$ for a sufficiently large constant $c_3 \geq c_3'$
with probability at least $1 - \delta$ over the choice of training data.
Therefore, it holds that
\begin{align*}
    A(\hat \mu) \geq A'(\hat \mu) & \geq \frac{\bar \lambda_a }{2U(\hat \mu)} - \bar \lambda_a                          \\
                                  & \geq \frac{\bar \varepsilon^2 / (2R)}{8 \bar \varepsilon^2} - \bar \varepsilon^2/2R \\
                                  & \geq \frac{1 - 8\bar \epsilon^2}{16R}                                               \\
                                  & = \Omega(1).
\end{align*}

Let $\qty{u_i}_{i=1}^d~(u_1 = u)$ be an orthonormal basis of $\R^d$.
Then, the symmetry of $\mu_0 = \nu$ implies that $\int \frac{\langle u_i, w\rangle^2}{\norm{w}_2^2} \dd \mu_0(w)$ is equal for any $i$.
Since $\sum_{i=1}^d \int \frac{\langle u_i, w\rangle^2}{\norm{w}_2^2} \dd \mu_0(w) = 1$, we have $\int \frac{\langle u, w\rangle^2}{\norm{w}_2^2} \dd \mu_0(w) = 1 / d$.
In addition let $f_i(x) = \tilde f(u_i \cdot x)$.
Then, we have $\E_\rho[f_i(x)f_j(x)] = 0$ for $i\neq j$.
Thus, we have
\begin{align*}
    \E_{\rho_X}\qty[\sum_{i=1}^d f_i(x)k(x, x')f_i(x')] & \leq \sqrt{\E_{\rho_X}\qty[\qty(\sum_{i=1}^d f_i(x)f_i(x'))^2]} \sqrt{\E_{\rho}[k(x, x')^2]} \\
                                                        & = \sqrt{\sum_{i=1}^d \E_{\rho}\qty[f_i(x)^2 f_i(x')^2]} \sqrt{\E_{\rho}[k(x, x')^2]}         \\
                                                        & = \sqrt{d} \norm{f^\circ(x)}_{L^2(\rho_X)}^2 \sqrt{\E_{\rho}[k(x, x')^2]}.
\end{align*}
Since $\rho_X$ and $\mu_0$ are rotationally invariant, it holds that
\begin{align*}
    \E_{\rho}\qty[\sum_{i=1}^d f_i(x)k_{\mu_0}(x, x')f_i(x')] & = d \E_{\rho}\qty[f_1(x)k_{\mu_0}(x, x')f_1(x')]           \\
                                                              & = d \E_{\rho}\qty[f^\circ(x) k_{\mu_0}(x, x')f^\circ(x')].
\end{align*}
Thus, the kernel alignment at the initialization can be evaluated as follows:
\begin{align*}
    A(\mu_0) & = \frac{\E_{\rho}\qty[f^\circ(x) k_{\mu_0}(x, x')f^\circ(x')]}{\norm{f^\circ(x)}_{L^2(\rho_X)}^2 \sqrt{\E_{\rho}[k(x, x')^2]}} \\
             & \leq \frac{1}{\sqrt{d}}.
\end{align*}

Let $u = u^\parallel + u^{\perp}$ and $x = x^\parallel + x^\perp$ be the orthogonal decomposition of $u$ and $x$ with respect to the space spanned by the rows of $u^\circ$.
Then, we have
\begin{align*}
    \E_\rho[yh(x;w)] & = \int \tilde f(u^\circ \cdot x) \tilde h(x \cdot u + b) \dd \rho_X(x)                                       \\
                     & = \int \tilde f(u^\circ \cdot x^\parallel)\tilde h(u \cdot x^\parallel + u \cdot x^\perp + b) \dd \rho_X(x).
\end{align*}
Here, $x^\parallel$ and $x^\perp$ follows the normal distribution $N(0, I_k), N(0, I_{d-k})$ independently.
In addition, $u \cdot x^\perp$ follows $N(0, \norm{u^{\perp}})$.
Therefore, we have
\begin{align*}
    \E_\rho[yh(x;w)] & = \int \tilde f(u^\circ \cdot x^\parallel)\tilde h_{\norm{u^\perp}}(u^\parallel \cdot x^\parallel + \tau) \dd \nu_k(x^\parallel).
\end{align*}
where $\tilde h_\tau(x) = \E_{z\sim \nu}[\tilde h(x + \tau z)]$.
For $\tilde h_\tau$, we have the following lemma.
\begin{lemma}
    For any $\tau \in \R$, we have the following results
    \begin{itemize}
        \item $\tilde h_\tau$ is $L_\tau$-Lipschitz continuous with $L_\tau = \min\qty{1, \frac{4}{\pi \tau}}$.
    \end{itemize}
\end{lemma}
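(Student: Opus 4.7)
The plan is to bound $|\tilde h_\tau'(x)|$ by the minimum of two quantities --- a trivial pointwise bound using the derivative of $\tanh$, and a smoothing bound obtained via Gaussian integration by parts --- and then conclude Lipschitz continuity by the mean value theorem.

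First, differentiating under the integral (justified by dominated convergence since $|\tanh'| = \mathrm{sech}^2 \le 1$ and the Gaussian has all moments) gives
\[
\tilde h_\tau'(x) = \E_{z \sim \nu}[\tilde h'(x + \tau z)],
\]
and since $\|\tilde h'\|_\infty = \|\mathrm{sech}^2\|_\infty \le 1$, I immediately obtain $|\tilde h_\tau'(x)| \le 1$. This yields the first branch of the minimum.

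For the $1/\tau$ bound I would apply Gaussian integration by parts (Stein's identity): viewing $\tilde h_\tau$ as the convolution of $\tilde h$ with the density $\phi_\tau$ of $\tau z$, and using $\phi'(z) = -z\phi(z)$ for the standard normal density together with the boundary terms vanishing at $\pm\infty$, one obtains
\[
\tilde h_\tau'(x) = \frac{1}{\tau}\,\E_{z \sim \nu}[z\,\tilde h(x + \tau z)].
\]
Combining $\|\tilde h\|_\infty = 1$ with $\E[|z|] = \sqrt{2/\pi}$ then gives $|\tilde h_\tau'(x)| \le \sqrt{2/\pi}/\tau$. The elementary numerical inequality $\sqrt{2/\pi} \le 4/\pi$ (equivalent to $\pi \le 8$) upgrades this to $|\tilde h_\tau'(x)| \le 4/(\pi\tau)$, which is the second branch of the minimum stated in the lemma.

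Taking the minimum of the two bounds and integrating along a line segment between two points yields the claimed Lipschitz constant $L_\tau = \min\{1,\,4/(\pi\tau)\}$. There is no substantive obstacle in this argument; the only care required is in justifying the two interchanges of derivative and integral, both of which follow at once from the uniform boundedness of $\tilde h$ and $\tilde h'$ together with the finite moments of the Gaussian. Conceptually, the lemma encodes the standard fact that Gaussian smoothing at scale $\tau$ sends functions bounded by $1$ to functions that are $O(1/\tau)$-Lipschitz, while of course never making a function \emph{less} regular than it already was.
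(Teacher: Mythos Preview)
Your proof is correct and considerably more streamlined than the paper's argument, though the underlying route is different. The paper does not invoke Stein's identity; instead it first argues that $\tilde h_\tau'$ is even, nonnegative, and maximized at the origin (by computing $\tilde h_\tau''$ and checking its sign), and then bounds $\tilde h_\tau'(0) = \E_{z\sim\nu}[\mathrm{sech}^2(\tau z)]$ directly via the pointwise estimate $\mathrm{sech}^2(u) \le 4e^{-2|u|}$, completing the square inside the Gaussian integral, and applying Mill's inequality. This yields the slightly sharper constant $2/(\pi\tau)$, though only $4/(\pi\tau)$ is actually asserted. Your approach, by contrast, treats $\tilde h$ as a generic bounded function and exploits only the smoothing effect of the Gaussian convolution, producing the intermediate constant $\sqrt{2/\pi}/\tau$; this is more conceptual, avoids any computation specific to $\tanh$, and would apply verbatim to any $\tilde h$ with $\|\tilde h\|_\infty \le 1$. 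Both arguments cover the trivial branch $L_\tau \le 1$ identically.
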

\begin{proof}
    From the definition of $\tilde h_\tau$, we have
    \begin{align*}
        \tilde h_\tau'(x) & = \E_{z \sim N(0, 1)}[\tanh'(x + \tau z)]                                                  \\
                          & = \frac{1}{\sqrt{2\pi}} \int_{-\infty}^\infty \tanh'(x + \tau z)e^{-z^2/2} \dd z           \\
                          & = \frac{1}{\sqrt{2\pi}} \int_{-\infty}^\infty \tanh'(\tau z)e^{-(z - x / \tau)^2/2} \dd z.
    \end{align*}
    Thus, $\tilde h_\tau''(x)$ is given by
    \begin{align*}
        \tilde h_\tau''(x)
         & = \frac{1}{\sqrt{2\pi}\tau} \int_{-\infty}^\infty (z - x / \tau) \tanh'(\tau z)e^{-(z - x / \tau)^2/2} \dd z                     \\
         & = \frac{1}{\sqrt{2\pi}\tau} \int_{-\infty}^\infty z \tanh'(x + \tau z)e^{-z^2/2} \dd z                                           \\
         & = \frac{1}{\sqrt{2\pi}\tau} \int_0^\infty \frac{ze^{-z^2/2}}{\cosh^2(x + \tau z)} -\frac{ze^{-z^2/2}}{\cosh^2(x - \tau z)}\dd z.
    \end{align*}
    From the symmetry of $\cosh$, we have
    \begin{align*}
        \frac{1}{\cosh^2(x + \tau z)} -\frac{1}{\cosh^2(x - \tau z)} & \leq 0
    \end{align*}
    for any $x \geq 0$. Thus, we have $\tilde h_\tau''(x) \leq 0$ for any $x \geq 0$, that is, $\tilde h_\tau(x)$ is monotonically decreasing in $[0, \infty]$.
    From the symmetry of $\tilde h_\tau'(x)$ and $\tilde h_\tau'(x) \geq 0$ for any $x$,
    $\tilde h_\tau(0) \geq \abs{\tilde h_\tau(x)}$ for any $x$.
    Thus, it suffices to evaluate $\tilde h_\tau'(0)$.
    Here, we have
    \begin{align*}
        \tilde h_\tau'(0) & = \E_{z\sim \nu}[\tilde h'(bz)]                                                                         \\
                          & = \frac{1}{\sqrt{2\pi}} \int_{-\infty}^\infty \frac{4}{(e^{bz} + e^{-bz})^2} \exp(-\frac{z^2}{2}) \dd z \\
                          & \leq \frac{8}{\sqrt{2\pi}} \int_{0}^\infty e^{-2bz} \exp(-\frac{z^2}{2}) \dd z                          \\
                          & \leq \frac{8}{\sqrt{2\pi}} e^{2\tau^2}\int_{0}^\infty \exp(-\frac{(z + 2\tau)^2}{2}) \dd z              \\
                          & \leq \frac{8}{\sqrt{2\pi}} e^{2\tau^2}\int_{2\tau}^\infty \exp(-\frac{z'^2}{2}) \dd z'                  \\
                          & \leq \frac{8}{\sqrt{2\pi}} e^{2\tau^2}\int_{2\tau}^\infty \exp(-\frac{z'^2}{2}) \dd z'                  \\
                          & \leq \frac{8}{\sqrt{2\pi}} e^{2\tau^2} P(Z' > 2\tau)                                                    \\
                          & \leq \frac{8}{\sqrt{2\pi}} e^{2\tau^2} \sqrt{\frac{1}{2\pi}} \frac{e^{-2\tau^2}}{2\tau}                 \\
                          & = \frac{2}{\pi \tau}.
    \end{align*}
    For the last inequality, we used Mill's inequality. Obviously, $\tilde h_\tau'(0) \leq 1$.
    This completes the proof.
\end{proof}

Utilizing the above lemma, we have
\begin{align*}
    \abs{\E_\rho[f^\circ(x) h(x;w)]}^2 & \leq \abs{\int \tilde f(u^\circ \cdot x^\parallel)\tilde h_{\norm{u^\perp}}(u^\parallel \cdot x^\parallel + \tau) \dd \nu_k(x^\parallel)}^2                                         \\
                                       & \leq \abs{\int \tilde f(u^\circ \cdot x^\parallel)\qty[\tilde h_\tau(\tau) + (\tilde h_\tau(u^\parallel \cdot x^\parallel + \tau) - \tilde h_\tau(\tau))] \dd \nu_k(x^\parallel)}^2 \\
                                       & \leq \abs{\int \tilde f(u^\circ \cdot x^\parallel)(\tilde h_\tau(u^\parallel \cdot x^\parallel + \tau) - \tilde h_\tau(\tau)) \dd \nu_k(x^\parallel)}^2                             \\
                                       & \leq \norm{f^\circ}_{L^2(\rho_X)}^2 \int (\tilde h_\tau(u^\parallel \cdot x^\parallel + \tau) - \tilde h_\tau(\tau))^2 \dd\nu(z_1)                                                  \\
                                       & \leq \norm{f^\circ}_{L^2(\rho_X)}^2 \int (L_\tau u^\parallel \cdot x^\parallel)^2 \dd\nu_k(x^\parallel)                                                                             \\
                                       & \leq \norm{f^\circ}_{L^2(\rho_X)}^2 \int \qty(L_\tau \norm{u^\parallel} z)^2 \dd\nu(z)                                                                                              \\
                                       & \leq \norm{f^\circ}_{L^2(\rho_X)}^2 L_\tau^2 \norm{u^\parallel}^2                                                                                                                   \\
                                       & \leq \norm{f^\circ}_{L^2(\rho_X)}^2 \frac{8\norm{u^\parallel}^2}{\pi^2 \norm{u^\perp}^2}.
\end{align*}
From the boundedness of $h$, we have $\abs{\E_\rho[f^\circ(x)h(x;w)]}^2 \leq \norm{f^\circ}_{L^2(\rho_X)}^2$.
Combining above arguments, we have
\begin{align*}
    \abs{\E_\rho[f^\circ(x)h(x;w)]}^2 & \leq \norm{f^\circ}_{L^2(\rho_X)}^2 \min\qty{1, \frac{8\norm{w^\parallel}^2}{\pi^2 \norm{w^\perp}^2}} \\
\end{align*}
By the way, for any $a, b$, we have
\begin{align*}
    \min\qty{1, \frac{a^2}{b^2}} \leq 2 \frac{a^2}{a^2 + b^2}
\end{align*}
Therefore, it holds that
\begin{align*}
    \E_\rho[f^\circ(x)h(x;w)]^2 & \leq \norm{f^\circ}_{L^2(\rho_X)} \frac{16}{\pi^2} \frac{\norm{w^\parallel}^2}{\norm{w}^2}
\end{align*}
Recall that $A'(\mu) = \frac{E_\rho[f^\circ(x) E_\mu[h(x;w)h(x';w)] f^\circ(x')]}{\norm{f^\circ}_{L^2(\rho)}^2} = \frac{E_\mu[E_{\rho}[f^\circ (x)h(x;w)]^2]}{\norm{f^\circ}_{L^2(\rho)}^2}$.
Thus, from the definition of $P(\mu)$, we have
\begin{align*}
    P(\hat \mu) & = \E\qty[\frac{\norm{w^\parallel}^2}{\norm{w}^2}] \\
                & = \Omega(A'(\hat \mu)) = \Omega(1).
\end{align*}

\subsection{Proof of Theorem~\ref{thm:label-noise}}\label{sec:proof-label-noise}
Tha label noise procedure can be regarded as a SGD-MFLD in~\cite{suzuki_convergence_2023}.
Thus, as in the proof of Theorem~\ref{thm:convergence}, we follow the framework in~\citet{suzuki_convergence_2023}.

First, we show the convexity of $\bar U(\mu):= E_{\tilde \varepsilon}[U_{\bar \epsilon}(\mu)]$.
The functional $U_{\tilde \varepsilon}(\mu)$ can be written as
\begin{align*}
    U_{\tilde \varepsilon}(\mu) & = \frac{1}{T} \sum_{i=1}^T \qty[\frac{1}{2n} \norm{Y_i - \hat \Sigma_{\mu} (\hat \Sigma_{\mu} + n \bar \lambda_a I)^{-1} (Y_i + \tilde \varepsilon_i)}_2^2 + \frac{\bar \lambda_a}{2} (Y_i + \tilde \varepsilon_i)^\top (\hat \Sigma_{\mu} + n\bar \lambda_a I)^{-1}\hat \Sigma_{\mu} (\hat \Sigma_{\mu} + n \bar \lambda_a I)^{-1} (Y_i + \tilde \varepsilon_i)] \\
                                & = \frac{1}{T} \sum_{i=1}^T \qty[\frac{\bar \lambda_a}{2} Y_i^\top (\hat \Sigma_{\mu} + n\bar \lambda_a I)^{-1} Y_i + \frac{1}{2n} \tilde \varepsilon_i^\top (\hat \Sigma_{\mu} + n\bar \lambda_a I)^{-1}\hat \Sigma_{\mu} \tilde \varepsilon_i]                                                                                                                   \\
                                & = \frac{1}{T} \sum_{i=1}^T \qty[\frac{\bar \lambda_a}{2} Y_i^\top (\hat \Sigma_{\mu} + n\bar \lambda_a I)^{-1} Y_i - \frac{\bar \lambda_a}{2} \tilde \varepsilon_i^\top (\hat \Sigma_{\mu} + n\bar \lambda_a I)^{-1} \tilde \varepsilon_i + \frac{\norm{\tilde \varepsilon_i}^2}{2n}] .                                                                           \\
\end{align*}
Taking the expectation of the above equation with respect to $\tilde \varepsilon_i$, we have
\begin{align*}
    \E_{\varepsilon}[U_{\tilde \varepsilon}(\mu)] & = \frac{1}{T} \sum_{i=1}^T \frac{\bar \lambda_a}{2} Y_i^\top (\hat \Sigma_{\mu} + n\bar \lambda_a I)^{-1} Y_i - \frac{\tilde \sigma^2 \bar \lambda_a}{6} \tr[(\hat \Sigma_{\mu} + n\bar \lambda_a I)^{-1}] + \frac{\tilde \sigma^2}{6} \\
                                                  & = \frac{\bar \lambda_a}{6} \tr[(\hat \Sigma_{\mu} + n \bar \lambda_a)^{-1} \tilde Y] + \frac{\tilde \sigma^2}{6},
\end{align*}
where $\tilde Y = \frac{1}{T}\sum_{i=1}^T Y_i Y_i^\top - \tilde \sigma^2 I / 3$.
From the assumption on $\tilde \sigma^2$, $\tilde Y$ is positive semi-definite and thus, $\bar U(\mu)$ is convex.

Next, we derive an LSI constant for $\tilde p_{\mu} \propto \exp(-\frac{1}{\lambda}\fdv{\bar F(\mu)}{\mu})$.
The first variation of $\bar F(\mu)$ is given by
\begin{align*}
    \fdv{\bar F(\mu)}{\mu}\/(w) & = -\frac{\bar \lambda_a}{2}\tr[(\hat \Sigma_{\mu} + n\bar \lambda_a)^{-1}h(X;w)h(X;w)^\top (\hat \Sigma_{\mu} + n\bar \lambda_a I)^{-1}\tilde Y] + \frac{\bar \lambda_w}{2} \norm{w}_2^2.
\end{align*}
Since $\norm{\tilde Y}_2 \leq \frac{1}{T} \sum_{i=1}^T \norm{Y_i}_2 \leq n c_l^2$, we have
\begin{align*}
    \abs{\frac{\bar \lambda_a}{2}\tr[(\hat \Sigma_{\mu} + n\bar \lambda_a)^{-1}h(X;w)h(X;w)^\top (\hat \Sigma_{\mu} + n\bar \lambda_a I)^{-1}\tilde Y]} & \leq \frac{\norm{\tilde Y}_2}{2n^2\bar \lambda_a}\norm{h(X;w)}_2^2 \leq \frac{c_l^2}{\bar \lambda_a}.
\end{align*}
Thus, $\bar p_\mu$ satisfies the LSI with the same constant as in Lemma~\ref{lem:lsi}.

Let $V(\mu) := \frac{1}{2T} \sum_{i=1}^T \tilde \varepsilon_i^\top (\hat \Sigma_{\mu} + n\bar \lambda_a I)^{-1}\tilde \varepsilon_i$.
Then, $V(\mu)$ is equal to $U(\mu)$ if $Y_i = \varepsilon_i$ for any $i \in [T]$.
Thus, by the same argument as in the proof of Theorem~\ref{thm:convergence}, we have
\begin{itemize}
    \item $\norm{\grad \fdv{V}{\mu}\/(\mu)(w) - \grad \fdv{V}{\mu}\/(\mu')(w')} \leq L_V (W_2(\mu, \mu') + \norm{w - w'}), \abs{\fdv[2]{U(\mu)}{\mu}\/(w, w')} \leq L_V$ for $L_U = 4R_a^2 + \bar \lambda_a(B_a'L_a' + R_a'^2) + B_a'^2$.
    \item $\norm{\grad \fdv{V}{\mu}\/(\mu)(w)} \leq R_V$ for $R_V = \bar \lambda_a B_a' R_a'$.
\end{itemize}
where we define $R_a', B_a', L_a'$ by replacing $c_l$ in $R_a, B_a, L_a$ with $\tilde \sigma$, respectively.
Furthermore, it holds that
\begin{itemize}
    \item $\norm{\grad_w \fdv{V}{\mu}\/(\mu)(w)}_2 \leq R_V$.
    \item $\norm{\grad_w \grad_w^\top \fdv{V}{\mu}\/(\mu)(w)}_{\ope} \leq L_V$, $\norm{\grad_w \grad_{w'}^\top \grad_w^\top \fdv[2]{V}{\mu}\/(\mu)(w, w')}_{\ope} \leq L_V$.
\end{itemize}
Let $L_{UV} = L_{U} + L_V, R_{UV} = R_U + R_V$.
Then, it holds That
\begin{itemize}
    \item $\norm{\grad \fdv{U_{\tilde \varepsilon}}{\mu}\/(\mu)(w) - \grad \fdv{V}{\mu}\/(\mu')(w')} \leq L_{UV} (W_2(\mu, \mu') + \norm{w - w'}), \abs{\fdv[2]{U(\mu)}{\mu}\/(w, w')} \leq L_{UV}$,
    \item $\norm{\grad \fdv{U_{\tilde \varepsilon}}{\mu}\/(\mu)(w)} \leq R_{UV}$,
    \item $\norm{\grad_w \fdv{U_{\tilde \varepsilon}}{\mu}\/(\mu)(w)}_2 \leq R_{UV}$,
    \item $\norm{\grad_w \grad_w^\top \fdv{U_{\tilde \varepsilon}}{\mu}\/(\mu)(w)}_{\ope} \leq L_{UV}$, $\norm{\grad_w \grad_{w'}^\top \fdv[2]{U_{\tilde \varepsilon}}{\mu}\/(\mu)(w, w')}_{\ope} \leq L_{UV}$.
\end{itemize}
since $U_{\tilde \varepsilon}(\mu) = U(\mu) - V(\mu) + \text{const.}$
Combining above arguments, Theorem 3 in~\citet{suzuki_convergence_2023} yields
\begin{align*}
    \frac{1}{N} \E[\mathcal{L}^N(\mu_k^{(N)})] - \mathcal{L}(\mu^*) \leq \exp(-\lambda \alpha \eta k) \qty(\frac{1}{N} \E[\mathcal{L}^N(\mu_k^{(N)})] - \mathcal{L}(\mu^*)) + \frac{2}{\lambda \alpha}\bar L^2C_1(\lambda \eta + \eta^2) + \frac{4}{\lambda \alpha \eta} \bar {\Upsilon} + \frac{2C_\lambda}{\lambda \alpha N},
\end{align*}
where $\bar R^2 = \E\qty[\norm{w_i^{(0)}}_2] + \frac{1}{\bar \lambda_w}\qty[\qty(\frac{1}{4}+\frac{1}{\bar \lambda_w})R_{UV}^2 + \lambda d']$,
$\bar L = L_{UV} + \bar \lambda_w$, $C_1 = 8[R_{UV}^2 + \bar \lambda_w \bar R^2 + d']$,
$C_\lambda = 2\lambda L_{UV} \alpha + 2\lambda^2 L_{UV}^2\bar R^2$,
and
\begin{align*}
    \bar \Upsilon := 4\eta\delta_\eta + [R_{UV} + \lambda_w \bar R + (L_{UV} + \bar \lambda_w)^2](1 + \sqrt{\lambda / \eta})\eta^2 R_{UV}^2 + (R_{UV} + \bar \lambda_w \bar R)R_{UV}(1 + \sqrt{\lambda / \eta})\eta^3 R_{UV}^2.
\end{align*}
This completes the proof.
\end{document}